\definecolor{mygreen}{RGB}{0,150,0}  
\newcommand{\cmark}{\textcolor{mygreen}{\ding{51}}}  
\newcommand{\xmark}{\textcolor{red}{\ding{55}}}      
\newcommand{\norm}[1]{\left\lVert #1 \right\rVert_2}
\theoremstyle{plain}
\newtheorem{theorem}{Theorem}[section]
\newtheorem{proposition}[theorem]{Proposition}
\newtheorem{lemma}[theorem]{Lemma}
\newtheorem{corollary}[theorem]{Corollary}
\theoremstyle{definition}
\newtheorem{definition}[theorem]{Definition}
\theoremstyle{remark}
\title{\emph{NeuralSurv}: Deep Survival Analysis with  Bayesian Uncertainty Quantification}
\author{%
  Mélodie Monod\thanks{Equal contribution.} \\
  Imperial College London\\
  London, United Kingdom \\
  \texttt{melodie.monod18@imperial.ac.uk} \\
  \And
  Alessandro Micheli\footnotemark[1] \\
  Imperial College London \\
  London, United Kingdom \\
  \texttt{a.micheli19@imperial.ac.uk} \\
  \And
  Samir Bhatt \\
  Imperial College London; University of Copenhagen \\
  London, United Kingdom; Copenhagen, Denmark \\
  \texttt{s.bhatt@imperial.ac.uk} \\
}
\begin{document}

\maketitle

\begin{abstract}
We introduce \textit{NeuralSurv}, the first deep survival model to incorporate Bayesian uncertainty quantification. Our non‑parametric, architecture‑agnostic framework captures time‑varying covariate–risk relationships in continuous time via a novel two‑stage data‑augmentation scheme, for which we establish theoretical guarantees. For efficient posterior inference, we introduce a mean‑field variational algorithm with coordinate‑ascent updates that scale linearly in model size. By locally linearizing the Bayesian neural network, we obtain full conjugacy and derive all coordinate updates in closed form. 
In experiments, \textit{NeuralSurv} delivers superior calibration compared to state-of-the-art deep survival models, while matching or exceeding their discriminative performance across both synthetic benchmarks and real-world datasets.
Our results demonstrate the value of Bayesian principles in data‑scarce regimes by enhancing model calibration and providing robust, well‑calibrated uncertainty estimates for the survival function.

\end{abstract}

\section{Introduction}
Survival analysis is a branch of statistics focused on the study of time-to-event data, usually called event times. This type of data appears in a wide range of applications such as medicine~\cite{survival_analysis_medicine_book}, engineering~\cite{survival_book_engineering}, and social sciences~\cite{survival_analysis_social_sciences}. A key objective of survival analysis is to estimate the hazard function and the survival function that govern the distribution of event times.

Traditional survival models like the Cox proportional hazards model~\cite{Cox1972} and accelerated failure time models~\cite{Carroll2003} have long delivered reliable inference under strong parametric assumptions. However, such assumptions may fail to adequately capture complex and evolving baseline hazards, especially when risk relationships vary over time. To overcome these limitations, recent work has begun incorporating modern machine‑learning techniques~\cite{Wang2019}, and in particular deep architectures~\cite{Wiegrebe2024, Katzman2018, Lee2018}, which can learn rich, hierarchical representations directly from data. Yet most deep‑survival approaches remain purely frequentist, optimizing point‑estimate losses and offering no coherent uncertainty quantification. In high‑stakes settings like medicine, this lack of reliable uncertainty estimates can undermine trust and impede adoption.

Bayesian statistics, by contrast, inherently quantifies uncertainty: prior beliefs are combined with observed data to yield a posterior distribution over model parameters~\cite{bda_book}. In survival analysis, Bayesian methods can produce full posterior distributions for individual survival functions summarizable via credible intervals that communicate model confidence~\cite{bayesian_survival_analysis}. Traditional Bayesian survival tools, such as Gaussian processes (GPs)~\cite{teh_survival,Kim2018}, offer nonparametric flexibility and built‑in uncertainty but often falter in high‑dimensional settings due to scalability issues.
To date, no method has combined the representational power of deep learning with full Bayesian uncertainty quantification in a scalable survival framework. Such a synthesis would hold the potential to learn complex, high‑dimensional survival dynamics while retaining principled probabilistic interpretations.

In this work, we introduce \textit{NeuralSurv}, an architecture‑agnostic, Bayesian deep‑learning framework for survival analysis which integrates with modern deep learning architectures. \textit{NeuralSurv} leverages deep Neural Networks (NNs) to learn hierarchical representations from covariates and uses a principled variational inference framework to provide rigorous uncertainty quantification over the survival function. 
We develop a two-stage data-augmentation strategy using latent marked Poisson processes and Pólya–Gamma variables to enable exact continuous-time likelihood computation, and provide novel theoretical guarantees for this approach.
By locally linearizing the Bayesian Neural Network (BNN), we achieve conjugacy and derive closed‑form coordinate‑ascent updates that scale linearly with network size.

Through extensive experiments on synthetic and real survival datasets, in data‐scarce settings, \textit{NeuralSurv} consistently delivers superior calibration compared to state‑of‑the‑art deep survival models, and matches or exceeds their discriminative performance. Its Bayesian formulation captures epistemic uncertainty to prevent overfitting, while informative priors induce a soft regularization that yields smooth, plausible survival functions. The code to reproduce our experiments is available on the GitHub repository \url{https://github.com/MLGlobalHealth/neuralsurv} under the MIT License.


\section{NeuralSurv}
\label{sec-survival-model}
In this section, we outline the main assumptions underlying \emph{NeuralSurv}. We begin by briefly reviewing key concepts in survival analysis. Survival analysis focuses on modeling time-to-event data. Let $T$ be a continuous nonnegative random variable with probability density $f$ and cumulative distribution function $F$, representing the time until a particular event occurs. Its \emph{survival function} $ S(t) = \mathbb{P}(T>t) = 1-F(t)$ gives the probability of not experiencing the event by time $t$, while the \emph{hazard function} $\lambda(t)= f(t)/S(t)$ represents the instantaneous risk of the event at time $t$, conditional on having survived up to time $t$. In practice, the event time may not be observed for all individuals, because some observations are subject to \emph{right-censoring}, where the event has not \emph{yet} occurred by the end of the observation period. 
For each observation $i=1,\ldots,N$, denote the event time by $T_{i}$ and the censoring time by $C_{i}$. We observe $y_{i}=\min(T_{i},C_{i})$ and $\delta_{i} = \mathbbm{1}_{\{T_{i} \leq C_{i}\}}$ where $y_i$ represents the observed time (which may correspond either to the event or to censoring), and $\delta_i$ indicates whether the event time was observed ($\delta_i = 1$) or the observation period was censored ($\delta_i = 0$). We assemble the dataset as $\mathcal{D} = \{(y_i, \delta_i) : i = 1, \ldots,N\}$. 
Each observation also carries a covariate vector $\mathbf{x}_i\in\mathbb{R}^{p}$, gathered into $\mathbf{X} = \{\mathbf{x}_i: i = 1, \ldots,N\}$. Throughout this paper, we assume that the censoring time $C_{i}$ is independent of the event time $T_{i}$ given $\mathbf{x}_{i}$ (known as non-informative censoring). Further details on survival analysis theory are provided in Appendix~\ref{app-review-survival}.

\subsection{Sigmoidal Hazard Function}

Our goal is to model the hazard function $\lambda$, i.e. the instantaneous event rate at time $t$ conditional on survival to $t$ and covariates $\mathbf{x}$. We employ the sigmoid function $\sigma(z) = 1 / (1 + \exp(-z))$, which maps real-valued inputs to the interval $(0,1)$. The sigmoidal hazard model is constructed as the product of a normalized baseline hazard function ($\lambda_0$) and a modulation function ($\sigma$):
\begin{equation} \label{eq:sigmoidal_hazard_function}
    \lambda(t\mid\mathbf{x}; \phi, g(\cdot;\boldsymbol{\theta})) := \lambda_0(t, \mathbf{x};\phi)\  \sigma(g(t, \mathbf{x};\boldsymbol{\theta})),
\end{equation}
where the \emph{normalized baseline hazard} is given by
\begin{equation}\label{eq:baseline_hazard}
    \lambda_0(t, \mathbf{x};\phi) := \frac{\lambda_0(t;\phi)}{Z(t, \mathbf{x})}, 
\end{equation}
for the baseline hazard $\lambda_0: \mathbb{R}_+\to\mathbb{R}_+$,  parametrized by $\phi\in\mathbb{R}_{+}$, and a normalization factor $Z(t,\mathbf{x})$ that depends on both time and covariates. The term $\lambda_0(t, \mathbf{x};\phi)$ encodes our prior ``best‐guess'' hazard profile over time. The flexible function $g:\mathbb{R}_+\times\mathbb{R}^p\to \mathbb{R}$, parametrized by $\boldsymbol{\theta}\in\mathbb{R}^{m}$, provides a data‐driven adjustment: once passed through the sigmoid, it multiplicatively {attenuates} the baseline hazard, continuously scaling it between zero and $\lambda_0$. The normalization factor $Z(t,\mathbf{x})$ ensures that the overall hazard remains properly scaled after modulation by the sigmoidal function (see Section~\ref{sec-prior-distributions} for details). Modeling hazard and intensity functions using sigmoidal transformations is common in applications such as survival analysis~\cite{teh_survival} and point process models~\cite{pg_poisson_hawkes, donner_opper_gaussian_cox, renewal_process,Adams2009,aglietti_cox}. This approach is popular due to the balance it offers between modeling flexibility and analytical tractability.

\subsection{Likelihood Distribution}
\label{sec-likelihood-distribution}
Given the hazard function in~\eqref{eq:sigmoidal_hazard_function}, the likelihood density for the observation corresponding to the $i^{\text{th}}$ observation is given by:
\begin{multline}
\label{eq-likelihood}
    p(y_i, \delta_i\mid\mathbf{x}_{i}, \phi, g(\cdot;\boldsymbol{\theta})) = \\ \Big(\lambda_0(y_i, \mathbf{x}_{i};\phi) \: \sigma(g(y_i, \mathbf{x}_i;\boldsymbol{\theta}))\Big)^{\delta_i} \exp\left(-\int_0^{y_i} \lambda_0(t, \mathbf{x}_{i};\phi) \: \sigma(g(t, \mathbf{x}_i;\boldsymbol{\theta})) \mathrm{d}t\right).
\end{multline}
Assuming $(y_i, \delta_i)$ are i.i.d.\ conditional on $(\mathbf{x}_i, \phi, g(\cdot;\boldsymbol{\theta}))$, the full‐sample likelihood is simply the product
\begin{equation} \label{eq:likelihood}
    p(\mathcal{D}\mid\mathbf{X}, \phi, g(\cdot;\boldsymbol{\theta})) = \prod_{i = 1}^N p(y_i, \delta_i\mid\mathbf{x}_{i}, \phi, g(\cdot;\boldsymbol{\theta})).
\end{equation}

\subsection{Prior Distributions}
\label{sec-prior-distributions}
\paragraph{Prior Distribution on $\boldsymbol{\theta}$.}
We assume that $g(\cdot; {\boldsymbol{\theta}})$ is a BNN parameterized by $\boldsymbol{\theta}$. Furthermore, denote by $\mathbf{I}_{m}$ the $m\times m$ identity matrix.
We place the following isotropic Gaussian prior with zero mean and identity covariance over the NN weights
\begin{equation} \label{eq:prior_theta}
p_{\boldsymbol{\theta}}(\boldsymbol{\theta}) = \mathcal{N}(\boldsymbol{\theta} ; \boldsymbol{0}, \mathbf{I}_m).
\end{equation}
This common choice~\cite{Blundell2015} assumes weights are independently distributed and centered around zero, acting as an uninformative yet regularizing prior that discourages large weights and helps prevent overfitting via shrinkage.

\paragraph{Prior Distribution on $\phi$.}

We adopt a Weibull‑type baseline hazard
\begin{equation}\label{eq:prior_phi}
    \lambda_0(t;\phi)=\phi t^{\rho-1}, \quad p_{\phi}(\phi) = \text{Gamma}(\alpha_{0}, \beta_{0}), \quad \rho>0 \text{ fixed},
\end{equation}
where $\alpha_{0}$ is the shape and $\beta_{0}$ is the rate of the Gamma distribution. The Weibull‑type baseline hazard~\eqref{eq:prior_phi} is the hazard of a Weibull distribution, a common choice in survival analysis~\cite{teh_survival}.  
When $\rho = 1$, $\lambda_0(t;\phi)$ becomes constant and the baseline hazard reduces to the hazard of the Exponential distribution.

\paragraph{Normalization Factor $Z$.} 

We define the normalization factor introduced in~\eqref{eq:baseline_hazard} as
\begin{equation*}
\label{eq-normalizing-function}
Z(t, \mathbf{x}) := \mathbb{E}_{\boldsymbol{\theta} \sim p_{\boldsymbol{\theta}}} \left[\sigma(g(t, \mathbf{x}; \boldsymbol{\theta}))\right]
\end{equation*}
and refer the reader to Appendix~\ref{app-obtaining-Z} for further details on how it is computed. Introducing this normalization factor ensures that the prior mean of the sigmoidal hazard in~\eqref{eq:sigmoidal_hazard_function} coincides with the prior mean of the baseline hazard, i.e. 
\begin{equation*}
\mathbb{E}_{\phi \sim p_{\phi}, \boldsymbol{\theta} \sim p_{\boldsymbol{\theta}}} \left[\lambda(t\mid\mathbf{x}; \phi, g(\cdot;\boldsymbol{\theta}))\right] = \mathbb{E}_{\phi \sim p_{\phi}} \left[\lambda_0(t;\phi)\right].
\end{equation*}
This approach, similar to the technique used in~\cite{teh_survival}, centers the distribution around the baseline hazard $\lambda_0(t;\phi)$, favouring hazard trajectories that remain close to this prior ``best‐guess'' profile while still permitting data‐driven deviations. 
Notice that if $g(\cdot;\boldsymbol{\theta})$ has a fully connected architecture, then $Z(t, \mathbf{x})\equiv \frac{1}{2}$ for all $(t,\mathbf{x})$, resulting in the same normalization factor value as in~\cite{teh_survival}.

\subsection{Posterior Distribution}
Let $p(\phi, \boldsymbol{\theta}\mid \mathcal{D}, \mathbf{X})$ denote the posterior density over the parameters $\phi$ and $\boldsymbol{\theta}$, defined with respect to the product measure $
\mathrm{d}\phi\times \mathrm{d}\boldsymbol{\theta}$. By Bayes’ rule, this posterior is proportional (up to normalization) to
\begin{equation}\label{eq:posterior_distribution}
    p\left(\phi, \boldsymbol{\theta}\mid \mathcal{D},\mathbf{X}\right) \propto{ p(\mathcal{D}\mid\mathbf{X}, \phi, g(\cdot;\boldsymbol{\theta})) \: p_{\phi}(\phi) \: p_{\boldsymbol{\theta}}(\boldsymbol{\theta})}.
\end{equation}
The posterior in \eqref{eq:posterior_distribution} is generally intractable to compute for three reasons. First, its normalization constant is unavailable in closed form. Second, the likelihood from \eqref{eq-likelihood} requires evaluating $N$ integrals, none of which admits an analytic solution. Finally, the sigmoid in \eqref{eq:sigmoidal_hazard_function} introduces an extra nonlinearity, rendering inference even more analytically challenging.





\section{Data Augmentation Strategy}
In this section, we present a data augmentation scheme that leverages the properties of Poisson processes and Pólya-Gamma random variables. Specifically, Poisson processes help overcome the challenges associated with computing the integrals of the continuous-time function to evaluate the likelihood, while the Pólya-Gamma random variables allow for exact handling of the sigmoid nonlinearity without relying on analytic approximations. This combined approach allows us to efficiently perform posterior inference from the model without resorting to discretization. 

This approach builds on analogous strategies previously applied in other settings, including Bayesian inference for Sigmoid Gaussian Cox Processes~\cite{donner_opper_gaussian_cox}, nonparametric Hawkes processes~\cite{pg_poisson_hawkes}, and, in the case of Pólya–Gamma augmentation alone, mutually regressive point processes~\cite{mutually_regressive_pp}. To the best of our knowledge, this is the {first} application of such a data augmentation strategy in the context of survival analysis. Furthermore, we are the {first} to provide a rigorous theoretical framework that establishes the validity of a method belonging to this broader class of augmentation-based approaches (see Theorem~\ref{proposition-augmented-likelihood}).

Detailed reviews of Pólya-Gamma random variables and Poisson processes are provided in Appendices~\ref{app-review-polya-gamma} and~\ref{app-review-poisson}, respectively.


\subsection{Pólya-Gamma Augmentation Scheme} 
\label{sec-polya-gamma-augmentation}
A primary challenge in our model arises from the sigmoid function, whose inherent nonlinearity complicates the posterior inference. To overcome this, we adopt the Pólya-Gamma data augmentation scheme introduced in~\cite{Polson01122013}. The key insight of this approach is that the sigmoid function can be represented in terms of Pólya-Gamma random variables. Define the function
\begin{equation}
\label{eq-f-definition}
f(\omega, z) := \frac{z}{2} - \frac{z^2}{2}\omega - \log(2).
\end{equation}
Then, the following identity holds: 
\begin{equation}
\label{eq-sigmoid-identity-main}
\sigma(z) =  \int^{\infty}_{0}e^{f(\omega,z)}\,p_{\text{PG}}(\omega\mid 1,0) \mathrm{d}\omega,
\end{equation}
where $p_{\text{PG}}(\omega\mid 1,0)$  denotes the density of a Pólya-Gamma random variable with parameters $(1,0)$. 

Since our model considers $N$ observations, we apply this augmentation scheme to each data point. Accordingly, we introduce $N$ independent Pólya-Gamma random variables, denoted by $\boldsymbol{\omega} = \{ \omega_i \}_{i=1}^N$,  each distributed according to $p_{\omega}(\omega_{i}) = p_{\text{PG}}(\omega_i\mid 1,0)$ and with a joint density
\begin{equation} \label{eq:latent_omega_density}
    p_{\boldsymbol{\omega}}(\boldsymbol{\omega})= \prod_{i = 1}^N p_{\omega}({\omega}_i) = \prod_{i = 1}^N p_{\text{PG}}(\omega_i\mid 1,0).
\end{equation}

\subsection{Poisson Process Augmentation Scheme} \label{sec:poisson_process_augmentation}
Evaluating the likelihood in~\eqref{eq-likelihood} requires computing $N$ integrals involving a sample function drawn from the BNN prior. This integral is generally analytically intractable, due to the nonparametric and highly non-linear nature of BNN sample paths. To address this, we leverage a Poisson process–based data augmentation scheme, drawing inspiration from methodologies proposed in~\cite{donner_opper_gaussian_cox, pg_poisson_hawkes}. By substituting the sigmoid identity from~\eqref{eq-sigmoid-identity-main}, the intractable integral for the $i^{\text{th}}$ data point becomes
\begin{multline}\label{eq:double_integral}
\int_0^{y_i} \lambda_0(t, \mathbf{x}_{i};\phi) \: \sigma(g(t, \mathbf{x}_i;\boldsymbol{\theta})) \mathrm{d}t = \\  \int_{0}^{y_i} \int_{0}^{\infty}  \left(1-  e^{f(\omega,-g(t,\mathbf{x}_{i};\boldsymbol{\theta}))}\right)
\lambda_0(t, \mathbf{x}_{i};\phi) p_{\text{PG}}(\omega\mid 1,0)\mathrm{d}\omega \mathrm{d}t,
\end{multline}
where $p_{\text{PG}}(\omega\vert 1,0)$ is the density of a Pólya-Gamma random variable. The key insight here is that this double integral can be expressed as an expectation over a marked Poisson process.

Before proceeding further, we briefly review the concept of a marked Poisson process. A marked Poisson process extends the standard Poisson process by associating each event (or location) with an additional random variable known as a mark. In our case, each event occurs at time $t$ and is accompanied by a positive mark $\omega$.
With this in mind, consider the space $[0,y_{i}]\times\mathbb{R}_{+}$ which consists of points $(t,\omega)$ where $t\in [0,y_{i}]$ and $\omega\in\mathbb{R}_{+}$. We then denote by $\Psi_{i}$ a marked Poisson process on  $[0,y_{i}]\times\mathbb{R}_{+}$ with intensity 
\begin{equation}\label{eq:intensity_function_marked_pp}
\lambda_{i}(t,\omega;\phi) := \lambda_0(t, \mathbf{x}_i;\phi) \: p_{\text{PG}}(\omega\mid 1,0), \quad (t,\omega) \in [0,y_{i}]\times\mathbb{R}_{+}.
\end{equation}  
Under suitable assumptions on the BNN $g(\cdot ;\boldsymbol{\theta})$, Campbell’s theorem allows us to express the integral in~\eqref{eq:double_integral} as
\begin{multline}
\label{eq-campbell-theorem-explanation-poisson}
\exp\left(-\int_{0}^{y_{i}}\int_{0}^{\infty} \left(1-e^{f(\omega,-g(t,\mathbf{x}_{i};\boldsymbol{\theta}))}\right) \lambda_{i}(t,\omega;\phi) \mathrm{d}\omega\mathrm{d}t\right)= \\  \mathbb{E}_{\Psi_{i}\sim \mathbb{P}_{\Psi_i|\phi}}\left[\prod_{\mathbf(t,\omega)_{j} \in \Psi_{i}} e^{{f(\omega_{j},-g(t_{j},\mathbf{x}_{i};\boldsymbol{\theta}))}}\right] ,
\end{multline}
where $\mathbb{P}_{\Psi_i|\phi}$ is the path measure of the process $\Psi_i$. In~\eqref{eq-campbell-theorem-explanation-poisson}, we take the convention that an empty product equals 1. Equation~\eqref{eq-campbell-theorem-explanation-poisson} corresponds to the term with the intractable integral on the right-hand side of~\eqref{eq-likelihood}. This representation enables us to avoid time discretization, allowing an exact and efficient evaluation of the integral. Since our model involves $N$ observations, we apply this augmentation scheme to each data point by introducing $N$ independent marked Poisson processes, denoted by $\boldsymbol{\Psi} = \{ \Psi_{i} \}_{i=1}^N$.

\subsection{Augmented Likelihood}
Leveraging both the Pólya–Gamma and the marked Poisson process augmentation schemes, we can reformulate the likelihood given in~\eqref{eq-likelihood} in a tractable way. With these auxiliary variables, we define the \emph{augmented likelihood} density for the $i^{\text{th}}$ observation as
\begin{multline}
\label{eq-augmented-likelihood}
p\left( y_i, \delta_i \mid \mathbf{x}_i, \phi, g(\cdot;\boldsymbol{\theta}), \omega_i, \Psi_{i} \right) := \\  
\left( \lambda_0(y_i, \mathbf{x}_i;\phi) e^{f\left( \omega_i, g(y_i, \mathbf{x}_i;\boldsymbol{\theta}) \right)} \right)^{\delta_i}
\left( \prod_{(t,\omega)_{j}\in\Psi_{i}} e^{f\left( \omega_{j}, -g(t_{j}, \mathbf{x}_i;\boldsymbol{\theta}) \right)} \right),
\end{multline}
where $f(\omega,z)$ was defined in~\eqref{eq-f-definition}.
The following proposition formalizes the data augmentation scheme.

\begin{theorem}[Data Augmentation]
\label{proposition-augmented-likelihood}
Assume for each $i=1,\ldots,N$ that the function $g(\cdot,\mathbf{x}_{i};\cdot)\in C([0,y_{i}]\times\mathbb{R}^{m})$. Let $p(y_i, \delta_i \mid \mathbf{x}_i, \phi, g(\cdot;\boldsymbol{\theta}))$ be the likelihood density given in \eqref{eq-likelihood}. Additionally, let $p\left( y_i, \delta_i\mid \mathbf{x}_i, \phi, g(\cdot;\boldsymbol{\theta}),\omega_i ,\Psi_{i}\right)$ be the augmented likelihood density defined in \eqref{eq-augmented-likelihood}. Then, 
\begin{equation*}
p(y_i, \delta_i \mid \mathbf{x}_i, \phi, g(\cdot;\boldsymbol{\theta})) = \mathbb{E}_{\omega_{i}\sim p_{\omega}, \Psi_{i}\sim\mathbb{P}_{\Psi_i|\phi}} \left[p\left( y_i, \delta_i\mid \mathbf{x}_i, \phi, g(\cdot;\boldsymbol{\theta}),\omega_i ,\Psi_{i}\right)\right].
\end{equation*}
\end{theorem}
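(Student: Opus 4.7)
The plan is to evaluate the right-hand side directly, exploiting independence of $\omega_{i}$ and $\Psi_{i}$ under the product sampling measure so that the expectation of~\eqref{eq-augmented-likelihood} splits into an $\omega_{i}$-expectation times a $\Psi_{i}$-expectation. For the censoring prefactor $(\lambda_0(y_i,\mathbf{x}_i;\phi)\, e^{f(\omega_i, g(y_i,\mathbf{x}_i;\boldsymbol{\theta}))})^{\delta_i}$, I would invoke the Pólya--Gamma identity~\eqref{eq-sigmoid-identity-main} with $z = g(y_i,\mathbf{x}_i;\boldsymbol{\theta})$, which yields $\mathbb{E}_{\omega_i\sim p_\omega}\bigl[e^{f(\omega_i, g(y_i,\mathbf{x}_i;\boldsymbol{\theta}))}\bigr] = \sigma(g(y_i,\mathbf{x}_i;\boldsymbol{\theta}))$; because $\delta_i\in\{0,1\}$, the expectation passes through the exponent and recovers the indicator factor $(\lambda_0(y_i,\mathbf{x}_i;\phi)\,\sigma(g(y_i,\mathbf{x}_i;\boldsymbol{\theta})))^{\delta_i}$ of~\eqref{eq-likelihood}.

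For the $\Psi_i$-expectation of $\prod_{(t,\omega)_j\in\Psi_i} e^{f(\omega_j,-g(t_j,\mathbf{x}_i;\boldsymbol{\theta}))}$, I would apply Campbell's exponential formula for marked Poisson processes, which is exactly the identity displayed in~\eqref{eq-campbell-theorem-explanation-poisson}, rewriting the expectation as $\exp\bigl(-\int_0^{y_i}\int_0^\infty(1-e^{f(\omega,-g(t,\mathbf{x}_i;\boldsymbol{\theta}))})\,\lambda_i(t,\omega;\phi)\,\mathrm{d}\omega\,\mathrm{d}t\bigr)$. Substituting the intensity~\eqref{eq:intensity_function_marked_pp} and re-applying~\eqref{eq-sigmoid-identity-main} with $z=-g(t,\mathbf{x}_i;\boldsymbol{\theta})$ collapses the inner $\omega$-integral to $1-\sigma(-g(t,\mathbf{x}_i;\boldsymbol{\theta}))=\sigma(g(t,\mathbf{x}_i;\boldsymbol{\theta}))$, yielding the survival exponential $\exp\bigl(-\int_0^{y_i}\lambda_0(t,\mathbf{x}_i;\phi)\,\sigma(g(t,\mathbf{x}_i;\boldsymbol{\theta}))\,\mathrm{d}t\bigr)$ of~\eqref{eq-likelihood}. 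Multiplying the two matched factors then reproduces the likelihood and completes the identification.

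The main hurdle is justifying Campbell's formula under the stated regularity: its hypothesis requires measurability and absolute integrability of $(t,\omega)\mapsto 1-e^{f(\omega,-g(t,\mathbf{x}_i;\boldsymbol{\theta}))}$ against $\lambda_i(t,\omega;\phi)\,\mathrm{d}\omega\,\mathrm{d}t$. Joint continuity $g(\cdot,\mathbf{x}_i;\cdot)\in C([0,y_i]\times\mathbb{R}^m)$ delivers joint measurability and, for each fixed $\boldsymbol{\theta}$, uniform boundedness of $g$ on the compact interval $[0,y_i]$; the $\omega$-marginal integral of $e^{f(\omega,-g)}$ against $p_{\text{PG}}(\omega\mid 1,0)$ then equals $\sigma(-g)\le 1$ by~\eqref{eq-sigmoid-identity-main}, providing the integrability required to invoke the exponential formula. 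With this verification in place, the rest of the argument is direct algebraic bookkeeping.
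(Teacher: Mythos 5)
Your decomposition into an $\omega_i$-expectation (handled by the Pólya--Gamma identity) and a $\Psi_i$-expectation (handled by Campbell's exponential formula) is exactly the paper's strategy, and the algebraic bookkeeping you describe is correct. The gap is in the justification of Campbell's theorem, and it runs deeper than the caveat you flagged.

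First, your sufficiency argument bounds the inner $\omega$-integral of $1-e^{f(\omega,-g)}$ by a constant and then implicitly reduces the remaining integrability requirement to $\int_0^{y_i}\lambda_0(t,\mathbf{x}_i;\phi)\,\mathrm{d}t<\infty$. But that integral is not obviously finite: recall $\lambda_0(t,\mathbf{x}_i;\phi)=\lambda_0(t;\phi)/Z(t,\mathbf{x}_i)$, where $Z(t,\mathbf{x}_i)=\mathbb{E}_{\boldsymbol{\theta}}[\sigma(g(t,\mathbf{x}_i;\boldsymbol{\theta}))]$ appears in the denominator. If $Z(t,\mathbf{x}_i)$ were allowed to approach zero somewhere on $[0,y_i]$, the integrand could blow up. The paper's Lemma~\ref{lemma-lambda0-integrable} is devoted precisely to closing this hole: it uses dominated convergence to show $t\mapsto Z(t,\mathbf{x}_i)$ is continuous, observes it is strictly positive pointwise, and then invokes Weierstrass on the compact interval to obtain a strictly positive minimum $z^*$, yielding $\lambda_0(t,\mathbf{x}_i;\phi)\le\lambda_0(t;\phi)/z^*$ and hence integrability. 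Your proposal never addresses $Z$ at all, so the integrability you claim is asserted rather than established.

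Second, you state Campbell's hypothesis as ``absolute integrability of $1-e^{f}$ against $\lambda_i$,'' but that is not the hypothesis in the version the paper uses (Theorem~\ref{thm:campbell_theorem}). Kingman's statement requires $\int\min(|f|,1)\,\lambda_i<\infty$ to guarantee that the random sum $H(\Psi_i)=\sum_j f(\omega_j,-g(t_j,\mathbf{x}_i;\boldsymbol{\theta}))$ is a.s.\ absolutely convergent, and then a \emph{separate} convergence condition on $\int(e^{sf}-1)\lambda_i$ for the exponential identity to hold. The paper verifies the first condition in Lemma~\ref{lemma-sum-convergent-sum-campbell} by bounding $\min(|f|,1)\le|g|/2+g^2\omega/2+\log 2$ and integrating term by term (using $\mathbb{E}_{\omega\sim p_{\mathrm{PG}}(\cdot\mid 1,0)}[\omega]=1/4$ and again the finiteness from Lemma~\ref{lemma-lambda0-integrable}), and the second in Lemma~\ref{lemma-integral-convergent-campbell}. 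You conflated the two into one, and the one you checked relies on the unproven integrability noted above. So the proposal is right in spirit and in its two-factor structure, but it skips the part of the argument the paper's appendix actually spends effort on.
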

The proof of Theorem~\ref{proposition-augmented-likelihood} is postponed to Appendix~\ref{app-augmented-likelihood}.
Existing augmentation schemes approaches~\cite{donner_opper_gaussian_cox, pg_poisson_hawkes, mutually_regressive_pp} do not offer any theoretical guarantees regarding the validity of the methodology. In contrast, Theorem~\ref{proposition-augmented-likelihood} provides the first rigorous theoretical framework that establishes the soundness of a method within this class of data augmentation techniques.

Using the assumption from Section~\ref{sec-likelihood-distribution} that \((y_i, \delta_i)\) are i.i.d.\ conditional on \((\mathbf{x}_i, \phi, g(\cdot;\boldsymbol{\theta}))\), and given the structure of the data augmentation, we observe that \((y_i, \delta_i)\) are conditionally independent of \(\omega_j\) and \(\Psi_j\) for all \(j \neq i\). As a result, the full-sample augmented likelihood factorizes as a simple product:
\begin{equation} \label{eq:full_augmented_likelihood}
    p(\mathcal{D}\mid\mathbf{X}, \phi, g(\cdot;\boldsymbol{\theta}), \boldsymbol{\omega}, \boldsymbol{\Psi}) = \prod_{i = 1}^N p(y_i, \delta_i\mid\mathbf{x}_{i}, \phi, g(\cdot;\boldsymbol{\theta}), \omega_i, \Psi_i).
\end{equation}

\label{sec-augmented-likelihood}

\section{Variational Inference in the Augmented Space}
In this section, we develop a novel variational inference algorithm based on this augmentation scheme. 

\subsection{Variational Mean–Field Approximation} 
\label{sec-variational-mf-approx}
Computing the posterior distribution $\mathbb{P}\left(\phi, \boldsymbol{\theta},\boldsymbol{\omega}, \boldsymbol{\Psi}\mid \mathcal{D}, \mathbf{X}\right)$ is analytically intractable because its normalization constant is unavailable in closed form. We consider a variational inference algorithm that aims to find an approximating variational distribution $\mathbb{Q}(\phi, \boldsymbol{\theta},\boldsymbol{\omega}, \boldsymbol{\Psi})$ that minimizes the KL divergence from the true posterior distribution.

To make the optimization tractable, we restrict our search to distributions that satisfy the following {mean-field} factorization:
\begin{equation*}
\mathbb{Q}(\phi, \boldsymbol{\theta},\boldsymbol{\omega}, \boldsymbol{\Psi}) = \mathbb{Q}_{\phi}(\phi) \times \mathbb{Q}_{\boldsymbol{\theta}}(\boldsymbol{\theta}) \times\mathbb{Q}_{\boldsymbol{\omega}}(\boldsymbol{\omega}) \times \mathbb{Q}_{\boldsymbol{\Psi}}(\boldsymbol{\Psi}) .
\end{equation*}
Here, we take $\mathbb{Q}_{\phi}(\phi)$, $\mathbb{Q}_{\boldsymbol{\theta}}(\boldsymbol{\theta})$ and $\mathbb{Q}_{\boldsymbol{\omega}}(\boldsymbol{\omega})$  to admit densities $q_{\phi}(\phi)$, $q_{\boldsymbol{\theta}}(\boldsymbol{\theta})$ and $q_{\boldsymbol{\omega}}(\boldsymbol{\omega})$ with respect to the Lebesgue measures $\mathrm{d}\phi$, $\mathrm{d}\boldsymbol{\theta}$ and $\mathrm{d}\boldsymbol{\omega}$. The remaining factor $\mathbb{Q}_{\boldsymbol{\Psi}}(\boldsymbol{\Psi})$ is a measure on the space of marked point-process paths, which does not admit a density with respect to the Lebesgue measures (see, e.g., a similar discussion for GPs in~\cite{matthews2016sparse}).

To handle this within the variational inference framework,  we must introduce a reference measure $\mathbb{P}_{\boldsymbol{\Psi},*}$, which plays the role of a “Lebesgue-like” base measure on path space (see Definition~\ref{def-reference-poisson-process-measure} for details). We then assume our variational law  $\mathbb{Q}_{\boldsymbol{\Psi}}$ is absolutely continuous with respect to $\mathbb{P}_{\boldsymbol{\Psi},*}$,  so that it admits a strictly positive Radon–Nikodym derivative $\frac{\mathrm{d}\mathbb{Q}_{\boldsymbol{\Psi}}}{\mathrm{d}\mathbb{P}_{\boldsymbol{\Psi},*}}$ which satisfies the normalization $\mathbb{E}_{\boldsymbol{\Psi}\sim\mathbb{P}_{\boldsymbol{\Psi},*}}[\frac{\mathrm{d}\mathbb{Q}_{\boldsymbol{\Psi}}}{\mathrm{d}\mathbb{P}_{\boldsymbol{\Psi},*}}(\boldsymbol{\Psi})]=1$. These conditions ensure that $\mathbb{Q}_{\boldsymbol{\Psi}}$ is a valid probability measure on the space of marked point-process paths (see Appendix \ref{app-vi-poisson-process} for further technical details).

This formulation enables us to express the KL divergence between the variational distribution and the true posterior in terms of the ELBO:
\begin{equation}\label{eq:kl_divergence}
    D_{\text{KL}}(\mathbb{Q}(\phi, \boldsymbol{\theta},\boldsymbol{\omega}, \boldsymbol{\Psi})\:||\:\mathbb{P}\left(\phi, \boldsymbol{\theta},\boldsymbol{\omega}, \boldsymbol{\Psi}\mid \mathcal{D}, \mathbf{X}\right)) = -\mathcal{L}_{\text{ELBO}}(g) + \text{const},
\end{equation}
where the ELBO is defined as
\begin{multline}\label{eq:elbo}
\mathcal{L}_{\text{ELBO}}(g) := \\ \mathbb{E}_{\phi\sim q_{\phi}, \boldsymbol{\theta}\sim q_{\boldsymbol{\theta}}, \boldsymbol{\omega} \sim q_{\boldsymbol{\omega}}, \boldsymbol{\Psi} \sim \mathbb{Q}_{\boldsymbol{\Psi}}} \left[\log \frac{p\left( \mathcal{D}\mid \phi, g(\cdot;\boldsymbol{\theta}), \boldsymbol{\omega}, \boldsymbol{\Psi} \right)p_{\phi}(\phi)p_{\boldsymbol{\theta}}(\boldsymbol{\theta}) p_{\boldsymbol{\omega}}(\boldsymbol{\omega})\frac{\mathrm{d}\mathbb{P}_{\boldsymbol{\Psi}\vert \phi}}{\mathrm{d}\mathbb{P}_{\boldsymbol{\Psi},*}}(\boldsymbol{\Psi})}{q_{\phi}(\phi)q_{\boldsymbol{\theta}}(\boldsymbol{\theta}) q_{\boldsymbol{\omega}}(\boldsymbol{\omega})\frac{\mathrm{d}\mathbb{Q}_{\boldsymbol{\Psi}}}{\mathrm{d}\mathbb{P}_{\boldsymbol{\Psi},*}}(\boldsymbol{\Psi})} \right]
\end{multline}
and where $\frac{\mathrm{d}\mathbb{P}_{\boldsymbol{\Psi}\vert \phi}}{\mathrm{d}\mathbb{P}_{\boldsymbol{\Psi},*}}$ is the Radon-Nykodim derivative of the true conditional law $\mathbb{P}_{\boldsymbol{\Psi}\vert\phi}$ with respect to $\mathbb{P}_{\boldsymbol{\Psi},*}$.  
From~\eqref{eq:kl_divergence}, it follows that minimizing the KL divergence is equivalent to maximizing the ELBO.  

\subsection{Local Linearization of the Bayesian Neural Network} 
\label{sec-local-linearization-bnn}
A crucial insight is that the data augmentation strategy transforms the intractable likelihood density in~\eqref{eq-likelihood} into a form that is {conditionally Gaussian}, as shown below:
\begin{multline*}
p\left( y_i, \delta_i \mid \mathbf{x}_i, \phi, g(\cdot;\boldsymbol{\theta}), \omega_i, \Psi_{i} \right)  \propto  \\ \exp\left(\delta_{i}\frac{g(y_{i},\mathbf{x}_{i};\boldsymbol{\theta})}{2} -\delta_{i}\frac{g(y_{i},\mathbf{x}_{i};\boldsymbol{\theta})^{2}}{2}\omega_{i} \right)\exp\left(\sum_{(t,\omega)_{j}\in\Psi_{i}}\frac{g(t_{j},\mathbf{x}_{i};\boldsymbol{\theta})}{2} -\frac{g(t_{j},\mathbf{x}_{i};\boldsymbol{\theta})^{2}}{2}\omega_{j} \right) .
\end{multline*}
This transformation is particularly advantageous when placing a GP prior on 
$g(\cdot;\boldsymbol{\theta})$, as it induces conjugacy in the model. Conjugacy is crucial for variational inference because it enables efficient computation of the ELBO~\eqref{eq:elbo}, which involves taking expectations over the distribution of $\boldsymbol{\theta}$. However, when $ g(\cdot;\boldsymbol{\theta})$ is a BNN, these expectations generally lack closed-form solutions, making exact Bayesian updates intractable. As a result, we seek to approximate $ g(\cdot;\boldsymbol{\theta})$ in a way that retains the expressive power of NNs while preserving Gaussian conjugacy to enable tractable inference.

We adopt the \emph{local linearization} approximation introduced in~\cite{Immer2021}. This approach approximates the BNN $g(\cdot;\boldsymbol{\theta})$  using a first-order Taylor expansion around a reference point $\boldsymbol{\theta}^{\star}$:
\begin{equation}
\label{eq-linearization-approx}
\begin{aligned}
g(t,\mathbf{x};\boldsymbol{\theta}) &\approx   g^{\text{lin}}(t,\mathbf{x};\boldsymbol{\theta}) :=g(t,\mathbf{x};\boldsymbol{\theta}^{\star}) + \mathbf{J}_{\boldsymbol{\theta}^{\star}}(t,\mathbf{x})^\top(\boldsymbol{\theta}-\boldsymbol{\theta}^{\star}),
\end{aligned}
\end{equation}
where $[\mathbf{J}_{\boldsymbol{\theta}}(t,\mathbf{x})]_{j}= \frac{\partial g(t,\mathbf{x};\boldsymbol{\theta})}{\partial\theta_{j}}$ is the Jacobian of the BNN with respect to the parameters $\boldsymbol{\theta}$.
Following~\cite{Immer2021}, we select $\boldsymbol{\theta}^{\star} = \boldsymbol{\theta}_{\text{MAP}}$ as the maximum a posteriori (MAP) estimate, which is defined as:
\begin{equation}
\label{eq-map-definition}
(\boldsymbol{\theta}_{\text{MAP}}, \phi_{\text{MAP}}) := \arg\max_{\boldsymbol{\theta}, \phi}  p(\boldsymbol{\theta}, \phi \mid \mathcal{D}, \mathbf{X}),
\end{equation}
where $p(\boldsymbol{\theta}, \phi \mid \mathcal{D}, \mathbf{X})$  is the posterior density defined in~\eqref{eq:posterior_distribution}.
By centering the linearization at $\boldsymbol{\theta}_{\text{MAP}}$, we ensure maximal approximation accuracy precisely where Bayesian inference is most sensitive: in the high-probability region of the posterior that dominates both parameter uncertainty quantification and predictive distributions.
The procedure used to obtain the MAP estimates of~\eqref{eq-map-definition} is detailed in Appendix~\ref{app-obtaining-map}. Under the assumption of a Gaussian prior on the BNN parameters~\eqref{eq:prior_theta}, the local linearization induces the GP prior 
\begin{equation*}
g^{\text{lin}}\sim \mathcal{GP}(\mu,\kappa)
\end{equation*}
with mean function $\mu$ and and covariance function $\kappa$ given by:
\begin{equation*}
\begin{aligned}
\mu(t,\mathbf{x}) &:=g(t,\mathbf{x};\boldsymbol{\theta}_{\text{MAP}}) + \mathbf{J}_{\boldsymbol{\theta}_{\text{MAP}}}(t,\mathbf{x})^\top(\mathbb{E}_{\boldsymbol{\theta}\sim p_{\boldsymbol{\theta}}}[\boldsymbol{\theta}]-\boldsymbol{\theta}_{\text{MAP}}) \\ 
\kappa((t,\mathbf{x}), (t',\mathbf{x}')) &:=  \mathbf{J}_{\boldsymbol{\theta}_{\text{MAP}}}(t,\mathbf{x}) \mathbf{J}_{\boldsymbol{\theta}_{\text{MAP}}}(t',\mathbf{x}')^{\top}.
\end{aligned}
\end{equation*}

Incorporating this approximation into our variational framework allows us to exploit Gaussian conjugacy for fast, closed‐form updates, while still preserving the flexibility of NNs. Concretely, we take a Taylor expansion of the ELBO around $g^{\text{lin}}$ and, by truncating at the lowest order term, obtain the simple approximation
\begin{equation*}
\mathcal{L}_{\text{ELBO}}(g)\approx \mathcal{L}_{\text{ELBO}}(g^{\text{lin}}).
\end{equation*}

Our approach is analogous to the method introduced in \cite[Section 3.2]{variational_inference_nonconjugate}, where the authors apply Delta Method Variational Inference by approximating the ELBO around a fixed point in parameter space. In contrast, we extend this idea by approximating the ELBO around a reference function $g^{\text{lin}}$, rather than a fixed point.

\subsection{Coordinate Ascent Variational Inference} \label{sec:coordinate_ascent_variational_inference}

We adopt a Coordinate Ascent Variational Inference (CAVI) approach, allowing us to draw on standard results from variational inference (see, e.g., \cite[Chapter 10.1]{bishop_pattern_recognition}). In this framework, the optimal variational distributions are derived by maximizing the linearized ELBO, $\mathcal{L}_{\text{ELBO}}(g^{\text{lin}})$, with each distribution depending on the current state of the others. The algorithm proceeds by cyclically updating each variational distribution while keeping the others fixed.  This iterative process progressively refines the optimal variational distributions, ultimately leading to the best possible approximation of the posterior distribution.  A complete derivation of each optimal variational distribution is provided in Appendix~\ref{app-optimal-mf-dist} while the complete CAVI algorithm is presented in Appendix~\ref{app-cavi-algo}.

At the $k^{\text{th}}$ iteration the optimal variational distributions for the parameters $\phi$ and $\boldsymbol{\theta}$ are given by 
\begin{equation*}\label{eq:optimal_variation_posterior_phi_theta}
 q_{\phi}^{(k)}(\phi)=\text{Gamma}\left(\tilde{\alpha}^{(k)},\tilde{\beta}\right),  \quad
q^{(k)}_{\boldsymbol{\theta}}(\boldsymbol{\theta}) = \mathcal{N}\left(\tilde{\boldsymbol{\mu}}^{(k)}, \tilde{\boldsymbol{\Sigma}}^{(k)}\right),
\end{equation*} 
where $(\tilde{\alpha}^{(k)}, \tilde{\beta})$ and  $(\tilde{\boldsymbol{\mu}}^{(k)}, \tilde{\boldsymbol{\Sigma}}^{(k)})$ are given in Appendix~\ref{sec:optimal_update_phi} and~\ref{sec:optimal_update_theta}, respectively. At the $k^{\text{th}}$ iteration, the optimal update for the auxiliary parameters $\boldsymbol{\omega}$ is given by 
\begin{equation*} \label{eq:optimal_variation_posterior_omega_Pi}
q_{\boldsymbol{\omega}}^{(k)}(\boldsymbol{\omega}) = \prod_{i=1}^N p_{\text{PG}}(\omega_{i}\mid 1,\tilde{c}_i^{(k)}), \quad
\end{equation*}
where $\tilde{c}_i^{(k)}$ is given in Appendix~\ref{sec:optimal_update_omega}. Finally, at the $k^{\text{th}}$ iteration, the optimal variational law $\mathbb{Q}_{\boldsymbol{\Psi}}^{(k)}$ is the probability measure under which each $\Psi_{i}$ ($i=1,\ldots,N$) is a marked Poisson process on $[0,y_{i}]\times\mathbb{R}_{+}$ with intensity function $\lambda_i^{\mathbb{Q},(k)}$, as given in Appendix~\ref{sec:optimal_update_pi}.

It is important to emphasize that we did not impose a specific form on the variational distributions; for example, we did not assume 
$q_{\boldsymbol{\theta}}(\boldsymbol{\theta})$ to be Gaussian. Instead, we derived our results by minimizing the KL divergence over the full space of distributions. This contrasts with methods that fix a parametric form and use the reparameterization trick with Monte Carlo gradient estimates.

Finally, in Appendix~\ref{app-computational-speed-up}, we demonstrate that, by exploiting the Woodbury matrix identity, our inference updates require only 
$\mathcal{O}(m)$ time complexity ($m$ is the number of weights in the NN architecture). This linear scaling renders our Bayesian framework feasible for contemporary large-scale deep neural architectures, which are well suited to model high-dimensional data.

\label{sec-inference}


\section{Experiments}
Details on the experimental setup, including dataset descriptions, benchmark methods specifications and evaluation metrics definitions are provided in Appendix~\ref{app-experiment}. Moreover, the implementation details for \textit{NeuralSurv} are provided in Appendix~\ref{sec:implementation_details}.

To comprehensively evaluate \textit{NeuralSurv}, we compare its performance against the following set of benchmark models: \textit{MTLR}~\cite{Yu2011}, \textit{DeepHit}~\cite{Lee2018}, \textit{DeepSurv}~\cite{Katzman2018}, \textit{Logistic Hazard}~\cite{Gensheimer2019}, \textit{CoxTime}~\cite{Kvamme2019}, \textit{CoxCC}~\cite{Kvamme2019}, \textit{PMF}~\cite{Kvamme2021}, \textit{PCHazard}~\cite{Kvamme2021}, \textit{BCESurv}~\cite{Kvamme2023}, and \textit{DySurv}~\cite{Mesinovic2024}, \textit{Sumo-Net}~\cite{rindt2022} and \textit{DQS}~\cite{Yanagisaw2023}.
A detailed overview of these models is provided in Appendix~\ref{app-related-work} and summarized in Table~\ref{tab:survival-methods}. 
Except for \textit{DySurv}, which employs an autoencoder framework, we adopt the same NN architecture across all benchmark models and \textit{NeuralSurv} to parameterize the hazard function. For \textit{DySurv}, we use the original autoencoder architecture specified in its implementation.


We assess discriminative performance using the Antolini's concordance index (C-index)~\cite{Antolini2005}, and evaluate model calibration with the inverse probability of censoring weighting (IPCW) integrated Brier score (IBS)~\cite{Graf1999}, the Distribution Calibration (D-Calibration)~\cite{Haider2020},  and the Kaplan–Meier Calibration (KM Calibration)~\cite{Chapfuwa2023}. The C-index evaluates how well a model performs by measuring the concordance between the rankings of the predicted event times and the true event times. The C-index ranges from 0 to 1, where higher values indicate better discriminative performance; a value of 0.5 corresponds to random guessing.
Similar to the mean squared error, the Brier score (BS) assesses the accuracy of an estimated survival function at some time $t$. The IPCW are observation-specific weights that account for censoring in survival data, ensuring that the BS remains unbiased. The IPCW IBS is the integral of the IPCW BS over the observational period. 
The D‑Calibration test bins each individual’s predicted survival probability at their observed event time into equal-width bins over and applies a $\chi^2$ test to assess whether those predicted probabilities are uniform across bins. A well-calibrated model should yield a non-significant p‑value. The KM-Calibration procedure compares the average predicted survival curve with the Kaplan–Meier estimate. For the KM-Calibration, the closer the two curves align, the better calibrated the model, where 0 indicates perfect calibration and 1 indicates maximal miscalibration and a random prediction yields 0.25.
The C-index and the IPCW IBS metrics are computed using the \texttt{TorchSurv} package~\cite{Monod2024}. The D-Calibration and KM-Calibration metrics are computed using the \texttt{SurvivalEVAL} package~\cite{qi2024survivaleval}.

\subsection{Synthetic Data Experiment} \label{sec:synthetic_data_experiment}
In this section, we present experiments conducted on synthetic data. The experimental setup was inspired by~\cite{teh_survival} and constitutes a broadly applicable evaluation benchmark. We simulate the training sets with increasing sizes $N = 25, 50, 100$ and $150$ samples where the event time was drawn from two distributions: $p_0(T) = \text{LogNormal}(3, 0.8^2)$ and $p_1(T) = \text{LogNormal}(3.5, 1^2)$.
Each observation includes a covariate indicating whether the event time is sampled from $p_0$ or $p_1$, along with three additional noisy covariates generated from a standard normal distribution. The censoring times are drawn from an exponential distribution with a rate of $0.025$ yielding an average censoring rate of 54\% across the four synthetic datasets. The test set is generated using the same data-generating process, fixed to $100$ observations, and held constant across all experiments. 

Figure~\ref{fig:syntheic_data_experiment} presents the true survival function alongside the predicted functions from \textit{NeuralSurv} and the two top-performing benchmark models, selected based on IPCW IBS. Each panel represents a different training set size. As the number of training samples increases, the predicted survival functions match more closely the true survival function. The results show that \textit{NeuralSurv} consistently ranks as the best method according to IPCW IBS, and its predictive accuracy improves with larger sample sizes. Beyond its competitive performance, \textit{NeuralSurv} also provides Bayesian credible intervals, offering uncertainty estimates for survival probabilities, an important feature absent in deep learning benchmark models. Notably, these credible intervals appropriately narrow as more data becomes available, demonstrating well-calibrated uncertainty quantification. Corresponding C-index, IPCW IBS score, D-Calibration p-values, and KM-Calibration scores for all methods are reported in Tables~\ref{tab:synthetic_survival_results}-\ref{tab:synthetic_survival_results_2}. 

\begin{figure}
    \centering
    \includegraphics[width=1\linewidth]{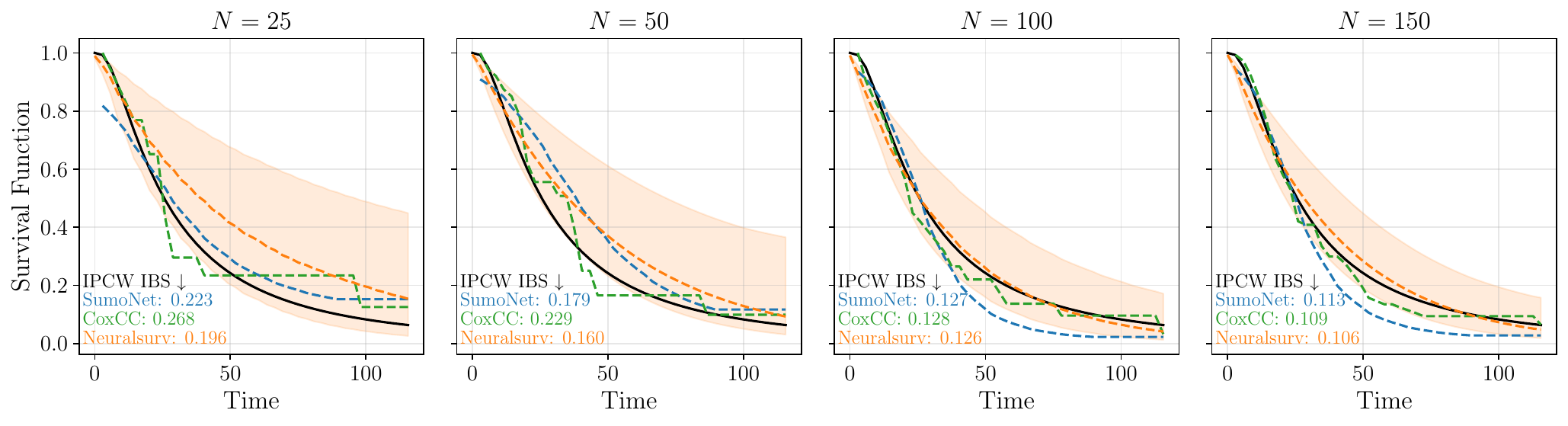}
    \caption{Comparison of the true survival function (black) with the estimated survival functions from \textit{NeuralSurv} and the two top-performing benchmark models (colored) on synthetic data. The time axis is truncated at the maximum observed event time in the training data. Each panel represents a different training set size. The IPCW IBS score is reported for each method in each panel, with lower values indicating better predictive accuracy. \textit{NeuralSurv} estimates the full posterior over survival functions, and the 90\% credible interval is shown as a ribbon around its estimate.}
    \label{fig:syntheic_data_experiment}
\end{figure}

\subsection{Real Survival Data Experiments} \label{sec:real_data_experiment}

To comprehensively evaluate \textit{NeuralSurv}, we conduct experiments on eight real survival datasets: 
the chemotherapy for colon cancer (COLON), the Molecular Taxonomy of Breast Cancer International Consortium (METABRIC), the Rotterdam and German Breast Cancer Study Group (GBSG), the National Wilm's Tumor Study (NWTCO), the Worcester Heart Attack Study (WHAS), the Study to Understand Prognoses and Preferences for Outcomes and Risks of Treatment (SUPPORT), the Veterans administration Lung Cancer trial (VLC) and the Sac 3 simulation study. Each dataset is subsampled to 125 observations to highlight the advantages of a Bayesian approach in data-scarce regimes. The data is randomly partitioned into five equally sized folds, with each fold serving as a distinct train/test split, comprising 100 training samples and 25 test samples per fold.

Table~\ref{tab:survival_results} presents the C-index and IPCW IBS on the held-out test sets for three representative datasets, while results for the remaining datasets, as well as D-Calibration and KM-Calibration, are shown in Tables~\ref{tab:further_deep_survival_results}–\ref{tab:further_deep_survival_results_2}. 
Across eight datasets, \textit{NeuralSurv} achieves the best IPCW IBS score on seven, highlighting superior overall calibration compared to benchmarks. It consistently passes D-calibration, together with \textit{Sumo-Net} as the only benchmark achieving this result, while it ranks fifth in KM-calibration. This strength arises from its Bayesian framework, which naturally models uncertainty and provides effective regularization in data-scarce settings. Beyond calibration, \textit{NeuralSurv} also demonstrates strong discriminative performance, achieving the best C-index in four datasets and the second best in three.

An ablation study using a larger training set of 250 observations is presented in Tables~\ref{tab:survival_results_ablation}-\ref{tab:survival_results_ablation_2}. \textit{NeuralSurv} continues to outperform benchmark methods under this setting in terms of calibration performance demonstrating the robustness of the method to training size. 
Furthermore, we also include results from traditional survival models, such as the Cox Proportional Hazards model~\cite{Cox1972}, the Weibull Accelerated Failure Time model~\cite{Carroll2003}, the Random Survival Forest~\cite{Ishwaran2008}, and the Survival Support Vector Machine~\cite{Polsterl2015} in Tables~\ref{tab:further_traditional_survival_results}-\ref{tab:further_traditional_survival_results_2}. These models often achieve strong performance in data-scarce regimes. However, they are not designed to leverage high-dimensional or complex feature representations, which limits their applicability in modern deep learning contexts. Our focus remains on evaluating deep survival methods that can scale with data complexity, but we include these classical baselines for reference and completeness.
A prior sensitivity analysis for the parameter $\phi$, using priors with double and half the original variance, is presented in Table~\ref{tab:prior_sensitivity_analysis}. While the posterior distributions under different priors largely overlapped, their central tendencies occasionally differed, indicating mild to moderate sensitivity to the choice of prior. Incorporating prior information remains important, as it helps strike a principled balance between model flexibility and regularization.

\begin{table}[!t]
\centering
\scriptsize
\renewcommand{\arraystretch}{1.1}

\begin{minipage}{0.1\textwidth}
\centering
\textcolor{white}{method} \\
\begin{tabular}{l}
\toprule
Method  \\
\midrule
MTLR~\cite{Yu2011} \\
DeepHit~\cite{Lee2018} \\
DeepSurv~\cite{Katzman2018} \\
Logistic Hazard~\cite{Gensheimer2019} \\
CoxTime~\cite{Kvamme2019} \\
CoxCC~\cite{Kvamme2019} \\
PMF~\cite{Kvamme2021} \\
PCHazard~\cite{Kvamme2021} \\
BCESurv~\cite{Kvamme2023}\\
DySurv~\cite{Mesinovic2024} \\
Sumo-Net
~\cite{rindt2022} \\
DQS~\cite{Yanagisaw2023} \\
NeuralSurv (Ours)\\
\bottomrule
\end{tabular}
\end{minipage}
\hfill
\hfill
\hfill
\begin{minipage}{0.25\textwidth}
\centering
\textbf{COLON} \\
\begin{tabular}{cc}
\toprule
C-index $\uparrow$ & IPCW IBS $\downarrow$ \\
\midrule
  0.562   & 0.298 \\
  0.478 & 0.28 \\
  0.572 & 0.326 \\
  0.490 & 0.321 \\
  0.578 & 0.277 \\
  0.584 & 0.289  \\
  0.509 &  0.324 \\
  0.538 &  0.297\\
  0.491 & 0.302 \\
  0.488 & 0.536\\
  0.485 & \underline{0.241} \\
  \underline{0.635} & 0.246 \\
  \textbf{0.671} & \textbf{0.218}\\
\bottomrule
\end{tabular}
\end{minipage}
\hfill
\begin{minipage}{0.25\textwidth}
\centering
\textbf{METABRIC} \\
\begin{tabular}{cc}
\toprule
C-index $\uparrow$ & IPCW IBS $\downarrow$ \\
\midrule
  0.548 & 0.279 \\
  0.511 & 0.243\\
  0.523 & 0.289 \\
  0.541 & 0.317 \\
  0.533 & 0.307 \\
  0.575 & 0.257 \\
  0.440 & 0.336 \\
  0.541 & 0.291 \\
  \textbf{0.616} & 0.277 \\
  0.561 & 0.465 \\
  0.447 & \underline{0.223}   \\
  0.564 & 0.261 \\
  \underline{0.584} & \textbf{0.212}\\
\bottomrule
\end{tabular}
\end{minipage}
\hfill
\begin{minipage}{0.25\textwidth}
\centering
\textbf{GBSG} \\
\begin{tabular}{cc}
\toprule
 C-index $\uparrow$ & IPCW IBS $\downarrow$ \\
\midrule
 0.602  & 0.273   \\
  0.578 & 0.309 \\
  0.618 & 0.252 \\
  0.618 & 0.296 \\
  0.599 & 0.285 \\
  0.646 & 0.240 \\
  \underline{0.655} & 0.250 \\
  0.609 & 0.249 \\
  0.581 & 0.273 \\
  0.572 & 0.485 \\
  0.476  & 0.250  \\
  0.611 & \underline{0.229}\\
  \textbf{0.657}& \textbf{0.188}\\
\bottomrule
\end{tabular}
\end{minipage}
\vspace{1em}
\caption{Performance comparison of deep survival models over five different train/test splits of each dataset. The best results for each metric are shown in bold, and the second-best results are underlined. $\uparrow$ indicates higher is better; $\downarrow$ indicates lower is better.}
\label{tab:survival_results}
\end{table}

\label{sec-experiments}

\section{Conclusion}
\label{sec-conclusion}
We propose the first fully Bayesian framework for deep survival analysis that models time-varying relationships between covariates and risk.
On both synthetic and real-world datasets, in data-scarce regimes, our method consistently achieves better calibration than state-of-the-art deep survival models and matches or surpasses their discriminative performance.  In contrast to previous approaches in deep survival analysis, which are either constrained to discrete-time settings~\cite{Yu2011,Lee2018,Gensheimer2019,Kvamme2021,Kvamme2023,Mesinovic2024} or lack the ability to provide Bayesian uncertainty quantification~\cite{Yu2011,Lee2018,Katzman2018, Gensheimer2019,Kvamme2019,Kvamme2023,Mesinovic2024}, \emph{NeuralSurv} introduces a continuous-time modeling framework that naturally incorporates Bayesian inference, enabling both accurate survival predictions and well-calibrated uncertainty estimates.

Pólya–Gamma and Poisson data-augmentation schemes (Section~\ref{sec-augmented-likelihood})  have been extensively employed with standard Gaussian process models~\cite{donner_opper_gaussian_cox,pg_poisson_hawkes}.  Likewise, the local linearization of Bayesian neural networks, which yields a Gaussian process–based approximation, (Section~\ref{sec-local-linearization-bnn}), is a well established technique~\cite{Immer2021}. To our knowledge, this work is the first to integrate these two approaches into a unified framework that capitalizes on Gaussian process conjugacy. By combining these methods, we contribute a novel inference strategy at the intersection of Bayesian deep learning and Gaussian process modeling.

Despite its strengths, \textit{NeuralSurv} relies on three key simplifying assumptions. First, we assume a sigmoidal hazard function, a choice shared by prior work (e.g., ~\cite{teh_survival, Kim2018}), which may not capture all risk dynamics. Second, our mean‑field variational inference treats parameters $\phi$ and $\boldsymbol{\theta}$ as independent, ignoring posterior correlations. Third, we linearize the network around the MAP estimate to enforce conjugacy. In real‑world settings, however, the true posterior can be multimodal and strongly correlated, so this local, factorized approximation may overlook secondary modes or misestimate joint uncertainty.

Concerning the computational efficiency of our method, the coordinate‑ascent updates scale linearly with network size but still require full‑dataset passes each iteration. For very large cohorts, this becomes a bottleneck. Extending the algorithm to use stochastic or mini‑batch updates would preserve conjugacy benefits while improving scalability.

We believe that \textit{NeuralSurv} has the potential to make a positive societal impact. For instance, as healthcare data becomes increasingly diverse, there is a growing need for models that can handle multimodal data within time-to-event analyses effectively. \textit{NeuralSurv} represents an important first step toward accommodating such data within a Bayesian deep learning framework.

\section*{Acknowledgments and Disclosure of Funding}
SB acknowledges funding from the MRC Centre for Global Infectious Disease Analysis (reference MR/X020258/1), funded by the UK Medical Research Council (MRC). This UK funded award is carried out in the frame of the Global Health EDCTP3 Joint Undertaking. SB acknowledges support from the Novo Nordisk Foundation via The Novo Nordisk Young Investigator Award (NNF20OC0059309).  SB acknowledges the Danish National Research Foundation (DNRF160) through the chair grant.  SB acknowledges support from The Eric and Wendy Schmidt Fund For Strategic Innovation via the Schmidt Polymath Award (G-22-63345) which also supports AM and MM.

\bibliographystyle{plain}
\bibliography{ref} 


\clearpage
\appendix
\setcounter{table}{0}
\setcounter{figure}{0}
\setcounter{equation}{0}
\renewcommand{\thefigure}{A\arabic{figure}}  
\renewcommand{\thetable}{A\arabic{table}}    
\renewcommand{\theequation}{A\arabic{equation}}

\section{Review of Survival Analysis}
\label{app-review-survival}
This appendix offers a concise summary of the survival-analysis framework on which our approach is built. For an in-depth review, the reader is referred to~\cite{Lawless_survival}.

Survival data for each observation consist of three components:
\begin{itemize}
\item \textbf{Feature vector}: A covariate vector $\mathbf{x}\in\mathbb{R}^{p}$ capturing baseline characteristics; 
\item \textbf{Event time}: a nonnegative random variable $T$ measuring the time from baseline to the occurrence of the event of interest;
\item \textbf{Event indicator}:  A binary variable $\delta$, which takes the value 1 if the event is observed, and 0 if the event is not observed within the observational period. In the latter case, the observation’s data is said to be right-censored, meaning that the only available information is the time of the last follow-up before the event could occur.
\end{itemize}
To handle censoring uniformly, we introduce a censoring time  $C$ and record the observed time $y = \min(T,\,C)$. The event indicator can then be written succinctly as $\delta = \mathbbm{1}_{\{T\leq C\}}$.
Throughout, we assume noninformative right‑censoring, i.e. conditional on the covariates, the censoring time is independent of the event time: $C\perp T\mid\mathbf{x}$.
Conditional on $\mathbf{x}$, we let the event time $T$ have cumulative distribution function $F(t\mid\mathbf{x})$ and probability density function $f(t\vert \mathbf{x})$ such that
\begin{equation*}
F(t\mid\mathbf{x})
=
\mathbb{P}(T\le t\mid\mathbf{x}) 
=\int_0^t f(s\mid\mathbf{x})\,\mathrm{d}s
\end{equation*}
for $t\in[0,\infty)$. The survival function gives the probability of remaining event-free beyond time $t$:
\begin{equation*}
S(t\mid\mathbf{x})
:=
\mathbb{P}(T>t\mid\mathbf{x})
=
1 - F(t\mid\mathbf{x})=
\int_t^\infty f(s\mid\mathbf{x})\mathrm{d}s,
\end{equation*}
for $t\in[0,\infty)$. An important modeling quantity is the hazard function, which represents the instantaneous event rate at time $t$ given survival up to $t$:
\begin{equation*}
\lambda(t\mid\mathbf{x})
:=
\lim_{\Delta t\to 0}
\frac{\mathbb{P}\bigl(t\le T < t+\Delta t\mid T\ge t,\mathbf{x}\bigr)}{\Delta t}
=
\frac{f(t\mid\mathbf{x})}{S(t\mid\mathbf{x})}.
\end{equation*}
Equivalently,
\begin{equation*}
\lambda(t\mid\mathbf{x})
\;=\;
-\frac{\mathrm{d}}{\mathrm{d}t}\log S(t\mid\mathbf{x}),
\end{equation*}
so that the survival function can be written in terms of the hazard:
\begin{equation*}
S(t\mid\mathbf{x})
=
\exp\Bigl(-\!\!\int_0^t \lambda(s\mid\mathbf{x})\,\mathrm{d}s\Bigr).
\end{equation*}

\clearpage
\newpage
\section{Review of Pólya-Gamma Random Variables}
\label{app-review-polya-gamma}

We follow \cite{Polson01122013} in defining the family of Pólya–Gamma distributions and their properties.

\begin{definition}[Pólya–Gamma Distribution]
A random variable $\omega$ is said to follow a \emph{Pólya–Gamma distribution} with parameters $b > 0$ and $c \in \mathbb{R}$, denoted by $\omega \sim \text{PG}(b, c)$, if
\begin{equation} \label{eq:polya_gamma_density_infinite_sum}
\omega \stackrel{d}{=} \frac{1}{2\pi^{2}} \sum_{k=1}^{\infty} \frac{g_{k}}{\left(k - \frac{1}{2}\right)^{2} + \frac{c^{2}}{4\pi^{2}}}, \quad \text{with } g_{k} \overset{\text{i.i.d.}}{\sim} \text{Gamma}(b,1).
\end{equation}
\end{definition}

The following result expresses the reciprocal of the hyperbolic cosine function raised to the power $b$ as an infinite Gaussian mixture. This representation is central to connecting the Pólya–Gamma density with a parameter $c \neq 0$ to the case when $c=0$.

\begin{proposition} \label{prop:inverse_hyperbolic_cosine}
The reciprocal of the hyperbolic cosine raised to the power $b$ can be represented as an infinite Gaussian mixture:
\begin{equation*}
\left[\cosh\left(\frac{c}{2}\right)\right]^{-b} = \int_{0}^{\infty} \exp\left(-\frac{c^2}{2}\omega\right) \, p_{\text{PG}}(\omega\mid b, 0) \, d\omega.
\end{equation*}
\end{proposition}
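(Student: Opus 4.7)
\medskip

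\noindent\textbf{Proof proposal.} The plan is to compute the Laplace transform of $\omega\sim\text{PG}(b,0)$ at the point $c^{2}/2$ using the infinite series representation~\eqref{eq:polya_gamma_density_infinite_sum}, and then recognize the resulting infinite product as the Weierstrass/Euler factorization of $\cosh(c/2)$.

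\medskip

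\noindent\textbf{Step 1 (reduce to a Laplace transform).} The right-hand side of the claimed identity is, by definition, $\mathbb{E}[\exp(-\tfrac{c^{2}}{2}\omega)]$ for $\omega\sim\text{PG}(b,0)$. So it suffices to show
\begin{equation*}
\mathbb{E}\bigl[e^{-\tfrac{c^{2}}{2}\omega}\bigr] \;=\; \bigl[\cosh(c/2)\bigr]^{-b}.
\end{equation*}

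\noindent\textbf{Step 2 (use the series representation).} By~\eqref{eq:polya_gamma_density_infinite_sum} with $c=0$, we can write
\begin{equation*}
\omega \stackrel{d}{=} \sum_{k=1}^{\infty} \frac{g_{k}}{2\pi^{2}(k-\tfrac{1}{2})^{2}}, \qquad g_{k}\overset{\text{i.i.d.}}{\sim}\text{Gamma}(b,1).
\end{equation*}
Since the $g_{k}$ are independent and nonnegative, Tonelli together with the known Laplace transform $\mathbb{E}[e^{-s g_{k}}]=(1+s)^{-b}$ for $s\ge 0$ yields
\begin{equation*}
\mathbb{E}\bigl[e^{-\tfrac{c^{2}}{2}\omega}\bigr] \;=\; \prod_{k=1}^{\infty}\mathbb{E}\!\left[\exp\!\left(-\frac{c^{2} g_{k}}{4\pi^{2}(k-\tfrac{1}{2})^{2}}\right)\right] \;=\; \prod_{k=1}^{\infty}\left(1+\frac{c^{2}}{4\pi^{2}(k-\tfrac{1}{2})^{2}}\right)^{-b}.
\end{equation*}
The interchange of expectation and infinite product is justified by monotone convergence on the partial products of the nonnegative, decreasing sequence $\exp(-\tfrac{c^{2}}{2}\sum_{k=1}^{K}\cdots)$.

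\medskip

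\noindent\textbf{Step 3 (apply the Euler product for $\cosh$).} Recall the classical Weierstrass factorization
\begin{equation*}
\cosh(z) \;=\; \prod_{k=1}^{\infty}\left(1+\frac{4z^{2}}{\pi^{2}(2k-1)^{2}}\right).
\end{equation*}
Setting $z=c/2$ gives $\cosh(c/2)=\prod_{k=1}^{\infty}\bigl(1+\tfrac{c^{2}}{4\pi^{2}(k-\tfrac12)^{2}}\bigr)$. Raising to the $-b$ power and comparing with the product obtained in Step~2 gives the identity, which is equivalent to the statement of the proposition.

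\medskip

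\noindent\textbf{Main obstacle.} The computation itself is short and essentially a one-line consequence once the series representation and the Euler product are in hand. The only subtle point is the term-by-term passage from $\mathbb{E}[\exp(-\tfrac{c^{2}}{2}\sum_{k}X_{k})]$ to $\prod_{k}\mathbb{E}[\exp(-\tfrac{c^{2}}{2}X_{k})]$ when the sum has infinitely many terms: this is handled by monotone convergence, noting also that the resulting infinite product converges absolutely because $\sum_{k}(k-\tfrac12)^{-2}<\infty$. Everything else is bookkeeping.
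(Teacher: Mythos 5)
The paper does not supply its own proof of this proposition: Appendix B is explicitly a review of the Pólya--Gamma machinery, and Proposition~\ref{prop:inverse_hyperbolic_cosine} is cited from~\cite{Polson01122013} without demonstration. Your argument is correct and is in fact the canonical derivation used in that source: you compute the Laplace transform of the infinite-sum representation~\eqref{eq:polya_gamma_density_infinite_sum}, factor it over the independent $\text{Gamma}(b,1)$ summands to obtain $\prod_{k\ge 1}\bigl(1+\tfrac{c^{2}}{4\pi^{2}(k-\tfrac12)^{2}}\bigr)^{-b}$, and recognize this via the Weierstrass product $\cosh(c/2)=\prod_{k\ge1}\bigl(1+\tfrac{c^{2}}{4\pi^{2}(k-\tfrac12)^{2}}\bigr)$ (equivalently $\cosh z=\prod_k(1+4z^{2}/\pi^{2}(2k-1)^{2})$ at $z=c/2$). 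The justification of the interchange via monotone (or dominated) convergence on the partial sums $S_K\uparrow\omega$ with $e^{-c^{2}S_K/2}\le 1$ is the right way to handle the only nontrivial analytic step, and the algebra $(2k-1)^{2}=4(k-\tfrac12)^{2}$ making the two products match is correctly tracked. Nothing to fix.
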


Notice that Proposition~\ref{prop:inverse_hyperbolic_cosine} can also be read as providing a closed-form expression for the expectation $\mathbb{E}_{\omega\sim p_{\text{PG}}(\omega\mid b,0)}\left[\exp(-\frac{c^{2}}{2}\omega)\right]$. Building on this representation, we can relate the density function of a Pólya–Gamma random variable with a non-zero parameter $c$  through an exponential tilting of the Pólya–Gamma random density with $c=0$. This connection is summarized in the next proposition.

\begin{proposition} \label{prop:polya_gamma_density}
The Pólya–Gamma density~\eqref{eq:polya_gamma_density_infinite_sum} can be re-written in the form
\begin{equation} \label{eq-ratio-pg-densities}
p_{\text{PG}}(\omega\mid b, c) = \exp\left(-\frac{c^2}{2}\omega\right) \, (\cosh(c/2))^{b} \, p_{\text{PG}}(\omega\mid b, 0).
\end{equation}
\end{proposition}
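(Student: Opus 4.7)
The plan is to identify the claimed expression with the Pólya–Gamma density by matching Laplace transforms on $\mathbb{R}_+$, using Proposition~\ref{prop:inverse_hyperbolic_cosine} as the main tool. Concretely, define the candidate
\begin{equation*}
q(\omega) \;:=\; \exp\!\left(-\tfrac{c^{2}}{2}\omega\right)\bigl(\cosh(c/2)\bigr)^{b}\,p_{\text{PG}}(\omega\mid b, 0).
\end{equation*}
Proposition~\ref{prop:inverse_hyperbolic_cosine} applied with parameter $c$ immediately gives $\int_0^\infty q(\omega)\,\mathrm{d}\omega = 1$, so $q$ is a valid density on $\mathbb{R}_+$. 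It therefore suffices to show that $q$ is the density of $\text{PG}(b,c)$, which by the uniqueness of the Laplace transform on $\mathbb{R}_+$ reduces to checking $\mathcal{L}_q(t) = \mathcal{L}_{\text{PG}(b,c)}(t)$ for all $t \geq 0$.

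For the first side, I would compute
\begin{equation*}
\mathcal{L}_q(t) \;=\; \bigl(\cosh(c/2)\bigr)^{b}\!\int_0^\infty \exp\!\left(-\tfrac{c^{2}+2t}{2}\omega\right) p_{\text{PG}}(\omega\mid b,0)\,\mathrm{d}\omega,
\end{equation*}
and then invoke Proposition~\ref{prop:inverse_hyperbolic_cosine} a second time, now with the effective parameter $\sqrt{c^{2}+2t}$ in place of $c$, to obtain
\begin{equation*}
\mathcal{L}_q(t) \;=\; \bigl(\cosh(c/2)\bigr)^{b}\,\bigl[\cosh\!\bigl(\tfrac{1}{2}\sqrt{c^{2}+2t}\bigr)\bigr]^{-b}.
\end{equation*}

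For the second side, I would start from the defining series~\eqref{eq:polya_gamma_density_infinite_sum}. Since the $g_k$ are independent $\text{Gamma}(b,1)$, the Laplace transform of $\omega\sim\text{PG}(b,c)$ factorizes as the infinite product
\begin{equation*}
\mathcal{L}_{\text{PG}(b,c)}(t) \;=\; \prod_{k\ge 1}\left(1+\frac{t}{2\pi^{2}\!\left[(k-\tfrac{1}{2})^{2}+\tfrac{c^{2}}{4\pi^{2}}\right]}\right)^{-b},
\end{equation*}
after first justifying term-by-term interchange of expectation and infinite sum via monotone/dominated convergence on $[0,\infty)$. The key identification is then the Weierstrass product
\begin{equation*}
\cosh(x/2) \;=\; \prod_{k\ge 1}\left(1+\frac{x^{2}}{4\pi^{2}(k-\tfrac{1}{2})^{2}}\right),
\end{equation*}
which, after a short algebraic manipulation inside each factor, yields $\mathcal{L}_{\text{PG}(b,c)}(t) = \bigl(\cosh(c/2)/\cosh(\tfrac{1}{2}\sqrt{c^{2}+2t})\bigr)^{b}$, matching $\mathcal{L}_q(t)$.

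The main obstacle I anticipate is the careful handling of the infinite-sum step: establishing that the Laplace transform of the series in~\eqref{eq:polya_gamma_density_infinite_sum} really is the infinite product of the Gamma Laplace transforms (requiring convergence control for the partial sums, e.g. summability of $1/k^{2}$), and then matching this product with the Weierstrass factorization of $\cosh$ in a way that isolates exactly the ratio appearing in $\mathcal{L}_q(t)$. Once this product identity is in place, the rest is routine: uniqueness of the Laplace transform on $[0,\infty)$ closes the argument and gives the tilting relation~\eqref{eq-ratio-pg-densities}.
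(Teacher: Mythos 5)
Your proof is correct. The paper itself gives no argument for this proposition: Appendix B opens by stating that it follows Polson et al.\ (2013), and the tilting identity~\eqref{eq-ratio-pg-densities} is recorded as a cited fact rather than proved. So the right comparison is against the argument underlying that reference, and your reconstruction matches it.

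Your route is sound and non-circular. You use Proposition~\ref{prop:inverse_hyperbolic_cosine} twice --- once to show the candidate $q$ integrates to one, once with the shifted parameter $\sqrt{c^{2}+2t}$ to get $\mathcal{L}_q(t)=\cosh^{b}(c/2)\,\cosh^{-b}\!\bigl(\tfrac{1}{2}\sqrt{c^{2}+2t}\bigr)$ --- and you independently compute $\mathcal{L}_{\mathrm{PG}(b,c)}(t)$ from the defining series~\eqref{eq:polya_gamma_density_infinite_sum}, so you are not smuggling in Proposition~\ref{prop:moment_generating_function_polya_gamma}, which the paper actually derives \emph{from} the statement at hand. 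The interchange step is routine: for $t\geq 0$ the partial sums $\omega_n\uparrow\omega$ a.s., so $e^{-t\omega_n}\downarrow e^{-t\omega}$, monotone convergence gives $\mathbb{E}[e^{-t\omega_n}]\to\mathbb{E}[e^{-t\omega}]$, and independence of the $g_k$ yields the infinite product of $\mathrm{Gamma}(b,1)$ Laplace transforms; the product converges because $\sum_k (k-\tfrac12)^{-2}<\infty$. Combining the Weierstrass factorization of $\cosh$ with the algebra $\bigl(a_k+c^2\bigr)/\bigl(a_k+c^2+2t\bigr)$, where $a_k=4\pi^2(k-\tfrac12)^2$, gives exactly the ratio $\cosh^{b}(c/2)/\cosh^{b}\!\bigl(\tfrac{1}{2}\sqrt{c^{2}+2t}\bigr)$, and uniqueness of Laplace transforms for finite measures on $[0,\infty)$ closes the argument. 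No gaps.
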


The previous propositions not only establish key representations of the Pólya–Gamma density but also facilitate the derivation of its moment properties. In particular, one can derive the moment generating function, from which the first moment follows directly. This is captured in the next result.

\begin{proposition}\label{prop:moment_generating_function_polya_gamma}
Let $p_{\text{PG}}(\omega\mid b, c)$ denote the density function of the random variable $\omega \sim \text{PG}(b, c)$, with $b > 0$ and $c \in \mathbb{R}$. Using Propositions~\ref{prop:inverse_hyperbolic_cosine} and~\ref{prop:polya_gamma_density}, the moment generating function is given by
\begin{equation}
\label{eq-pg-mgf}
\int_{0}^{\infty} e^{\xi \omega} \, p_{\text{PG}}(\omega\mid b, c) \, d\omega =
\frac{\cosh^{b}(c/2)}{\cosh^{b}\!\left(\frac{1}{2}\sqrt{c^2 - 2\xi}\,\right)}.
\end{equation}
In particular, the first moment is obtained by differentiating~\eqref{eq-pg-mgf} with respect to $\xi$ at $\xi = 0$:
\begin{equation} \label{eq-expectation-pg}
\mathbb{E}_{\omega \sim p_{\text{PG}}(\omega\mid b, c) }[\omega] = \frac{b}{2c} \tanh\left(\frac{c}{2}\right).
\end{equation}
\end{proposition}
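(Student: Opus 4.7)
The plan is to chain together the two preceding propositions in a direct calculation, then extract the first moment by differentiating under the integral.

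First, I would use Proposition~\ref{prop:polya_gamma_density} to rewrite the density $p_{\text{PG}}(\omega\mid b,c)$ as an exponential tilt of $p_{\text{PG}}(\omega\mid b,0)$. Substituting this into the defining integral of the MGF gives
\begin{equation*}
\int_{0}^{\infty} e^{\xi\omega}\, p_{\text{PG}}(\omega\mid b, c)\, d\omega
= \cosh^{b}(c/2)\int_{0}^{\infty} \exp\!\left(-\tfrac{c^{2}-2\xi}{2}\,\omega\right) p_{\text{PG}}(\omega\mid b, 0)\, d\omega,
\end{equation*}
so that the tilting constant $\cosh^{b}(c/2)$ factors out and the remaining integral has the exact form needed to invoke Proposition~\ref{prop:inverse_hyperbolic_cosine}.

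Next, I would set $\tilde{c}=\sqrt{c^{2}-2\xi}$ (well-defined and real for $\xi\leq c^{2}/2$, which is the relevant range for deriving moments at $\xi=0$) and apply Proposition~\ref{prop:inverse_hyperbolic_cosine} with $c$ replaced by $\tilde{c}$. This identifies the remaining integral as $\cosh^{-b}(\tilde{c}/2)$, immediately yielding the claimed MGF formula~\eqref{eq-pg-mgf}.

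For the first moment, I would use logarithmic differentiation. Writing $M(\xi)$ for the MGF, we have $\log M(\xi) = b\log\cosh(c/2) - b\log\cosh\!\bigl(\tfrac{1}{2}\sqrt{c^{2}-2\xi}\bigr)$, and the chain rule gives
\begin{equation*}
\frac{M'(\xi)}{M(\xi)} = -b\,\tanh\!\bigl(\tfrac{1}{2}\sqrt{c^{2}-2\xi}\bigr)\cdot\frac{d}{d\xi}\!\left[\tfrac{1}{2}\sqrt{c^{2}-2\xi}\right]
= \frac{b\,\tanh\!\bigl(\tfrac{1}{2}\sqrt{c^{2}-2\xi}\bigr)}{2\sqrt{c^{2}-2\xi}}.
\end{equation*}
Evaluating at $\xi=0$ (where $M(0)=1$) reproduces~\eqref{eq-expectation-pg}.

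There is no real obstacle here: the result follows by two algebraic substitutions and a single derivative. The only mild subtlety worth flagging is the domain of validity, $\xi<c^{2}/2$, which is needed so that $\tilde{c}$ is real and Proposition~\ref{prop:inverse_hyperbolic_cosine} is applicable; since this interval contains a neighborhood of the origin (for $c\neq 0$), differentiation at $\xi=0$ is fully justified, and the case $c=0$ recovers the known moment $\mathbb{E}[\omega]=b/4$ in the limit via $\tanh(c/2)/c \to 1/2$.
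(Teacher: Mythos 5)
Your proof is correct and follows exactly the route the paper indicates (chain Proposition~\ref{prop:polya_gamma_density} to reduce to the $c=0$ case, invoke Proposition~\ref{prop:inverse_hyperbolic_cosine} with $\tilde c=\sqrt{c^2-2\xi}$, then differentiate at $\xi=0$); the paper treats this as a review result from~\cite{Polson01122013} and gives no separate proof beyond this sketch. One small point worth tidying: at $\xi=0$ you obtain $\sqrt{c^2}=|c|$, and the stated formula $\frac{b}{2c}\tanh(c/2)$ agrees with $\frac{b}{2|c|}\tanh(|c|/2)$ only because $\tanh$ is odd, so a one-line remark to that effect would make the evaluation airtight.
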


Finally, the following theorem illustrates how the Pólya–Gamma distribution can be used to derive useful integral identities. 

\begin{theorem}
\label{thm-polya-gamma-integral}
Let $p_{\text{PG}}(\omega\mid b, 0)$ denote the density function of the random variable $\omega \sim \text{PG}(b, 0)$ with $b > 0$. Then, for all $a \in \mathbb{R}$, the following integral identity holds:
\begin{equation*}
\frac{e^{\psi a}}{(1 + e^{\psi})^{b}} = 2^{-b} e^{\kappa \psi} \int_{0}^{\infty} \exp\left({-\frac{\omega \psi^{2}}{2}}\right) \, p_{\text{PG}}(\omega\mid b, 0) \, d\omega,
\end{equation*}
where $\kappa = a - \frac{b}{2}$.
\end{theorem}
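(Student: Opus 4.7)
The plan is to reduce this identity to the Gaussian-mixture representation of the inverse hyperbolic cosine stated in Proposition~\ref{prop:inverse_hyperbolic_cosine} by a purely algebraic rewriting of the left-hand side. The key observation is the factorization
\begin{equation*}
1 + e^{\psi} \;=\; e^{\psi/2}\bigl(e^{-\psi/2} + e^{\psi/2}\bigr) \;=\; 2\,e^{\psi/2}\cosh(\psi/2),
\end{equation*}
which expresses the denominator of the left-hand side in terms of $\cosh(\psi/2)$ with an explicit exponential factor.

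Raising this identity to the $b$-th power and dividing $e^{\psi a}$ by it, I would obtain
\begin{equation*}
\frac{e^{\psi a}}{(1+e^{\psi})^{b}} \;=\; 2^{-b}\,e^{\psi(a-b/2)}\,\bigl[\cosh(\psi/2)\bigr]^{-b} \;=\; 2^{-b}\,e^{\kappa\psi}\,\bigl[\cosh(\psi/2)\bigr]^{-b},
\end{equation*}
with $\kappa = a - b/2$ as in the statement.

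The remaining step is to invoke Proposition~\ref{prop:inverse_hyperbolic_cosine} with $c = \psi$, which gives exactly
\begin{equation*}
\bigl[\cosh(\psi/2)\bigr]^{-b} \;=\; \int_{0}^{\infty} \exp\!\left(-\frac{\psi^{2}}{2}\omega\right) p_{\text{PG}}(\omega\mid b,0)\,d\omega.
\end{equation*}
Substituting this into the previous display yields the claimed identity. The main obstacle is essentially only notational; there is no real analytic difficulty since the heavy lifting (the Gaussian-mixture representation of $\cosh^{-b}$) is already encapsulated in Proposition~\ref{prop:inverse_hyperbolic_cosine}. One should note that the identity is valid for all $\psi \in \mathbb{R}$ and all real $a$ (with $b > 0$), since the algebraic manipulation and the underlying Pólya--Gamma integral identity both hold without further restriction.
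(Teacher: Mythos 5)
Your proof is correct, and the algebraic reduction via $1+e^{\psi}=2e^{\psi/2}\cosh(\psi/2)$ followed by Proposition~\ref{prop:inverse_hyperbolic_cosine} with $c=\psi$ is precisely the standard argument from Polson, Scott, and Windle. The paper presents this theorem in a review appendix without proof (it is attributed to the cited reference), so there is no internal proof to compare against, but your fill-in matches the canonical derivation exactly.
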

The following corollary is a direct application of Theorem~\ref{thm-polya-gamma-integral}.
\begin{corollary}
Let $f(\omega, z) := \frac{z}{2} - \frac{z^2}{2}\omega - \log(2)$. Then,
\begin{equation}
\label{eq-sigmoid-identity}
\sigma(z) = \frac{e^{\frac{z}{2}}}{2\cosh(\frac{z}{2})} = \int^{\infty}_{0}e^{f(\omega,z)}p_{\text{PG}}(\omega\mid 1,0) d\omega.
\end{equation}
\end{corollary}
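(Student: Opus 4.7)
The plan is to deduce the corollary by combining two elementary pieces: a purely algebraic rewrite of the sigmoid, and a direct substitution into Proposition~\ref{prop:inverse_hyperbolic_cosine} (Theorem~\ref{thm-polya-gamma-integral} could be used instead, but Proposition~\ref{prop:inverse_hyperbolic_cosine} gives the shortest path).

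First, I would establish the chain of equalities in the middle of the statement, namely $\sigma(z) = e^{z/2}/(2\cosh(z/2))$. Starting from the definition $\sigma(z) = 1/(1+e^{-z})$, multiplying numerator and denominator by $e^{z/2}$ yields $e^{z/2}/(e^{z/2}+e^{-z/2})$, and recognizing $e^{z/2}+e^{-z/2} = 2\cosh(z/2)$ finishes this step.

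Next, I would invoke Proposition~\ref{prop:inverse_hyperbolic_cosine} with the parameters $b = 1$ and $c = z$, which gives
\begin{equation*}
\bigl[\cosh(z/2)\bigr]^{-1} = \int_{0}^{\infty} \exp\!\left(-\frac{z^{2}}{2}\omega\right) p_{\text{PG}}(\omega \mid 1, 0)\, d\omega.
\end{equation*}
Multiplying both sides by $e^{z/2}/2$ and pulling the constant prefactor inside the exponential (using $1/2 = e^{-\log 2}$) produces
\begin{equation*}
\frac{e^{z/2}}{2\cosh(z/2)} = \int_{0}^{\infty} \exp\!\left(\frac{z}{2} - \frac{z^{2}}{2}\omega - \log 2\right) p_{\text{PG}}(\omega \mid 1, 0)\, d\omega,
\end{equation*}
and the integrand's exponent is precisely $f(\omega,z)$ by definition. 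Combining this with the first step completes the proof.

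There is no real obstacle here: the result is a direct corollary of Proposition~\ref{prop:inverse_hyperbolic_cosine}, and the only bookkeeping is keeping track of where the factor of $1/2$ and the $e^{z/2}$ get absorbed into $f(\omega,z)$. If one preferred to derive it from Theorem~\ref{thm-polya-gamma-integral} instead, the analogous substitution would be $b = 1$, $\psi = -z$, $a = 0$ (so $\kappa = -1/2$), and the algebraic simplification is essentially identical.
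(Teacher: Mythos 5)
Your proposal is correct. The paper merely states that the corollary ``is a direct application of Theorem~\ref{thm-polya-gamma-integral}'' without spelling out the substitution, and your parenthetical note — take $b=1$, $\psi=-z$, $a=0$, so $\kappa=-\tfrac{1}{2}$ — is exactly the intended instantiation (one could equivalently take $\psi=z$, $a=1$, giving $\kappa=\tfrac12$). Your preferred route through Proposition~\ref{prop:inverse_hyperbolic_cosine} is a mild shortcut: since the corollary only needs the case $a=0$ (equivalently $a=1$) and $b=1$, the full generality of Theorem~\ref{thm-polya-gamma-integral} is unnecessary, and Proposition~\ref{prop:inverse_hyperbolic_cosine} with $b=1$, $c=z$ gives $[\cosh(z/2)]^{-1}$ directly, after which multiplying by $e^{z/2}/2 = e^{z/2-\log 2}$ produces the claimed integrand. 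This is slightly more economical in the dependency chain, but the underlying computation is identical — both routes reduce to recognizing $e^{z/2}+e^{-z/2}=2\cosh(z/2)$ and absorbing the constants into $f(\omega,z)$. No gap; the bookkeeping of where the factor $\tfrac12$ and the tilt $e^{z/2}$ land is handled correctly.
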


\clearpage
\newpage
\section{Review of Poisson Processes}
\label{app-review-poisson}
This appendix briefly summarizes the properties of a Poisson process that are most relevant to our analysis. For a more comprehensive treatment, see Chapters 3 and 5 of~\cite{Kingman1992}. 

\begin{definition}[Poisson Process]\label{def:poisson_process}
Let $\mathcal{Z}$ be a measurable space. A random countable subset 
\begin{equation*}
    \Psi = \{z \in \mathcal{Z}\}
\end{equation*}
is said to be a \emph{Poisson process} on $\mathcal{Z}$ if it satisfies the following properties:
\begin{enumerate}
    \item \textbf{Independence:} For any sequence of disjoint subsets $\{\mathcal{Z}_{k} \subset \mathcal{Z}\}_{k=1}^{K}$, the counts
    \begin{equation*}
    N(\mathcal{Z}_k) = \left\vert \Psi \cap \mathcal{Z}_k \right\vert
    \end{equation*}
    are mutually independent.
    \item \textbf{Poisson Counts:} For each measurable subset $\mathcal{Z}_k \subset \mathcal{Z}$, the count $N(\mathcal{Z}_k)$ is Poisson distributed with mean
    \begin{equation*}
    \int_{\mathcal{Z}_k} \lambda(z) \, dz,
    \end{equation*}
    where $\lambda:\mathcal{Z} \to \mathbb{R}_+$ is the intensity function.
\end{enumerate}
\end{definition}
Given a point process $\Psi$, we denote its path measure --- that is, the probability measure induced on its sample‑path space --- by $\mathbb{P}_{\Psi}$.
If the intensity function $\lambda(z)$ is constant, $ \lambda(z) \equiv \lambda$, then $\Psi$ is called \emph{homogeneous}; otherwise, it is \emph{inhomogeneous}. We now extend the concept of a Poisson process by incorporating additional random attributes, known as \emph{marks}.

\begin{definition}[Marked Poisson Process]
Let $\Psi = \{z \in \mathcal{Z}\}$ be a Poisson process on $\mathcal{Z}$ with intensity function $\lambda: \mathcal{Z} \to \mathbb{R}_+$. Suppose that for each point $z$, associate a random variable $\omega$, such that $\omega \sim p_{\omega \vert z}(\omega \vert z)$ , taking values in some space $\mathcal{M}$. Then the collection
\begin{equation*}
    \Psi_{\mathcal{M}} = \{(z, \omega) \in \mathcal{Z}\times \mathcal{M}\}
\end{equation*}
defines a Poisson process on the product space $\mathcal{Z} \times \mathcal{M}$. The resulting process is known as a \emph{marked Poisson process} with intensity
\begin{equation*}
\lambda(z, \omega) = \lambda(z) \, p_{\omega \vert z}(\omega \vert z).
\end{equation*}
\end{definition}
Next, we present Campbell’s Theorem, which describes the law of sums taken over the points of a Poisson process (see~\cite[Sec. 3.2]{Kingman1992}).
\begin{theorem}[Campbell's Theorem]
\label{thm:campbell_theorem}
Let $\Psi_{\mathcal{M}}$ be a marked Poisson process on $\mathcal{Z}\times \mathcal{M}$ with intensity function $\lambda(z,\omega)$ and let $f:\mathcal{Z}\times \mathcal{M}\to \mathbb{R}$ be measurable. Then the sum
\begin{equation*}
H(\Psi_{\mathcal{M}}) = \sum_{(z,\omega)_{j}\in \Psi_{\mathcal{M}}} f(z_{j},\omega_{j})
\end{equation*}
is absolutely convergent with probability one if and only if 
\begin{equation*}
\int_{\mathcal{Z}\times \mathcal{M}} \min (\vert  f(z,\omega)\vert ,1 ) \lambda(z,\omega) \mathrm{d}z\mathrm{d}\omega <\infty.
\end{equation*}
If this condition holds, then
\begin{equation*}
\mathbb{E}_{\Psi_{\mathcal{M}} \sim\mathbb{P}_{\Psi_{\mathcal{M}}}}\left[e^{s H(\Psi_{\mathcal{M}})}\right] = \exp \left(\int_{\mathcal{Z}\times \mathcal{M}} (e^{s f(z,\omega)} - 1)  \lambda(z,\omega) \mathrm{d}z\mathrm{d}\omega\right)
\end{equation*}
for any $s\in\mathbb{C}$ for which the integral on the right converges. Moreover 
\begin{equation*}
\mathbb{E}_{\Psi_{\mathcal{M}} \sim\mathbb{P}_{\Psi_{\mathcal{M}}}}\left[H(\Psi_{\mathcal{M}})\right] = \int_{\mathcal{Z}\times \mathcal{M}}  f(z,\omega)\lambda(z,\omega) \mathrm{d}z\mathrm{d}\omega
\end{equation*}
in the sense that the expectation exists if and only if the integral converges.
\end{theorem}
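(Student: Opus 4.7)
The plan is to prove Campbell's Theorem by the standard three-step approximation argument for the Laplace functional of a Poisson process, then extract the convergence criterion and the first-moment formula as corollaries. Throughout I will use only the defining properties in Definition \ref{def:poisson_process}: independence of counts on disjoint sets and the Poisson distribution of $N(\mathcal{Z}_k)$ with mean $\Lambda(\mathcal{Z}_k) := \int_{\mathcal{Z}_k}\lambda(z,\omega)\,\mathrm{d}z\,\mathrm{d}\omega$ on the product space $\mathcal{Z}\times\mathcal{M}$.

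First, I will establish the Laplace functional identity for a measurable indicator $f = c\,\mathbf{1}_{A}$ with $\Lambda(A)<\infty$. In this case $H(\Psi_{\mathcal{M}}) = c\,N(A)$, and since $N(A)\sim\mathrm{Poisson}(\Lambda(A))$, a direct calculation of the MGF of a Poisson variable gives
\begin{equation*}
\mathbb{E}\bigl[e^{s H}\bigr] = \exp\!\Bigl(\Lambda(A)(e^{sc}-1)\Bigr) = \exp\!\Bigl(\int (e^{sf}-1)\lambda\,\mathrm{d}z\,\mathrm{d}\omega\Bigr).
\end{equation*}
Next I extend to simple functions $f = \sum_{k=1}^{K} c_k \mathbf{1}_{A_k}$ with disjoint $A_k$ of finite $\Lambda$-measure. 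Since the counts $N(A_k)$ are independent by the first property of Definition \ref{def:poisson_process}, the MGF of $H = \sum_k c_k N(A_k)$ factorizes into a product of the one-set MGFs, which together reassemble into $\exp(\int(e^{sf}-1)\lambda)$. For a general nonnegative measurable $f$, I will approximate it from below by an increasing sequence of such simple functions $f_n\uparrow f$; dominated or monotone convergence applied to both sides (using $|e^{sf_n}-1|\le |sf|e^{|sf|}$ or a cutoff argument when $s\in\mathbb{C}$ satisfies the convergence hypothesis) yields the identity for $f$, and then the signed case follows by decomposing $f=f^+-f^-$.

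For the absolute convergence criterion, I will use the Laplace functional with $s=-1$ applied to $|f|\wedge 1$: the sum $\sum \min(|f(z_j,\omega_j)|,1)$ is a.s. finite iff $\int (1-e^{-\min(|f|,1)})\lambda\,\mathrm{d}z\,\mathrm{d}\omega < \infty$, and since $1-e^{-u}\asymp \min(u,1)$ on $[0,\infty)$, this is equivalent to $\int \min(|f|,1)\lambda<\infty$. Equivalence between a.s. finiteness of $\sum |f(z_j,\omega_j)|$ and of $\sum \min(|f(z_j,\omega_j)|,1)$ comes from a Borel--Cantelli argument: the number of points with $|f(z_j,\omega_j)|\ge 1$ is Poisson with mean $\int_{\{|f|\ge 1\}}\lambda$, which must be finite for the sum to converge, and on the complement $\min(|f|,1)=|f|$. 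Finally, the first-moment formula $\mathbb{E}[H]=\int f\lambda$ is obtained by differentiating the Laplace functional at $s=0$ (justified by interchanging $\partial_s$ and expectation using dominated convergence in a neighborhood of $0$), or alternatively by the same simple-function-then-monotone-convergence scheme applied directly to $\mathbb{E}[N(A)]=\Lambda(A)$.

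The main obstacle will be the measure-theoretic bookkeeping in the approximation step, specifically justifying the passage to the limit in $\mathbb{E}[e^{s H(\Psi_{\mathcal{M}})}]$ when $s\in\mathbb{C}$. For real $s\le 0$ and $f\ge 0$ the exponentials are bounded and monotone convergence applies cleanly, but for complex $s$ one must verify that the hypothesis ``the integral on the right converges'' supplies enough integrability to invoke dominated convergence, and that the almost-sure convergence of $\sum f_n$ to $\sum f$ follows from the convergence criterion established above. The rest of the argument is a routine application of the Poisson MGF and the independence axiom.
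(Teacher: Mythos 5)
The paper does not contain a proof of this statement: Theorem~\ref{thm:campbell_theorem} appears in the background appendix as a quoted standard result, with an explicit citation to Kingman, Sec.~3.2 (the paper only verifies its hypotheses later, in Lemmas used for Theorem~\ref{proposition-augmented-likelihood}), so there is no in-paper argument to compare yours against. Your sketch is essentially the classical proof found in the cited reference: Poisson moment generating function on a single set, independence over disjoint sets for simple functions, monotone/dominated convergence for general $f$, and the comparison $1-e^{-u}\asymp\min(u,1)$ for the convergence criterion; as a plan it is sound. The one step you should tighten is the direction ``integral finite $\Rightarrow$ almost sure convergence'': positivity of the Laplace functional, i.e.\ $\mathbb{E}\bigl[e^{-S}\bigr]>0$ with $S=\sum_{j}\min(|f(z_{j},\omega_{j})|,1)$, only gives $\mathbb{P}(S<\infty)>0$, not probability one. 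The clean fix is to use the first-moment formula for nonnegative functions (obtained directly by monotone convergence from simple functions, so there is no circularity), which yields $\mathbb{E}[S]=\int\min(|f|,1)\,\lambda\,\mathrm{d}z\,\mathrm{d}\omega<\infty$ and hence $S<\infty$ almost surely; reserve the Laplace-functional argument for the converse direction, where an infinite integral forces $\mathbb{E}[e^{-S}]=0$ and thus $S=\infty$ almost surely. The remaining points of your outline (finitely many points with $|f|\ge 1$, reduction of $\sum|f_{j}|$ to $\sum\min(|f_{j}|,1)$, and the complex-$s$ case handled by dominated convergence under the stated integrability hypothesis) are standard and correct.
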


\clearpage
\newpage
\section{Obtaining the Normalization Factor \texorpdfstring{$Z(t, \mathbf{x})$}{Z} }
\label{app-obtaining-Z}
In this appendix we derive an efficient approximation for the normalization factor
\begin{equation}
\label{eq-z-expectation-glin}
Z(t, \mathbf{x}) = \mathbb{E}_{\boldsymbol{\theta} \sim p_{\boldsymbol{\theta}}} \left[\sigma(g^{\text{lin}}(t, \mathbf{x}; \boldsymbol{\theta}))\right],
\end{equation}
which is needed when computing the CAVI optimal updates (see Appendix \ref{app-optimal-mf-dist}).

Recall from~\eqref{eq:prior_theta} that $\boldsymbol{\theta}$ has the following prior distribution
\begin{equation*} 
\boldsymbol{\theta} \sim \mathcal{N}( \boldsymbol{0}, \mathbf{I}_m),
\end{equation*}
where $\mathbf{I}_{m}$ is the $m\times m$ identity matrix. Moreover, recall from Section~\ref{sec-local-linearization-bnn} that we approximate the network output $g(t,\mathbf{x};\boldsymbol{\theta})$ around some reference $\boldsymbol{\theta}^\star$ by its first-order linearization
\begin{equation*}
g^{\text{lin}}(t,\mathbf{x};\boldsymbol{\theta}) :=g(t,\mathbf{x};\boldsymbol{\theta}^\star) + \mathbf{J}_{\boldsymbol{\theta}^\star}(t,\mathbf{x})^\top(\boldsymbol{\theta}-\boldsymbol{\theta}^\star),
\end{equation*}
where $\mathbf{J}_{\boldsymbol{\theta}^\star}(t,\mathbf{x})$ denotes the Jacobian of $g(t,\mathbf{x};\boldsymbol{\theta})$ with respect to  $\boldsymbol{\theta}$. Because $\boldsymbol{\theta}$ is Gaussian, the linearized output is also Gaussian:
\begin{equation*} 
g^{\text{lin}}(t,\mathbf{x};\boldsymbol{\theta}) \sim \mathcal{N}\left(g(t,\mathbf{x};\boldsymbol{\theta}^\star) - \mathbf{J}_{\boldsymbol{\theta}^\star}(t,\mathbf{x})^\top\boldsymbol{\theta}^\star, \norm{\mathbf{J}_{\boldsymbol{\theta}^\star}(t,\mathbf{x})}^2\right).
\end{equation*}

In order to approximate $Z(t,\mathbf{x})$ we wish to leverage a well-known asymptotic approximation. Specifically, for a normal random variable $X\sim \mathcal{N}(\mu, \sigma^2)$ it holds that
\begin{equation}\label{eq:expectation_sigmoid_normal}
    \mathbb{E}_{X\sim \mathcal{N}(\mu, \sigma^2)}[\sigma(X)] \approx \sigma\left(\frac{\mu}{\sqrt{1 + \frac{\pi}{8} \sigma^2}}\right).
\end{equation}
We can apply the result in~\eqref{eq:expectation_sigmoid_normal} to the normal random variable $g^{\text{lin}}(t, \mathbf{x}; \boldsymbol{\theta})$ and approximate $Z(t, \mathbf{x})$ as
\begin{equation*}
    Z(t, \mathbf{x}) \approx \sigma\left(\frac{g(t,\mathbf{x};\boldsymbol{\theta}^{\star}) - \mathbf{J}_{\boldsymbol{\theta}^{\star}}(t,\mathbf{x})^\top\boldsymbol{\theta}^{\star}}{\sqrt{1 + \frac{\pi}{8} \norm{\mathbf{J}_{\boldsymbol{\theta}^{\star}}(t,\mathbf{x})}^2}}\right).
\end{equation*}
Since in~\eqref{eq-z-expectation-glin} we are taking the expectation under the prior $p_{\boldsymbol{\theta}}(\boldsymbol{\theta})$, it is natural to linearize around the prior mean, therefore, we set $\boldsymbol{\theta}^\star = \boldsymbol{0}$.

\clearpage
\newpage
\section{Combining Variational Inference with Poisson Processes}
\label{app-vi-poisson-process}
In this appendix, we outline how our variational‐inference framework integrates marked Poisson processes --- an essential part in the mean‐field variational approximation of Section \ref{sec-variational-mf-approx}. For a fully rigorous, measure‐theoretic treatment, the reader is referred to Brémaud’s text~\cite{Bremaud-Point-Process}. Our development relies in particular on Theorem T10 in Chapter VIII of that book, which shows how the law of a marked Poisson process arises via a change of measure using the appropriate Radon–Nikodym derivative.

We begin by fixing a reference measure on path space:
\begin{definition}[Reference measure $\mathbb{P}_{\boldsymbol{\Psi},*}$]
\label{def-reference-poisson-process-measure}
Let $\boldsymbol{\Psi} = (\Psi_1,\dots,\Psi_N)$ be $N$ independent marked Poisson processes, where each $\Psi_i$ is defined on the product space $[0,y_i]\times\mathbb{R}_+$.  We define  $\mathbb{P}_{\boldsymbol{\Psi},*}$ to be their joint law where each $\Psi_i$ has intensity
\begin{equation}
\label{eq-intensity-star}
\lambda_{*,i}(t,\omega) \;=\; t^{\rho-1} p_{\mathrm{PG}}(\omega \mid 1,0)
\quad\text{for all }(t,\omega)\in[0,y_i]\times\mathbb{R}_+.
\end{equation}
\end{definition}
Next, let  $\gamma_{i}^{\mathbb{Q}}(t)$ be a deterministic function on $[0,y_{i}]$ and let $h_{i}^{\mathbb{Q}}(t,\omega)$ be a deterministic density on $[0,y_{i}]\times\mathbb{R}_{+}$ satisfying
\begin{equation}
\label{eq-integrability-condition-intensities}
\int_{0}^{\infty}h_{i}^{\mathbb{Q}}(t,\omega)p_{\text{PG}}(\omega\vert 1,0) \mathrm{d}\omega = 1 \quad \quad \text{and}\quad 
\int_{0}^{y_{i}} \gamma_{i}^{\mathbb{Q}}(t) t^{\rho-1}\mathrm{d}t < \infty 
\end{equation}
for all $t\in[0,y_{i}]$ and  $i = 1,\ldots,N$. It is convenient to introduce the function 
\begin{equation*}
\lambda_{i}^{\mathbb{Q}}(t,\omega) := \gamma_{i}^{\mathbb{Q}}(t)h_{i}^{\mathbb{Q}}(t,\omega)\lambda_{*,i}(t,\omega)\quad \text{for all }(t,\omega)\in[0,y_i]\times\mathbb{R}_+,
\end{equation*}
as well as the functional
\begin{equation*}
L(\boldsymbol{\Psi}) := \prod_{i=1}^{N}\left(\prod_{(t,\omega)_{j}\in\Psi_{i}}\gamma_{i}^{\mathbb{Q}}(t_{j}) h_{i}^{\mathbb{Q}}(t_{j}, \omega_{j})\right) \exp\left(\int_{0}^{y_{i}}\int_{0}^{\infty} \left(\lambda_{*,i}(t,\omega)-  \lambda^{\mathbb{Q}}_{i}(t, \omega)\right)\mathrm{d}\omega\mathrm{d}t\right).
\end{equation*}
By Theorem T10.b~\cite[Chapter VIII]{Bremaud-Point-Process}, whenever $\mathbb{E}_{\boldsymbol{\Psi}\sim\mathbb{P}_{\boldsymbol{\Psi},*}}[L(\boldsymbol{\Psi})]=1$, the measure $\mathbb{Q}_{\boldsymbol{\Psi}}(\boldsymbol{\Psi})$ defined by $\frac{\mathrm{d}\mathbb{Q}_{\boldsymbol{\Psi}}}{\mathrm{d}\mathbb{P}_{\boldsymbol{\Psi},*}}(\boldsymbol{\Psi})=L(\boldsymbol{\Psi})$ is exactly the law under which each $\Psi_{i}$ is a marked Poisson process on $[0,y_{i}]\times\mathbb{R}_{+}$ with intensity $\lambda^{\mathbb{Q}}_{i}(t,\omega)$. The above result underpins the analysis in Appendix~\ref{sec:optimal_update_pi}, where we show that the optimal variational measure $\mathbb{Q}_{\boldsymbol{\Psi}}$ coincides with the law of a collection of independent marked Poisson processes.

Finally, the measure $\mathbb{P}_{\boldsymbol{\Psi}\vert \phi}$ also admits a Radon-Nykodim derivative with respect to $\mathbb{P}_{\boldsymbol{\Psi},*}$ which is given by :
\begin{equation}\label{eq:rn_derivative_P}
\frac{\mathrm{d}\mathbb{P}_{\boldsymbol{\Psi}\vert \phi}}{\mathrm{d}\mathbb{P}_{\boldsymbol{\Psi},*}}(\boldsymbol{\Psi}) =  \prod_{i=1}^{N}\left(\prod_{(t,\omega)_{j}\in\Psi_{i}}\frac{\phi}{Z(t_{j}, \mathbf{x}_i)}\right) \exp\left(\int_{0}^{y_{i}}\int_{0}^{\infty}    \left(\lambda_{*,i}(t,\omega)-\lambda_{i}(t, \omega;\phi)\right)\mathrm{d}\omega\mathrm{d}t\right).
\end{equation}
Notice that $\frac{\phi}{Z(t_{j}, \mathbf{x}_i)} = \frac{\lambda_{i}(t_{j}, \omega_{j};\phi)}{\lambda_{*,i}(t_{j},\omega_{j})}$, i.e. the ratio of the intensities of $\mathbb{P}_{\boldsymbol{\Psi}\vert \phi}$ and $\mathbb{P}_{\boldsymbol{\Psi},*}$.

\clearpage
\newpage
\section{Obtaining the Maximum a Posteriori \texorpdfstring{$\boldsymbol{\theta}_{\text{MAP}}$}{thetaMAP}}
\label{app-obtaining-map}
We seek the maximum a posteriori (MAP) estimates
\begin{equation*} \label{eq:MAP_def}
(\boldsymbol{\theta}_{\text{MAP}}, \phi_{\text{MAP}}) = \arg\max_{\boldsymbol{\theta}, \phi} \log p(\boldsymbol{\theta}, \phi \mid \mathcal{D}, \mathbf{X}).
\end{equation*}
Applying Bayes' rule gives the following expression for the posterior density
\begin{equation*} \label{eq:posterior_decomp}
\log p(\boldsymbol{\theta}, \phi \mid \mathcal{D}, \mathbf{X}) \propto \log p(\mathcal{D} \mid \mathbf{X}, g(\cdot;\boldsymbol{\theta}), \phi) + \log p_{\boldsymbol{\theta}}(\boldsymbol{\theta}) + \log p_{\phi}(\phi),
\end{equation*}
where the likelihood density $p(\mathcal{D} \mid \mathbf{X}, g(\cdot;\boldsymbol{\theta}), \phi)$ and the prior densities $p_{\boldsymbol{\theta}}(\boldsymbol{\theta})$ and $p_{\phi}(\phi)$ are specified in Equations \eqref{eq:likelihood}, \eqref{eq:prior_theta}, and \eqref{eq:prior_phi}, respectively. 
Since the log likelihood distribution is intractable, direct optimization of the posterior distribution is infeasible.

\subsection{Approximating the Log Likelihood distribution}
\paragraph{Variational Mean–Field Approximation. }
Our aim is to approximate the log likelihood density $\log p(\mathcal{D} \mid \mathbf{X}, g(\cdot;\boldsymbol{\theta}), \phi)$. In order to do so, we introduce a variational distribution $\breve{\mathbb{Q}}(\boldsymbol{\omega}, \boldsymbol{\Psi} \mid \boldsymbol{\theta}, \phi)$ to approximate the true distribution $\mathbb{P}(\boldsymbol{\omega}, \boldsymbol{\Psi} \mid \mathcal{D}, \mathbf{X}, g(\cdot;\boldsymbol{\theta}), \phi)$.
Such variational distribution differs from the one used for full-model inference in Section~\ref{sec-inference} because it is conditioned on the values of $\boldsymbol{\theta}$ and $\phi$. Hence, we adopt the notation $\breve{\mathbb{Q}}$ (instead of $\mathbb{Q}$) to highlight this difference. We restrict our search to distributions that satisfy the following \emph{mean-field} factorization:
\begin{equation*}
\label{eq-mean-field-factorization}
\breve{\mathbb{Q}}(\boldsymbol{\omega}, \boldsymbol{\Psi}\mid \boldsymbol{\theta}, \phi) = \breve{\mathbb{Q}}_{\boldsymbol{\omega}\mid \boldsymbol{\theta}, \phi}(\boldsymbol{\omega}\mid \boldsymbol{\theta}, \phi) \times \breve{\mathbb{Q}}_{\boldsymbol{\Psi}\mid \boldsymbol{\theta}, \phi}(\boldsymbol{\Psi}\mid \boldsymbol{\theta}, \phi) .
\end{equation*}
Here, we take $\breve{\mathbb{Q}}_{\boldsymbol{\omega}\mid \boldsymbol{\theta}, \phi}(\boldsymbol{\omega}\mid \boldsymbol{\theta}, \phi)$  to admit the density $\breve{q}_{\boldsymbol{\omega}\mid \boldsymbol{\theta}, \phi}(\boldsymbol{\omega}\mid \boldsymbol{\theta}, \phi)$ with respect to the Lebesgue measure  $\mathrm{d}\boldsymbol{\omega}$.

For the marked point process component, we assume that the variational law  $\breve{\mathbb{Q}}_{\boldsymbol{\Psi}\mid \boldsymbol{\theta}, \phi}$ is absolutely continuous with respect to $\mathbb{P}_{\boldsymbol{\Psi},*}$,  so that it admits a strictly positive Radon–Nikodym derivative $\frac{\mathrm{d}\breve{\mathbb{Q}}_{\boldsymbol{\Psi}\mid \boldsymbol{\theta}, \phi}}{\mathrm{d}\mathbb{P}_{\boldsymbol{\Psi},*}}$ which satisfies the normalization $\mathbb{E}_{\boldsymbol{\Psi}\sim\mathbb{P}_{\boldsymbol{\Psi},*}}\left[\frac{\mathrm{d}\breve{\mathbb{Q}}_{\boldsymbol{\Psi}\mid \boldsymbol{\theta}, \phi}}{\mathrm{d}\mathbb{P}_{\boldsymbol{\Psi},*}}(\boldsymbol{\Psi})\right]=1$. These two conditions guarantee that $\breve{\mathbb{Q}}_{\boldsymbol{\Psi}\mid \boldsymbol{\theta}, \phi}$ is indeed a probability measure on the space of marked point-process paths.

We decompose the log-likelihood as follows:
\begin{multline} \label{eq:kl_decomposition}
\log p(\mathcal{D} \mid \mathbf{X}, g(\cdot;\boldsymbol{\theta}), \phi) \
= \\ D_{\text{KL}}\Big(\breve{\mathbb{Q}}_{\boldsymbol{\omega}, \boldsymbol{\Psi} \mid \boldsymbol{\theta}, \phi}(\boldsymbol{\omega}, \boldsymbol{\Psi} \mid \boldsymbol{\theta}, \phi) \: \vert\vert \: \mathbb{P}(\boldsymbol{\omega}, \boldsymbol{\Psi} \mid \mathcal{D}, \mathbf{X}, g(\cdot;\boldsymbol{\theta}), \phi)\Big) + \breve{\mathcal{L}}_{\text{ELBO}},
\end{multline}
where the ELBO is given by:
\begin{equation}
\label{eq-elbo-MAP}
\breve{\mathcal{L}}_{\text{ELBO}} :=\mathbb{E}_{\boldsymbol{\omega}\sim \breve{q}_{\boldsymbol{\omega}\mid \boldsymbol{\theta}, \phi}, \boldsymbol{\Psi}\sim\breve{\mathbb{Q}}_{ \boldsymbol{\Psi} \mid \boldsymbol{\theta}, \phi}}\left[\log \frac{p(\mathcal{D} \mid \mathbf{X}, g(\cdot;\boldsymbol{\theta}), \phi, \boldsymbol{\omega}, \boldsymbol{\Psi}) \:p_{\boldsymbol{\omega}}(\boldsymbol{\omega})\:\frac{\mathrm{d}\mathbb{P}_{\boldsymbol{\Psi}\mid \phi}}{\mathrm{d}\mathbb{P}_{\boldsymbol{\Psi},*}}(\boldsymbol{\Psi})}
{\breve{q}_{\boldsymbol{\omega} \mid \boldsymbol{\theta}, \phi}(\boldsymbol{\omega}\mid\boldsymbol{\theta}, \phi) \:\frac{\mathrm{d}\breve{\mathbb{Q}}_{\boldsymbol{\Psi}\mid\boldsymbol{\theta}, \phi}}{\mathrm{d}\mathbb{P}_{\boldsymbol{\Psi},*}}(\boldsymbol{\Psi}\mid\boldsymbol{\theta}, \phi)}  \right],
\end{equation}
and where $\frac{\mathrm{d}\mathbb{P}_{\boldsymbol{\Psi}\mid \phi}}{\mathrm{d}\mathbb{P}_{\boldsymbol{\Psi},*}}$ is the Radon-Nykodim derivative of the true conditional law $\mathbb{P}_{\boldsymbol{\Psi}\mid\phi}$ with respect to $\mathbb{P}_{\boldsymbol{\Psi},*}$, cf. ~\eqref{eq:rn_derivative_P}.

\paragraph{Minimizing the KL Divergence.} 

When the variational distribution $\breve{\mathbb{Q}}(\boldsymbol{\omega}, \boldsymbol{\Psi} \mid \boldsymbol{\theta}, \phi)$ matches the true posterior  $\mathbb{P}(\boldsymbol{\omega}, \boldsymbol{\Psi} \mid \mathcal{D}, \mathbf{X}, g(\cdot;\boldsymbol{\theta}), \phi)$, the KL divergence term in \eqref{eq:kl_decomposition} vanishes.
Consequently, the ELBO becomes equal to the marginal log-likelihood, and maximizing the ELBO is equivalent to maximizing log-likelihood directly. In practice, we minimize the KL divergence so that our ELBO provides the closest possible lower bound to the true log-likelihood. Therefore, in order to obtain the closest lower bound to to the log-likelihood $\log p(\mathcal{D} \mid \mathbf{X}, g(\cdot;\boldsymbol{\theta}), \phi)$ we must find the distribution $\breve{\mathbb{Q}}(\boldsymbol{\omega}, \boldsymbol{\Psi}\mid \boldsymbol{\theta}, \phi)$ which minimizes the KL divergence in~\eqref{eq:kl_decomposition}. 

Using standard mean-field variational inference techniques (see, e.g., Chapter 10.1 of~\cite{bishop_pattern_recognition}), the optimal distribution for the latent variables $\boldsymbol{\omega}$ given $(\boldsymbol{\theta}^{(\ell)}, \phi^{(\ell)})$ is obtained by computing the expectation of the joint log-density with respect to the other variational factors, that is 
\begin{align*}
    &\log \breve{q}_{\boldsymbol{\omega}\mid \boldsymbol{\theta}, \phi}(\boldsymbol{\omega}) =  \\ 
    &\:\:\mathbb{E}_{ \boldsymbol{\Psi}\sim\breve{\mathbb{Q}}_{ \boldsymbol{\Psi} \mid \boldsymbol{\theta}^{(\ell)}, \phi^{(\ell)}}}\left[\log p(\mathcal{D} \mid \mathbf{X}, g(\cdot;\boldsymbol{\theta}^{(\ell)}), \phi^{(\ell)}, \boldsymbol{\omega}, \boldsymbol{\Psi}) + \log p_{\boldsymbol{\omega}}(\boldsymbol{\omega}) + \log \frac{\mathrm{d}\mathbb{P}_{\boldsymbol{\Psi}\mid \phi^{(\ell)}}}{\mathrm{d}\mathbb{P}_{\boldsymbol{\Psi}, *}}(\boldsymbol{\Psi})\right] + \text{const.}\nonumber
\end{align*}
A similar update applies for $\boldsymbol{\Psi}$ given $(\boldsymbol{\theta}^{(\ell)}, \phi^{(\ell)})$, 
\begin{align*}
    &\log \frac{\mathrm{d}\breve{\mathbb{Q}}_{\boldsymbol{\Psi}\mid\boldsymbol{\theta}, \phi}}{\mathrm{d}\mathbb{P}_{\boldsymbol{\Psi},*}}(\boldsymbol{\Psi}) = \\
    &\:\:\mathbb{E}_{\boldsymbol{\omega} \sim \breve{q}_{\boldsymbol{\omega}\mid \boldsymbol{\theta}, \phi}}\left[\log p(\mathcal{D} \mid \mathbf{X}, g(\cdot;\boldsymbol{\theta}^{(\ell)}), \phi^{(\ell)}, \boldsymbol{\omega}, \boldsymbol{\Psi}) + \log p_{\boldsymbol{\omega}}(\boldsymbol{\omega}) + \log \frac{\mathrm{d}\mathbb{P}_{\boldsymbol{\Psi}\mid \phi^{(\ell)}}}{\mathrm{d}\mathbb{P}_{\boldsymbol{\Psi}, *}}(\boldsymbol{\Psi})\right] + \text{const.}
\end{align*}

Following the same derivation as in Appendix~\ref{sec:optimal_update_omega}, we find the optimal variational distribution of $\boldsymbol{\omega}$ given $(\boldsymbol{\theta}, \phi)^{(\ell)}$:
\begin{equation}\label{eq:optimal_vi_omega_map}
    \breve{q}_{\boldsymbol{\omega} \mid \boldsymbol{\theta}^{(\ell)}, \phi^{(\ell)}}(\boldsymbol{\omega} \mid \boldsymbol{\theta}^{(\ell)}, \phi^{(\ell)})=\prod_{i=1}^N 
    \breve{q}_{\omega_i \mid \boldsymbol{\theta}^{(\ell)}, \phi^{(\ell)}}(\omega_i \mid \boldsymbol{\theta}^{(\ell)}, \phi^{(\ell)})
    =\prod_{i=1}^N p_{\text{PG}}\left(\omega_i\mid 1, \breve{c}^{(\ell)}_i\right),
\end{equation}
where 
\begin{equation}\label{eq:optimal_em_update_omega_params}
    \breve{c}^{(\ell)}_i = \delta_i \: |g(y_i, \mathbf{x}_i;\boldsymbol{\theta}^{(\ell)})| .
\end{equation}

By mirroring the derivation in Appendix~\ref{sec:optimal_update_pi}, one shows that the optimal measure $\breve{\mathbb{Q}}_{\boldsymbol{\Psi}\mid \boldsymbol{\theta}, \phi}(\boldsymbol{\Psi}\mid \boldsymbol{\theta}, \phi)$ is exactly the law under which each $\Psi_i$, for $i=1,\ldots,N$, is a marked Poisson process on $[0,y_{i}]\times\mathbb{R}_{+}$ with
intensity 
\begin{equation*}
\lambda^{\breve{\mathbb{Q}}}_{i}(t,\omega\mid \boldsymbol{\theta}^{(\ell)}, \phi^{(\ell)})
    =  \lambda^{\breve{\mathbb{Q}}}_{i}(t\mid \boldsymbol{\theta}^{(\ell)}, \phi^{(\ell)}) p_{\text{PG}}\left(\omega\mid 1,|g(t, \mathbf{x}_i;\boldsymbol{\theta}^{(\ell)})| \right),
\end{equation*}
where we set
\begin{equation}\label{eq:optimal_update_pi_map_marginal_lambda}
\lambda^{\breve{\mathbb{Q}}}_{i}(t\mid \boldsymbol{\theta}^{(\ell)}, \phi^{(\ell)}) := \frac{t^{\rho-1}}{Z(t, \mathbf{x})}\phi^{(\ell)}\sigma( |g(t, \mathbf{x}_i;\boldsymbol{\theta}^{(\ell)})| )\exp\left(-\frac{g(t, \mathbf{x}_i;\boldsymbol{\theta}^{(\ell)})+|g(t, \mathbf{x}_i;\boldsymbol{\theta}^{(\ell)})|}{2}\right).
\end{equation}

\subsection{EM Algorithm for MAP Estimation}
\paragraph{EM Algorithm.}
The Expectation-Maximization (EM) algorithm provides an efficient framework to iteratively maximize the Q-function.  At each iteration $\ell=0,1,2,\dots$, we perform the following three steps:

\begin{enumerate}
\item \textit{Latent Variables Update.} Given the current estimates $(\boldsymbol{\theta}, \phi)^{(\ell)}$, update $
    \breve{q}_{\boldsymbol{\omega} \mid \boldsymbol{\theta}^{(\ell)}, \phi^{(\ell)}}(\boldsymbol{\omega} \mid \boldsymbol{\theta}^{(\ell)}, \phi^{(\ell)}) $ and $\breve{\mathbb{Q}}_{\boldsymbol{\Psi}\mid \boldsymbol{\theta}^{(\ell)}, \phi^{(\ell)}}(\boldsymbol{\Psi}\mid \boldsymbol{\theta^{(\ell)}}, \phi^{(\ell)})$ 
    according to~\eqref{eq:optimal_vi_omega_map} and ~\eqref{eq:optimal_update_pi_map_marginal_lambda}, respectively.
\item \textit{E-Step.} Given current estimates $(\boldsymbol{\theta}, \phi)^{(\ell)}$, compute the Q-function:
\begin{multline}\label{eq:q_function}
Q((\boldsymbol{\theta}, \phi)| (\boldsymbol{\theta}, \phi)^{(\ell)}) =\\  \mathbb{E}_{\boldsymbol{\omega}\sim \breve{q}_{\boldsymbol{\omega}\mid \boldsymbol{\theta}^{(\ell)}, \phi^{(\ell)}}, \boldsymbol{\Psi}\sim\breve{\mathbb{Q}}_{ \boldsymbol{\Psi} \mid \boldsymbol{\theta}^{(\ell)}, \phi^{(\ell)}}}\left[ \log \left(p(\mathcal{D} \mid \mathbf{X}, g(\cdot;\boldsymbol{\theta}), \phi, \boldsymbol{\omega}, \boldsymbol{\Psi})  \: p_{\boldsymbol{\omega}}(\boldsymbol{\omega}) \: \frac{\mathrm{d}\mathbb{P}_{\boldsymbol{\Psi}\mid \phi}}{\mathrm{d}\mathbb{P}_{\boldsymbol{\Psi}, *}}(\boldsymbol{\Psi})\right)\right]  
\\ + \log p_{\boldsymbol{\theta}}(\boldsymbol{\theta}) + \log p_{\phi}(\phi) .
\end{multline}
Note that the entropy term of the ELBO (i.e., the denominator) is not included as it does not depend on the parameters $(\boldsymbol{\theta}, \phi)$ but on the current estimates $(\boldsymbol{\theta}, \phi)^{(\ell)}$, hence it is irrelevant to the parameters' optimization. 

\item \textit{M-Step.} Update the parameters by maximizing the Q-function:
\begin{equation*} \label{eq:M_step}
(\boldsymbol{\theta}, \phi)^{(\ell +1)} = \arg\max_{\boldsymbol{\theta}, \phi}  Q((\boldsymbol{\theta}, \phi)| (\boldsymbol{\theta}, \phi)^{(\ell)}).
\end{equation*}

\end{enumerate}

Steps 1-3 are repeated until a given convergence criterion is met. We provide an algorithmic description of our EM algorithm in Algorithm~\ref{alg:em_map}.

\begin{algorithm}[tb]
\caption{Expectation-Maximization (EM) for maximum a posteriori (MAP) Estimation}
\label{alg:em_map}
\begin{algorithmic}[1]
  \STATE \text{Initialize:} Set initial value for $(\boldsymbol{\theta}^{(\ell)},\,\phi^{(\ell)})$.
  \STATE \textbf{Set:} iteration counter $\ell \gets 0$
  \REPEAT
  \STATE $\ell \gets \ell+1$
    \STATE \textbf{Latent Variables Update:} 
    \STATE \textbf{Update} $\breve{q}^{(\ell)}_{\boldsymbol{\omega}}$:  
    \STATE\hspace{1em} Update: $\left\{\breve{c}_i^{(\ell)}\right\}_{i = 1}^N$ given $\boldsymbol{\theta}^{(\ell)}$ following~\eqref{eq:optimal_em_update_omega_params}.  

    \STATE \textbf{Update} $\breve{\mathbb{Q}}^{(\ell)}_{\boldsymbol{\Psi}}$: 
    \STATE\hspace{1em} Update: $\left\{\lambda^{\breve{\mathbb{Q}}, (\ell)}_i(\cdot)\right\}_{i = 1}^N$ given $\boldsymbol{\theta}^{(\ell)}$ and $\phi^{(\ell)}$  following~\eqref{eq:variational_distribution_Pi_lambda_tilde}.
    
    \STATE \textbf{E-step:} Evaluate the Q-function $Q\bigl((\boldsymbol{\theta},\phi)\mid(\boldsymbol{\theta},\phi)^{(\ell)}\bigr)$ given $\left\{\breve{c}_i^{(\ell)}, \lambda^{\breve{\mathbb{Q}}, (\ell)}_i(\cdot)\right\}_{i = 1}^N$, $\boldsymbol{\theta}^{\ell}$ and $\phi^{\ell}$ following~\eqref{eq:q_function}

    \STATE \textbf{M-step:} Update parameters by
    \[
      (\boldsymbol{\theta},\phi)^{(\ell+1)}
      = \arg\max_{\boldsymbol{\theta},\,\phi}\,
      Q\bigl((\boldsymbol{\theta},\phi)\mid(\boldsymbol{\theta},\phi)^{(\ell)}\bigr)
    \]
  \UNTIL{Convergence criterion is met}
  \STATE \textbf{return} $(\boldsymbol{\theta}^{(\ell)},\,\phi^{(\ell)})$
\end{algorithmic}
\label{algo:map_algorithm}
\end{algorithm}

\clearpage
\newpage
\paragraph{Computing the Q-function.}
The optimal distributions which minimize the KL divergence can now be plugged in the ELBO of~\eqref{eq-elbo-MAP} to obtain the closest lower bound to the log-likelihood. We now recast the MAP optimization problem in term of this lower bound. Specifically, define the following Q-function
\begin{equation*}
\begin{aligned}
Q((\boldsymbol{\theta}, \phi)| (\boldsymbol{\theta}, \phi)^{(\ell)}) &=  \mathbb{E}_{\boldsymbol{\omega}\sim \breve{q}_{\boldsymbol{\omega}\mid \boldsymbol{\theta}^{(\ell)}, \phi^{(\ell)}}, \boldsymbol{\Psi}\sim\breve{\mathbb{Q}}_{ \boldsymbol{\Psi} \mid \boldsymbol{\theta}^{(\ell)}, \phi^{(\ell)}}}\left[ \log p(\mathcal{D} \mid \mathbf{X}, g(\cdot;\boldsymbol{\theta}), \phi, \boldsymbol{\omega}, \boldsymbol{\Psi})  \right]  
\\ &+ \mathbb{E}_{\boldsymbol{\omega}\sim \breve{q}_{\boldsymbol{\omega}\mid \boldsymbol{\theta}^{(\ell)}, \phi^{(\ell)}}, \boldsymbol{\Psi}\sim\breve{\mathbb{Q}}_{ \boldsymbol{\Psi} \mid \boldsymbol{\theta}^{(\ell)}, \phi^{(\ell)}}}\left[\log (p_{\boldsymbol{\omega}}(\boldsymbol{\omega}) + \log \left( \frac{\mathrm{d}\mathbb{P}_{\boldsymbol{\Psi}\mid \phi}}{\mathrm{d}\mathbb{P}_{\boldsymbol{\Psi}, *}}(\boldsymbol{\Psi})\right)\right]\\ &+ \log p_{\boldsymbol{\theta}}(\boldsymbol{\theta})+ \log p_{\phi}(\phi) .
\end{aligned}
\end{equation*}
We now wish to derive a closed-form expression for the Q-function which can be used in the MAP optimization. Specifically, using the augmented likelihood factorization in~\eqref{eq:full_augmented_likelihood}, we obtain
\begin{multline*}
    Q((\boldsymbol{\theta}, \phi)| (\boldsymbol{\theta}, \phi)^{(\ell)}) = \sum_{i=1}^N \mathbb{E}_{\omega_i\sim\breve{q}_{\omega_i \mid \boldsymbol{\theta}^{(\ell)}, \phi^{(\ell)}}, \Psi_i \sim \breve{\mathbb{Q}}_{\Psi_i\mid \boldsymbol{\theta}^{(\ell)}, \phi^{(\ell)}}}\left[ \log p(\mathcal{D} \mid \mathbf{X}, g(\cdot;\boldsymbol{\theta}), \phi, \omega_i, \Psi_i) \right] \\+ \mathbb{E}_{\boldsymbol{\omega} \sim\breve{q}_{\boldsymbol{\omega} \mid \boldsymbol{\theta}^{(\ell)}, \phi^{(\ell)}}}\left[\log p_{\boldsymbol{\omega}}(\boldsymbol{\omega})\right]  + \mathbb{E}_{ \boldsymbol{\Psi}\sim\breve{\mathbb{Q}}_{\boldsymbol{\Psi} \mid \boldsymbol{\theta}^{(\ell)}, \phi^{(\ell)}}}\left[\log \frac{\mathrm{d}\mathbb{P}_{\boldsymbol{\Psi}\mid \phi}}{\mathrm{d}\mathbb{P}_{\boldsymbol{\Psi}, *}}(\boldsymbol{\Psi}) \right] + \log p_{\boldsymbol{\theta}}(\boldsymbol{\theta}) + \log p_{\phi}(\phi) + \text{const.}
\end{multline*}
Next, by substituting the expression for the augmented likelihood in~\eqref{eq-augmented-likelihood}, for the priors $p_{\boldsymbol{\theta}}(\boldsymbol{\theta})$ in ~\eqref{eq:prior_theta} and $p_{\phi}(\phi)$ in ~\eqref{eq:baseline_hazard} and for the Radon–Nikodym derivative of $\mathbb{P}_{\boldsymbol{\Psi}\mid \phi}$ with respect to $\mathbb{P}_{\boldsymbol{\Psi}, *}$ from~\eqref{eq:rn_derivative_P}, we obtain
\begin{equation*}
\begin{aligned}
    Q((\boldsymbol{\theta}, \phi)| (\boldsymbol{\theta}, \phi)^{(\ell)}) &= \sum_{i=1}^N \Bigg(\delta_i\left(\log \phi + \frac{g(y_i, \mathbf{x}_i; \boldsymbol{\theta})}{2} - \frac{g(y_i, \mathbf{x}_i; \boldsymbol{\theta})^2}{2}\mathbb{E}_{\omega_i\sim\breve{q}_{\omega_i \mid \boldsymbol{\theta}^{(\ell)}, \phi^{(\ell)}}}\left[\omega_i\right]  \right) \\&+ \mathbb{E}_{ \Psi_i\sim\breve{\mathbb{Q}}_{\Psi_i \mid \boldsymbol{\theta}^{(\ell)}, \phi^{(\ell)}}}\left[\sum_{(t,\omega)_j\in\Psi_i} f(\omega_j, -g(t_j, \mathbf{x}_i;\boldsymbol{\theta}))\right] - \int_{0}^{y_{i}} \lambda_0(y_i, \mathbf{x}_i;\phi) \mathrm{d}t \\
    &+ \mathbb{E}_{ \Psi_i\sim\breve{\mathbb{Q}}_{\Psi_i \mid \boldsymbol{\theta}^{(\ell)}, \phi^{(\ell)}}}\left[\sum_{(t, \omega)_{j}\in\Psi_i} \log\left(\frac{\phi}{Z(t_{j},\mathbf{x}_{i})}\right)\right]\Bigg) \\ &-\frac{1}{2}\boldsymbol{\theta}^{\top}\boldsymbol{\theta} + \log(\phi)(\alpha_0-1) - \phi \beta_0  + \text{const.}
\end{aligned}
\end{equation*}
We apply Campbell's theorem (see Theorem~\ref{thm:campbell_theorem}), we substitute the expression for the baseline hazard $\lambda_i(\cdot;\phi)$ from~\eqref{eq:intensity_function_marked_pp} and we substitute the expectation using the optimal variational distribution of $\omega_i$ from~\eqref{eq:optimal_vi_omega_map}, to obtain
\begin{equation*}\label{eq:q_function_propto}
\begin{aligned}
    Q((\boldsymbol{\theta}, \phi)| (\boldsymbol{\theta}, \phi)^{(\ell)}) &= \sum_{i=1}^N \Bigg[ \delta_i\left(\frac{g(y_i, \mathbf{x}_i; \boldsymbol{\theta})}{2}  - \frac{g(y_i, \mathbf{x}_i; \boldsymbol{\theta})^2}{4\breve{c}^{(\ell)}_i}
    \tanh\left(\frac{\breve{c}^{(\ell)}_i}{2}
    \right)\right) \\ &-\frac{1}{2} \int_0^{y_i} g(t, \mathbf{x}_i;\boldsymbol{\theta}) \lambda^{\breve{\mathbb{Q}}}_{i}(t\mid \boldsymbol{\theta}^{(\ell)}, \phi^{(\ell)}) \mathrm{d}t \\&-\frac{1}{4} \int_0^{y_i}\frac{g(t, \mathbf{x}_i;\boldsymbol{\theta})^2 }{|g(t, \mathbf{x}_i;\boldsymbol{\theta}^{(\ell)})|} \text{tanh}\left(\frac{|g(t, \mathbf{x}_i;\boldsymbol{\theta}^{(\ell)})|}{2}\right) \lambda^{\breve{\mathbb{Q}}}_{i}(t\mid \boldsymbol{\theta}^{(\ell)}, \phi^{(\ell)}) \mathrm{d}t \Bigg]\\&-\frac{1}{2}\boldsymbol{\theta}^{\top}\boldsymbol{\theta} + \log(\phi)\left(\alpha_0 +\sum_{i = 1}^N \left(\delta_i + \int_0^{y_i}  \lambda^{\breve{\mathbb{Q}}}_{i}(t\mid \boldsymbol{\theta}^{(\ell)}, \phi^{(\ell)}) \mathrm{d}t \right) -1\right) \\ &- \phi \left(\beta_0 + \sum_{i = 1}^N \int_0^{y_i}  \frac{t^{\rho-1}}{Z(t, \mathbf{x}_i)} \mathrm{d}t\right)  + \text{const.},
\end{aligned}
\end{equation*}
where $\lambda^{\breve{\mathbb{Q}}}_{i}(t\mid \boldsymbol{\theta}^{(\ell)}, \phi^{(\ell)})$ is shown in~\eqref{eq:optimal_update_pi_map_marginal_lambda}.

\clearpage
\newpage
\section{Coordinate Ascent Variational Inference Optimal Updates}
\label{app-optimal-mf-dist}
In this Appendix we present a heuristic derivation of the CAVI optimal updates presented in Section~\ref{sec:coordinate_ascent_variational_inference}. Before presenting the next results, we define here for convenience
\begin{equation*}\label{eq:definition_m_and_s}
\tilde{m}^{(k)}_i(t) := \mathbb{E}_{\boldsymbol{\theta} \sim q_{\boldsymbol{\theta}}^{(k)}}\left[ g^{\text{lin}}(t, \mathbf{x}_i;\boldsymbol{\theta})\right], \quad
\tilde{s}^{(k)}_i(t) :=\sqrt{\mathbb{E}_{\boldsymbol{\theta} \sim q_{\boldsymbol{\theta}}^{(k)}}\left[ g^{\text{lin}}(t, \mathbf{x}_i;\boldsymbol{\theta})^2\right]}
\end{equation*}
for $k\geq0$.

%
%
%
%

\subsection{Optimal Update for \texorpdfstring{$\protect\boldsymbol{\omega}$}{omega}} \label{sec:optimal_update_omega}
Using standard mean-field variational inference techniques (see, e.g., Chapter 10.1 of \cite{bishop_pattern_recognition}), the optimal update for the latent variables $\boldsymbol{\omega}$ is obtained by computing the expectation of the joint log-density with respect to the other variational factors. In particular, we have
\begin{equation*}
    \log q_{\boldsymbol{\omega}}^{(k)}(\boldsymbol{\omega}) = \mathbb{E}_{\phi\sim q^{(k-1)}_{\phi}, \boldsymbol{\theta} \sim q^{(k-1)}_{\boldsymbol{\theta}}, \boldsymbol{\Psi} \sim \mathbb{Q}^{(k-1)}_{\boldsymbol{\Psi}}}\Big[\log p\left(\mathcal{D}\mid\phi, g^{\text{lin}}(\cdot;\boldsymbol{\theta}), \boldsymbol{\omega},\boldsymbol{\Psi}\right) \Big] + \log p_{\boldsymbol{\omega}}(\boldsymbol{\omega}) + \text{const}.
\end{equation*}
Using the augmented likelihood factorization in~\eqref{eq:full_augmented_likelihood}, the expression decomposes as
\begin{multline*}
\log q^{(k)}_{\boldsymbol{\omega}}(\boldsymbol{\omega}) =  \sum_{i = 1}^N \mathbb{E}_{\phi\sim q^{(k-1)}_{\phi}, \boldsymbol{\theta} \sim q^{(k-1)}_{\boldsymbol{\theta}}, \Psi_{i} \sim \mathbb{Q}^{(k-1)}_{\Psi_{i}}}\Big[\log p\left(y_i, \delta_i \mid \mathbf{x}_i,\phi, g^{\text{lin}}(\cdot; \boldsymbol{\theta}), \omega_i,\Psi_{i}\right) \Big] \\+ \log p_{\boldsymbol{\omega}}(\boldsymbol{\omega}) + \text{const}.
\end{multline*}
Next, by substituting the expression for the prior $p_{\boldsymbol{\omega}}(\boldsymbol{\omega})$ from \eqref{eq:latent_omega_density} and the augmented likelihood from \eqref{eq-augmented-likelihood}, we obtain
\begin{equation*}
\log q^{(k)}_{\boldsymbol{\omega}}(\boldsymbol{\omega}) 
= \sum_{i = 1}^N \left(- \frac{\omega_i\delta_i}{2}\left(\tilde{s}_i^{(k-1)}(y_i)\right)^2 + \log  p_{\text{PG}}(\omega_i\vert 1, 0)  \right)+ \text{const}.
\end{equation*}
Finally, by applying the identity in \eqref{eq-ratio-pg-densities}, we deduce that the optimal variational distribution factorizes as
\begin{align*}
q^{(k)}_{\boldsymbol{\omega}}(\boldsymbol{\omega})=\prod_{i=1}^N  q^{(k)}_{\omega_i}(\omega_i)=\prod_{i=1}^N p_{\text{PG}}\left(\omega_i\mid 1, \tilde{c}^{(k)}_i\right),
\end{align*}
where 
\begin{equation}\label{eq:optimal_variational_update_omega_params}
    \tilde{c}^{(k)}_i = \delta_i \:\tilde{s}_i^{(k-1)}(y_i)
\end{equation}

\paragraph{Optimal Variational Expectations for $\boldsymbol{\omega}$.}
From Proposition~\ref{prop:moment_generating_function_polya_gamma}, we obtain the required expectation for updating the other variational factors with
\begin{equation}\label{eq:optimal_variational_update_omega_expectation}
    \mathbb{E}_{\omega_i\sim q^{(k)}_{\omega_i}}[\omega_i] = \frac{1}{2\tilde{c}^{(k)}_i} \text{tanh}\left(\frac{\tilde{c}^{(k)}_i}{2}\right)
\end{equation}
for $i = 1, \ldots, N$.
Notably, since this expectation is always multiplied by $\delta_i$ when updating other variational factors, it remains well-defined in all cases.

%
%
%
%

\subsection{Optimal Update for \texorpdfstring{$\protect\boldsymbol{\Psi}$}{Pi}}
\label{sec:optimal_update_pi}
Using standard mean-field variational inference techniques (see, e.g., Chapter 10.1 of \cite{bishop_pattern_recognition}), we obtain the optimal Radon-Nykodim derivative $\frac{\mathrm{d}\mathbb{Q}_{\boldsymbol{\Psi}}}{\mathrm{d}\mathbb{P}_{\boldsymbol{\Psi},*}}$  by taking the expectation of the joint log-density with respect to the other variational factors. In particular, we have
\begin{multline}
\label{eq-first-expression-RN}
\log\frac{\mathrm{d}\mathbb{Q}_{\boldsymbol{\Psi}}^{(k)}}{\mathrm{d}\mathbb{P}_{\boldsymbol{\Psi},*}}(\boldsymbol{\Psi}) = \mathbb{E}_{\phi\sim q_{\phi}^{(k-1)},\boldsymbol{\theta}\sim q_{\boldsymbol{\theta}}^{(k-1)}, \boldsymbol{\omega}\sim q_{\boldsymbol{\omega}}^{(k)}}\left[ \log p(\mathcal{D}\mid \phi, g^{\text{lin}}(\cdot;\boldsymbol{\theta}),\boldsymbol{\omega},\boldsymbol{\Psi})\right] \\ +\mathbb{E}_{\phi\sim q_{\phi}^{(k-1)}}\left[ \log \frac{\mathrm{d}\mathbb{P}_{\boldsymbol{\Psi}\vert\phi}}{\mathrm{d}\mathbb{P}_{\boldsymbol{\Psi},*}}(\boldsymbol{\Psi})\right]  + \text{const.},
\end{multline}
where the constant term absorbs all terms irrelevant to the optimisation. Using the augmented likelihood factorization in~\eqref{eq:full_augmented_likelihood}, the expression in~\eqref{eq-first-expression-RN} decomposes as
\begin{multline*}
\log\frac{\mathrm{d}\mathbb{Q}_{\boldsymbol{\Psi}}^{(k)}}{\mathrm{d}\mathbb{P}_{\boldsymbol{\Psi},*}}(\boldsymbol{\Psi}) = \sum_{i=1}^{N} \mathbb{E}_{\phi\sim q_{\phi}^{(k-1)},\boldsymbol{\theta}\sim q_{\boldsymbol{\theta}}^{(k-1)}, \omega_i\sim q_{\omega_i}^{(k)}}\left[ \log p(y_{i},\delta_{i}\vert \phi, g^{\text{lin}}(\cdot;\boldsymbol{\theta}),\omega_{i},\Psi_{i})\right] \\ +\mathbb{E}_{\phi\sim q_{\phi}^{(k-1)}}\left[ \log \frac{\mathrm{d}\mathbb{P}_{\boldsymbol{\Psi}\vert\phi}}{\mathrm{d}\mathbb{P}_{\boldsymbol{\Psi},*}}(\boldsymbol{\Psi})\right]  + \text{const.}
\end{multline*}
Next, by substituting the augmented likelihood from \eqref{eq-augmented-likelihood} and the Radon–Nikodym derivative of $\mathbb{P}_{\boldsymbol{\Psi}\vert \phi}$ with respect to $\mathbb{P}_{\boldsymbol{\Psi},*}$ from~\eqref{eq:rn_derivative_P}, we arrive at the unnormalised form
\begin{multline}
\label{eq-unnormalized-RN-derivative}
\log\frac{\mathrm{d}\mathbb{Q}_{\boldsymbol{\Psi}}^{(k)}}{\mathrm{d}\mathbb{P}_{\boldsymbol{\Psi},*}}(\boldsymbol{\Psi}) = \sum_{i=1}^{N} \sum_{(t,\omega)_{j}\in \Psi_{i}}\mathbb{E}_{\boldsymbol{\theta}\sim q_{\boldsymbol{\theta}}^{(k-1)}}\left[ f(\omega_{j}, -g^{\text{lin}}(t_{j}, \mathbf{x}_i;\boldsymbol{\theta})) \right] \\+  \sum_{i=1}^{N} \sum_{(t,\omega)_{j}\in \Psi_{i}}\mathbb{E}_{\phi\sim q_{\phi}^{(k-1)}}\left[\log\left( \frac{\phi}{Z(t_{j},\mathbf{x}_{i})}\right) \right] + \text{const.}
\end{multline}
Plugging in the definition of $f(\cdot, \cdot)$ from~\eqref{eq-f-definition} simplifies \eqref{eq-unnormalized-RN-derivative} to
\begin{multline*}
\log\frac{\mathrm{d}\mathbb{Q}_{\boldsymbol{\Psi}}^{(k)}}{\mathrm{d}\mathbb{P}_{\boldsymbol{\Psi},*}}(\boldsymbol{\Psi}) =- \sum_{i=1}^{N} \sum_{(t,\omega)_{j}\in \Psi_{i}} 
\left[\frac{\tilde{m}^{(k)}_i(t_{j})}{2}   + \frac{(\tilde{s}^{(k)}_i(t_{j}) )^{2}}{2}\omega_{j} +\log(2)\right]
\\ +  \sum_{i=1}^{N} \sum_{(t,\omega)_{j}\in \Psi_{i}} \left[\mathbb{E}_{\phi\sim q_{\phi}^{(k-1)}}\left[\log\phi\right]-\log Z(t_{j},\mathbf{x}_{i})\right]
+\text{const.}
\end{multline*}
To express this in closed form, define for each $i=1,\dots,N$ and $(t,\omega)\in[0,y_i]\times\mathbb{R}_+$ the functions 
\begin{equation*}
\begin{aligned}
 h^{\mathbb{Q},(k)}_{i}(t,\omega) &:= \exp\left(-\frac{\left(\tilde{s}^{(k-1)}_i(t)\right)^2}{2}\omega\right) \, \cosh\left(\frac{\tilde{s}^{(k-1)}_i(t)}{2}\right), \\ 
\gamma_{i}^{\mathbb{Q},(k)}(t)&:=\frac{1}{Z(t, \mathbf{x}_i)}\sigma( \tilde{s}^{(k-1)}_i(t) )\exp\left(-\frac{\tilde{m}^{(k-1)}_i(t)+\tilde{s}^{(k-1)}_i(t)}{2}+ \mathbb{E}_{\phi\sim q^{(k-1)}_{\phi}}[\log \phi] \right), \\ 
\lambda^{\mathbb{Q}, (k)}_i(t,\omega) &:= \gamma_{i}^{\mathbb{Q},(k)}(t)  h^{\mathbb{Q},(k)}_{i}(t,\omega) \lambda_{*,i}(t,\omega),
\end{aligned}
\end{equation*}
where $\lambda_{*,i}(t,\omega)$ is the intensity defined in~\eqref{eq-intensity-star}.
Furthermore, we define for convenience,
\begin{equation}
\lambda_{i}^{\mathbb{Q},(k)}(t) :=  t^{\rho-1}\gamma_{i}^{\mathbb{Q},(k)}(t) \label{eq:variational_distribution_Pi_lambda_tilde_marginal}.
\end{equation}
Notice that by using expression~\eqref{eq-ratio-pg-densities}, the function $\lambda^{\mathbb{Q}, (k)}_i(t,\omega)$ can be written as
\begin{equation}
\label{eq:variational_distribution_Pi_lambda_tilde}
\lambda^{\mathbb{Q}, (k)}_i(t,\omega) = \lambda^{\mathbb{Q}, (k)}_i(t) \: p_{\text{PG}}\left(\omega\mid 1,\tilde{s}^{(k-1)}_i(t) \right) .
\end{equation}
Finally, enforcing the normalisation condition $$\mathbb{E}_{\boldsymbol{\Psi}\sim\mathbb{P}_{\boldsymbol{\Psi},*}}\left[\frac{\mathrm{d}\mathbb{Q}_{\boldsymbol{\Psi}}^{(k)}}{\mathrm{d}\mathbb{P}_{\boldsymbol{\Psi},*}}(\boldsymbol{\Psi})\right]=1$$ together with Campbell’s theorem (Theorem \ref{thm:campbell_theorem}) yields the normalized derivative
\begin{multline*}
\frac{\mathrm{d}\mathbb{Q}_{\boldsymbol{\Psi}}^{(k)}}{\mathrm{d}\mathbb{P}_{\boldsymbol{\Psi},*}}(\boldsymbol{\Psi}) =  \\ \prod_{i=1}^{N}\left(\prod_{(t,\omega)_{j}\in\Psi_{i}}\gamma_{i}^{\mathbb{Q},(k)}(t_{j})h^{\mathbb{Q},(k)}_{i}(t_j,\omega_j)\right) \exp\left(\int_{0}^{y_{i}}\int_{0}^{\infty} \left(\lambda_{*,i}(t,\omega)-\lambda_{i}^{\mathbb{Q},(k)}(t, \omega)\right)\mathrm{d}\omega\mathrm{d}t\right).
\end{multline*}
Notice that the products $\gamma_{i}^{\mathbb{Q},(k)}(t_{j})h^{\mathbb{Q},(k)}_{i}(t_j,\omega_j)$ are all strictly positive \footnote{See Lemma~\ref{lemma-lambda0-integrable} for a proof of the strict positivity of the normalization factor $Z(t,\mathbf{x}_{i})$.}, hence $\frac{\mathrm{d}\mathbb{Q}_{\boldsymbol{\Psi}}^{(k)}}{\mathrm{d}\mathbb{P}_{\boldsymbol{\Psi},*}}$ is also strictly positive. 
Under suitable regularity conditions on $g$, one can show that $ h^{\mathbb{Q},(k)}_{i}(t,\omega)$ and $\gamma_{i}^{\mathbb{Q},(k)}(t)$ satisfy the integrability criteria of~\eqref{eq-integrability-condition-intensities}, so that $\mathbb{Q}_{\boldsymbol{\Psi}}^{(k)}$ is the probability measure under which each $\Psi_{i}$ $(i=1,\ldots,N)$ is a marked Poisson Process on $[0,y_{i}]\times\mathbb{R}_{+}$ with intensity function $\lambda_{i}^{\mathbb{Q},(k)}(t, \omega)$.

\paragraph{Optimal Variational Expectations for $\boldsymbol{\Psi}$.}
From Proposition~\ref{prop:moment_generating_function_polya_gamma}, we obtain the required integrals for updating the other variational factors
\begin{align*}
    &\int_{\mathbb{R}_+} \lambda^{\mathbb{Q}, (k)}_i(t, \omega) \mathrm{d}\omega =\lambda^{\mathbb{Q}, (k)}_i(t), \\
    &\int_{\mathbb{R}_+} \lambda^{\mathbb{Q}, (k)}_i(t, \omega)\omega \mathrm{d}\omega = \lambda^{\mathbb{Q}, (k)}_i(t)\frac{1}{2\tilde{s}^{(k-1)}_i(t)} \text{tanh}\left(\frac{\tilde{s}^{(k-1)}_i(t)}{2}\right)   .
\end{align*}

\subsection{Optimal Update for \texorpdfstring{$\protect\phi$}{phi}} \label{sec:optimal_update_phi}
Using standard mean-field variational inference techniques (see, e.g., Chapter 10.1 of \cite{bishop_pattern_recognition}), the optimal variational factor for the parameter $\phi$ is obtained by computing the expectation of the joint log-density with respect to the other variational factors. In particular, we have
\begin{multline*}
    \log q^{(k)}_{\phi}(\phi) = \mathbb{E}_{ \boldsymbol{\theta} \sim q^{(k-1)}_{\boldsymbol{\theta}}, \boldsymbol{\omega} \sim q^{(k)}_{\boldsymbol{\omega}}, \boldsymbol{\Psi} \sim \mathbb{Q}^{(k)}_{\boldsymbol{\Psi}}}\left[\log p\left(\mathcal{D}\mid\phi, g^{\text{lin}}(\cdot;\boldsymbol{\theta}), \boldsymbol{\omega},\boldsymbol{\Psi}\right) + \log \frac{\mathrm{d}\mathbb{P}_{\boldsymbol{\Psi}\vert\phi}}{\mathrm{d}\mathbb{P}_{\boldsymbol{\Psi},*}}(\boldsymbol{\Psi})\right] \\+ \log p_{\phi}(\phi) + \text{const}.
\end{multline*}
Using the augmented likelihood factorization in~\eqref{eq:full_augmented_likelihood}, the expression decomposes as
\begin{multline*}
    \log q^{(k)}_{\phi}(\phi)  = \sum_{i = 1}^N \mathbb{E}_{\boldsymbol{\theta} \sim q^{(k-1)}_{\boldsymbol{\theta}}, \omega_i\sim q^{(k)}_{\omega_i}, \Psi_{i} \sim \mathbb{Q}^{(k)}_{\Psi_{i}}}\left[\log p\left(y_i, \delta_i \vert \mathbf{x}_i,\phi, \boldsymbol{\theta}, \omega_i,\Psi_{i}\right) \right] \\
    +\mathbb{E}_{\Psi \sim \mathbb{Q}^{(k)}_{\boldsymbol{\Psi}}}\left[\log \frac{\mathrm{d}\mathbb{P}_{\boldsymbol{\Psi}\vert\phi}}{\mathrm{d}\mathbb{P}_{\boldsymbol{\Psi},*}}(\boldsymbol{\Psi}) \right]+ \log p_{\phi}(\phi)+ \text{const}.
\end{multline*}
Next, by substituting the expression for the augmented likelihood from~\eqref{eq-augmented-likelihood}, the Radon–Nikodym derivative of $\mathbb{P}_{\boldsymbol{\Psi}\vert \phi}$ with respect to $\mathbb{P}_{\boldsymbol{\Psi},*}$ from~\eqref{eq:rn_derivative_P},  and the prior of $\phi$ from~\eqref{eq:baseline_hazard}, we obtain,
\begin{multline*}
    \log q^{(k)}_{\phi}(\phi)  = \sum_{i = 1}^N \Bigg(\delta_i \log\lambda_0(y_i, \mathbf{x}_i;\phi)  - \int_{0}^{y_i}\lambda_0(t, \mathbf{x}_i;\phi)\mathrm{d}t \\ + \mathbb{E}_{ \Psi_i \sim \mathbb{Q}^{(k)}_{\Psi_i}}\left[\sum_{(t, \omega)_{j}\in\Psi_i} \log\left(\frac{\phi}{Z(t_{j},\mathbf{x}_{i})}\right)\right] \Bigg)   +  (\alpha_{0} -1)\log(\phi) - \beta_0\phi + \text{const}.
\end{multline*}
We apply Campbell's Theorem (Theorem~\ref{thm:campbell_theorem}) and substitute the expression for the baseline hazard $\lambda_0(\cdot)$ from~\eqref{eq:baseline_hazard}, to obtain
\begin{multline*}
     \log q^{(k)}_{\phi}(\phi) \\ =  \log(\phi) \left(\alpha_{0}+\sum_{i=1}^N \left(\delta_i  + \int_0^{y_i} \lambda^{\mathbb{Q},(k)}_i(t) \mathrm{d}t \right)  -1 \right) - \phi \left(\beta_{0} + \sum_{i = 1}^N \int_0^{y_i}  \frac{t^{\rho-1}}{Z(t, \mathbf{x}_i)} \mathrm{d}t\right) + \text{const}.,
\end{multline*}
where $\lambda^{\mathbb{Q},(k)}_i(t)$ is shown in~\eqref{eq:variational_distribution_Pi_lambda_tilde_marginal}.
We deduce that 
\begin{equation*}
q^{(k)}_{\phi}(\phi) = \text{Gamma}(\tilde{\alpha}^{(k)},\tilde{\beta} ), 
\end{equation*}
where with shape $\tilde{\alpha}^{(k)}$ and rate $\tilde{\beta}$ given by
\begin{equation}\label{eq:optimal_variational_update_phi_gamma_params}
    \tilde{\alpha}^{(k)} = \alpha_{0}+\sum_{i=1}^N \left(\delta_i  + \int_0^{y_i} \lambda^{\mathbb{Q},(k)}_i(t) \mathrm{d}t\right) ,\quad \tilde{\beta}= \beta_{0} + \sum_{i = 1}^N \int_0^{y_i}  \frac{t^{\rho-1}}{Z(t, \mathbf{x}_i)} \mathrm{d}t.
\end{equation}

\paragraph{Optimal Variational Expectation for $\phi$.}
We obtain the required expectation for updating the other variational factors with
\begin{align}
\begin{split}\label{eq:optimal_variational_update_phi_gamma_expectation}
    &\mathbb{E}_{\phi\sim q^{(k)}_{\phi}}[\log \phi] = \psi\left(\tilde{\alpha}^{(k)}\right) - \log\left(\tilde{\beta}\right),
\end{split}
\end{align}
where $\psi(\cdot)$ is the digamma function.

%
%
%
%

\subsection{Optimal Update for \texorpdfstring{$\protect\boldsymbol{\theta}$}{theta}} \label{sec:optimal_update_theta}
Using standard mean-field variational inference techniques (see, e.g., Chapter 10.1 of \cite{bishop_pattern_recognition}), the optimal variational factor for the parameters $\boldsymbol{\theta}$ is obtained by computing the expectation of the joint log-density with respect to the other variational factors. In particular, we have
\begin{equation*}
    \log q^{(k)}_{\boldsymbol{\theta}}(\boldsymbol{\theta}) = \mathbb{E}_{ \phi\sim q^{(k)}_{\phi},\boldsymbol{\omega} \sim q^{(k)}_{\boldsymbol{\omega}}, \boldsymbol{\Psi} \sim \mathbb{Q}^{(k)}_{\boldsymbol{\Psi}}}\Big[\log p\left(\mathcal{D}\mid\phi, g^{\text{lin}}(\cdot;\boldsymbol{\theta}), \boldsymbol{\omega},\boldsymbol{\Psi}\right)\Big] + \log p_{\boldsymbol{\theta}}(\boldsymbol{\theta}) + \text{const}.
\end{equation*}
Using the augmented likelihood factorization in~\eqref{eq:full_augmented_likelihood}, we obtain
\begin{multline*}
    \log q^{(k)}_{\boldsymbol{\theta}}(\boldsymbol{\theta})  = \sum_{i = 1}^N \mathbb{E}_{\phi\sim q^{(k)}_{\phi}, \omega_i\sim q^{(k)}_{\omega_i}, \Psi_{i} \sim \mathbb{Q}^{(k)}_{\Psi_{i}}}\left[\log p\left(y_i, \delta_i \mid \mathbf{x}_i,\phi, g^{\text{lin}}(\cdot;\boldsymbol{\theta}), \omega_i,\Psi_{i}\right) \right]   \\+ \log p_{\boldsymbol{\theta}}(\boldsymbol{\theta})+ \text{const}.
\end{multline*}
Next, by substituting the expression for the augmented likelihood~\eqref{eq-augmented-likelihood} and for the prior for $\boldsymbol{\theta}$ from~\eqref{eq:prior_theta}, we obtain,
\begin{multline*}
    \log q^{(k)}_{\boldsymbol{\theta}}(\boldsymbol{\theta})  = \sum_{i = 1}^N \Bigg(\frac{\delta_i}{2} \left(g^{\text{lin}}(y_i, \mathbf{x}_i;\boldsymbol{\theta}) - \mathbb{E}_{\omega_i\sim q^{(k)}_{\omega_i}}[\omega_i]\:g^{\text{lin}}(y_i, \mathbf{x}_i;\boldsymbol{\theta})^2 \right)  \\+\mathbb{E}_{\Psi_{i} \sim \mathbb{Q}^{(k)}_{\Psi_{i}}}\left[ \sum_{(t, \omega)_j \in \Psi_{i}} f\left( \omega_j, -g^{\text{lin}}(t_j, \mathbf{x}_i;\boldsymbol{\theta}) \right)\right] \Bigg) -\frac{1}{2}\boldsymbol{\theta}^\top\boldsymbol{\theta} + \text{const}.
\end{multline*}
We apply Campbell's Theorem (Theorem~\ref{thm:campbell_theorem}) to obtain, 
\begin{multline*}
    \log q^{(k)}_{\boldsymbol{\theta}}(\boldsymbol{\theta})  = \sum_{i = 1}^N \Bigg(\frac{\delta_i}{2} \left(g^{\text{lin}}(y_i, \mathbf{x}_i;\boldsymbol{\theta}) - \mathbb{E}_{\omega_i\sim q^{(k)}_{\omega_i}}[\omega_i]g^{\text{lin}}(y_i, \mathbf{x}_i;\boldsymbol{\theta})^2 \right) \\  + \frac{1}{2}\int_{\mathcal{Z}_i}  \left(-g^{\text{lin}}(t, \mathbf{x}_i;\boldsymbol{\theta}) -g^{\text{lin}}(t, \mathbf{x}_i;\boldsymbol{\theta})^2 \omega\right)  \lambda^{\mathbb{Q},(k)}_i(t,\omega) \mathrm{d}t \mathrm{d}\omega \Bigg) -\frac{1}{2}\boldsymbol{\theta}^\top\boldsymbol{\theta} + \text{const}.,
\end{multline*}
where $\lambda^{\mathbb{Q},(k)}_i(t,\omega)$ is shown in Equation~\eqref{eq:variational_distribution_Pi_lambda_tilde}.
Next, we recall the expression for $g^{\text{lin}}(\cdot;\boldsymbol{\theta})$ from~\eqref{eq-linearization-approx} and we notice that
\begin{equation*}
\begin{aligned}
    g^{\text{lin}}(\cdot;\boldsymbol{\theta}) &=\boldsymbol{\theta}^{\top}\mathbf{J}_{\boldsymbol{\theta}_{\text{MAP}}}(\cdot) + \text{const.}\\
    g^{\text{lin}}(\cdot;\boldsymbol{\theta})^2&=  \boldsymbol{\theta}^{\top}\mathbf{J}_{\boldsymbol{\theta}_{\text{MAP}}}(\cdot)\left(2g(\cdot;\boldsymbol{\theta}_{\text{MAP}}) - 2 \mathbf{J}_{\boldsymbol{\theta}_{\text{MAP}}}(\cdot)^{\top}\boldsymbol{\theta}_{\text{MAP}} \right)+\boldsymbol{\theta}^{\top}\mathbf{J}_{\boldsymbol{\theta}_{\text{MAP}}}(\cdot)\mathbf{J}_{\boldsymbol{\theta}_{\text{MAP}}}(\cdot)^{\top}\boldsymbol{\theta}  + \text{const.},
\end{aligned}    
\end{equation*}
where the constant term represents terms that do not depend on $\boldsymbol{\theta}$.
We substitute the expression for $g^{\text{lin}}(\cdot;\boldsymbol{\theta})$ and $g^{\text{lin}}(\cdot;\boldsymbol{\theta})^2$ and we obtain, 
\begin{equation*}
    \log q^{(k)}_{\boldsymbol{\theta}}(\boldsymbol{\theta})  =  \boldsymbol{\theta}^T\mathbf{A}^{(k)} - \boldsymbol{\theta}^\top \mathbf{B}^{(k)}  \boldsymbol{\theta} + \text{const}.,
\end{equation*}
where
\begin{align}
\label{eq-A-B-matrix}
    \mathbf{A}^{(k)} &= \sum_{i = 1}^N \frac{1}{2}\Bigg(\delta_i\mathbf{J}_{\boldsymbol{\theta}_{\text{MAP}}}(y_i,\mathbf{x}_i)\left(1 -2\mathbb{E}_{\omega_i\sim q^{(k)}_{\omega_i}}[\omega_i]\left(g(y_i,\mathbf{x}_i;\boldsymbol{\theta}_{\text{MAP}})   - \mathbf{J}_{\boldsymbol{\theta}_{\text{MAP}}}(y_i,\mathbf{x}_i)^{\top}\boldsymbol{\theta}_{\text{MAP}}\right)\right) \nonumber\\
    &\quad-\left( \mathcal{I}^{(k)}_{1,i} +2\left(\mathcal{I}^{(k)}_{2,i}   - \mathcal{I}_{3,i}^{(k)}\boldsymbol{\theta}_{\text{MAP}} \right)\right)\Bigg)\\
    \mathbf{B}^{(k)} &= \sum_{i = 1}^N \frac{1}{2}\left(\delta_i\:\mathbb{E}_{\omega_i\sim q^{(k)}_{\omega_i}}[\omega_i]\mathbf{J}_{\boldsymbol{\theta}_{\text{MAP}}}(y_i,\mathbf{x}_i)\mathbf{J}_{\boldsymbol{\theta}_{\text{MAP}}}(y_i,\mathbf{x}_i)^{\top} + \mathcal{I}^{(k)}_{3,i} \right)+  \frac{1}{2}\mathbf{I}_{m}
\end{align}
and 
\begin{align*}
\label{eq-integrals-cavi-theta}
\begin{split}
    \mathcal{I}^{(k)}_{1,i} &= \int_0^{y_i} \mathbf{J}_{\boldsymbol{\theta}_{\text{MAP}}}(t,\mathbf{x}_i)\lambda^{\mathbb{Q},(k)}_i(t) \mathrm{d}t \\
    \mathcal{I}^{(k)}_{2,i} &=   \int_0^{y_i}  \mathbf{J}_{\boldsymbol{\theta}_{\text{MAP}}}(t,\mathbf{x}_i)g(t,\mathbf{x}_i;\boldsymbol{\theta}_{\text{MAP}})\lambda^{\mathbb{Q},(k)}_i(t) \frac{\text{tanh}\left(\tilde{s}^{(k-1)}_i(t)/2\right)}{2\tilde{s}^{(k-1)}_i(t)} \mathrm{d}t\\
    \mathcal{I}^{(k)}_{3,i} &= \int_0^{y_i}  \mathbf{J}_{\boldsymbol{\theta}_{\text{MAP}}}(t,\mathbf{x}_i)\mathbf{J}_{\boldsymbol{\theta}_{\text{MAP}}}(t,\mathbf{x}_i)^{\top}\lambda^{\mathbb{Q},(k)}_i(t) \frac{\text{tanh}\left(\tilde{s}^{(k-1)}_i(t)/2\right)}{2\tilde{s}^{(k-1)}_i(t)} \mathrm{d}t.
\end{split}
\end{align*}
$\mathbf{A}$, $\mathcal{I}_{1,i}$ and $\mathcal{I}_{2,i}$ are vectors of the same length of $\boldsymbol{\theta}$. $\mathbf{B}$ and $\mathcal{I}_{3,i}$ are square matrices for which each dimension is the length of $\boldsymbol{\theta}$, and $\mathbf{I}_{m}$ is the identity matrix of length of $\boldsymbol{\theta}$.
We deduce that
\begin{equation*}
    q^{(k)}_{\boldsymbol{\theta}}(\boldsymbol{\theta}) = \mathcal{N}\left(\tilde{\boldsymbol{\mu}}^{(k)}, \tilde{\boldsymbol{\Sigma}}^{(k)}\right), 
\end{equation*}
where 
\begin{equation}\label{eq:optimal_variational_update_theta_params}
    \tilde{\boldsymbol{\mu}}^{(k)} = \frac{1}{2} \left(\mathbf{B}^{(k)}\right)^{-1} \mathbf{A}^{(k)}, \quad \tilde{\boldsymbol{\Sigma}} = \frac{1}{2} \left(\mathbf{B}^{(k)}\right)^{-1}.
\end{equation}

\paragraph{Optimal Variational Expectation for $\boldsymbol{\theta}$.} We obtain the required expectation for updating the other variational factors, 
\begin{equation}
\label{eq:optimal_variational_update_theta_expectation}
\begin{aligned}
    \mathbb{E}_{\boldsymbol{\theta}\sim q^{(k)}_{\boldsymbol{\theta}}}[g^{\text{lin}}(t, \mathbf{x}_i;\boldsymbol{\theta})] &= g(t,\mathbf{x}_i;\boldsymbol{\theta}_{\text{MAP}}) + \mathbf{J}_{\boldsymbol{\theta}_{\text{MAP}}}(t,\mathbf{x}_i)^\top\left(\tilde{\boldsymbol{\mu}}^{(k)}-\boldsymbol{\theta}_{\text{MAP}}\right),\\
    \mathbb{E}_{\boldsymbol{\theta}\sim q^{(k)}_{\boldsymbol{\theta}}}[g^{\text{lin}}(t, \mathbf{x}_i;\boldsymbol{\theta})^2] 
    &= \left(g(t,\mathbf{x}_i;\boldsymbol{\theta}_{\text{MAP}}) + \mathbf{J}_{\boldsymbol{\theta}_{\text{MAP}}}(t,\mathbf{x}_i)^\top\left(\tilde{\boldsymbol{\mu}}^{(k)}-\boldsymbol{\theta}_{\text{MAP}}\right) \right)^{2} \\ &+  \mathbf{J}_{\boldsymbol{\theta}_{\text{MAP}}}(t,\mathbf{x}_i)^{\top}  \tilde{\boldsymbol{\Sigma}}^{(k)} \mathbf{J}_{\boldsymbol{\theta}_{\text{MAP}}}(t,\mathbf{x}_i).
\end{aligned}
\end{equation}

\clearpage
\newpage
\section{Coordinate Ascent Variational Inference Algorithm}
\label{app-cavi-algo}
\begin{algorithm}
\caption{Coordinate Ascent Variational Inference (CAVI)}
\label{alg:CAVI}
\begin{algorithmic}[1]
\STATE \textbf{Compute:} Compute $\tilde{\beta}$ following~\eqref{eq:optimal_variational_update_phi_gamma_params}. 
\STATE \textbf{Initialize:}: Set initial values for $\tilde{\alpha}^{(0)}$ and $\left(\tilde{\boldsymbol{\mu}}, \tilde{\boldsymbol{\Sigma}}\right)^{(0)}$. 
\STATE \textbf{Compute:} $\mathbb{E}_{\phi\sim q^{(0)}_{\phi}}[\log \phi]$ given $\left(\tilde{\alpha}^{(0)},\tilde{\beta}\right)$ following~\eqref{eq:optimal_variational_update_phi_gamma_expectation}. 
\STATE \textbf{Compute:} $\left\{(\tilde{m}_i(\cdot), \tilde{s}_i(\cdot))^{(0)}\right\}_{i =1}^N$ given $\left(\tilde{\boldsymbol{\mu}},\tilde{\boldsymbol{\Sigma}}\right)^{(0)}$ following~\eqref{eq:optimal_variational_update_theta_expectation}.
\STATE \textbf{Set:} iteration counter $k \gets 0$
\REPEAT
    \STATE $k \gets k+1$
    \STATE \textbf{Update} $q^{(k)}_{\boldsymbol{\omega}}$:  
    \STATE\hspace{1em} Update: $\left\{\tilde{c}_i^{(k)}\right\}_{i = 1}^N$ given $\left\{\tilde{s}_i(\cdot)^{(k-1)}\right\}_{i = 1}^N$ following~\eqref{eq:optimal_variational_update_omega_params}.  
    \STATE\hspace{1em} Compute: $\left\{\mathbb{E}_{\omega_i\sim q^{(k)}_{\omega_i}}[\omega_i]\right\}_{i=1}^N$ given $\left\{\tilde{c}_i^{(k)}\right\}_{i = 1}^N$ following~\eqref{eq:optimal_variational_update_omega_expectation}.
    \STATE \textbf{Update} $\mathbb{Q}^{(k)}_{\boldsymbol{\Psi}}$: 
    \STATE\hspace{1em} Update: $\left\{\lambda^{\mathbb{Q}, (k)}_i(\cdot)\right\}_{i = 1}^N$ given $\left(\{(\tilde{m}_i(\cdot), \tilde{s}_i(\cdot))^{(k-1)}\}_{i=1}^N,\mathbb{E}_{\phi\sim q^{(k-1)}_{\phi}}[\log \phi]\right)$ following~\eqref{eq:variational_distribution_Pi_lambda_tilde}.
    \STATE \textbf{Update} $q^{(k)}_{\phi}$: 
    \STATE\hspace{1em} Update: $\tilde{\alpha}^{(k)}$ given $\left\{\lambda^{\mathbb{Q}, (k)}_i(\cdot)\right\}_{i=1}^N$ following~\eqref{eq:optimal_variational_update_phi_gamma_params}.
    \STATE\hspace{1em} Compute: $\mathbb{E}_{\phi\sim q^{(k)}_{\phi}}[\log \phi]$ given $(\tilde{\alpha}^{(k)},\tilde{\beta})$ following~\eqref{eq:optimal_variational_update_phi_gamma_expectation}.
    \STATE \textbf{Update} $q^{(k)}_{\boldsymbol{\theta}}$: 
    \STATE\hspace{1em} Update: $\left(\tilde{\boldsymbol{\mu}}, \tilde{\boldsymbol{\Sigma}}\right)^{(k)}$ given $\left\{( \mathbb{E}_{\omega_i\sim q^{(k)}_{\omega_i}}[\omega_i], \lambda^{\mathbb{Q}, (k)}_i(\cdot)\right\}_{i=1}^N$  following~\eqref{eq:optimal_variational_update_theta_params}.
    \STATE\hspace{1em} Compute: $\left\{(\tilde{m}_i(\cdot), \tilde{s}_i(\cdot))^{(k)}\right\}_{i =1}^N$ given $\left(\tilde{\boldsymbol{\mu}},\tilde{\boldsymbol{\Sigma}}\right)^{(k)}$ following~\eqref{eq:optimal_variational_update_theta_expectation}.
    
\UNTIL{Convergence criterion is met}
\STATE \textbf{Return:} Optimized variational distributions $q^{(k^{\star})}_{\boldsymbol{\theta}}(\boldsymbol{\theta}) = \mathcal{N}\left(\tilde{\boldsymbol{\mu}}^{(k^{\star})},\tilde{\boldsymbol{\Sigma}}^{(k^{\star})}\right)$ and $q^{k^{\star}}_{\phi}(\phi) = \text{Gamma}\left(\tilde{\alpha}^{(k^{\star})},\tilde{\beta} \right)$, where $k^{\star}$ is the final iteration after convergence.
\end{algorithmic}
\label{algo:cavi_algorithm}
\end{algorithm}

\clearpage
\newpage
\section{Computational Speed-Ups}
\label{app-computational-speed-up}
Survival-analysis cohorts often comprise only a few hundred to a few thousand observations, yet modern deep learning models may involve millions of parameters, putting us in the $N\ll m$ regime. To exploit this disparity, we develop two complementary strategies that avoid any expensive $m$-dimensional inversions or factorizations by leveraging the fact that the nontrivial part of our key matrix is low-rank relative to the full parameter dimension $m$. We also show how heavy censoring further reduces the computational burden.

To streamline what follows, let us introduce the shorthand
\begin{equation*}
\mathbf J_i := \mathbf J_{\boldsymbol{\theta}_{\mathrm{MAP}}}(y_i, \mathbf{x}_i) \in \mathbb{R}^{m\times 1}
\end{equation*}
for $i=1,\ldots,N$. With this notation (and dropping the CAVI‐iteration index for clarity), the matrix $\mathbf{B}\in\mathbb{R}^{m\times m}$ defined in \eqref{eq-A-B-matrix} becomes
\begin{equation*}
\mathbf B
=\sum_{i=1}^N \frac12\Bigl(\delta_i\,\mathbb{E}_{\omega_{i}\sim q_{\omega_{i}}}[\omega_i]\,\mathbf J_i\,\mathbf J_i^T
\;+\;\mathcal I_{3,i}\Bigr)
+\frac12\,\mathbf I_m.
\end{equation*}
Here, each $\mathcal{I}_{3,i}$ is the integral
\begin{equation*}
\mathcal I_{3,i}
=\int_0^{y_i}\mathbf J_{\boldsymbol{\theta}_{\mathrm{MAP}}}(t,\mathbf{x}_i)\,\mathbf J_{\boldsymbol{\theta}_{\mathrm{MAP}}}(t,\mathbf{x}_i)^T\,\lambda^{\mathbb{Q}}_i(t) \frac{\text{tanh}\left(\tilde{s}_i(t)/2\right)}{2\tilde{s}_i(t)}\,\mathrm{d}t
\end{equation*}
and in general admits no closed‐form solution. We therefore approximate it by any standard quadrature rule (e.g. trapezoid, Simpson’s, or Gauss–Legendre). 
In what follows, we will illustrate the argument with the trapezoid rule, though the same steps apply to any other quadrature method.

We begin by introducing a uniform grid of points along the time axis:
\begin{equation*}
    t_1, t_2, \ldots, t_K,
\end{equation*}
where $t_1 := 0$ and $t_K := \max\{ y_i \}_{i=1}^N$.We associate a set of quadrature weights $\{v_{ik}\}_{k=1}^{K}$ to the time grid points, tailored for each observation $i$. These weights correspond to the trapezoidal rule for numerical integration on the interval $[0, y_i]$, and are defined as:
\begin{equation*}
v_{ik} =
\begin{cases}
    \frac{t_2 - t_1}{2}, & \text{if } k = 1 \text{ and } t_1 < y_i, \\
    \frac{t_{k+1} - t_{k-1}}{2}, & \text{if } 1 < k < K_i \text{ and } t_k < y_i, \\
    \frac{t_{K_i} - t_{K_i - 1}}{2}, & \text{if } k = K_i, \\
    0, & k > K_i,
\end{cases}
\end{equation*}
where $K_i = \max\{ k \in \{1, \ldots, K\} \,:\, t_k < y_i \}$. Further we denote by $\mathbf{V}_i$ the collection of quadrature weights for observation $i$, such that
\begin{equation*}
    \mathbf{V}_i:=
\bigl(v_{i1},\dots,v_{iK}\bigr)
\;\in\;\mathbb{R}^{K}.
\end{equation*}

We collect the Jacobian evaluations into the matrices
\begin{equation*}
\begin{aligned}
\mathbf Q_i
&:=
\begin{bmatrix}
\mathbf J_{\boldsymbol{\theta}_{\mathrm{MAP}}}(t_{1},\mathbf{x}_i) &
\mathbf J_{\boldsymbol{\theta}_{\mathrm{MAP}}}(t_{2},\mathbf{x}_i) &
\cdots &
\mathbf J_{\boldsymbol{\theta}_{\mathrm{MAP}}}(t_{K},\mathbf{x}_i)
\end{bmatrix}
\;\in\;\mathbb{R}^{m\times K}.
\end{aligned}
\end{equation*}
With these definitions in hand, any $K$-point quadrature rule yields the approximation
 \begin{equation*}
      \mathcal I_{3,i}
      \approx
      \sum_{k=1}^K v_{ik}\,\mathbf J_{\boldsymbol{\theta}_{\mathrm{MAP}}}(t_{k},\mathbf{x}_i)\,\mathbf J_{\boldsymbol{\theta}_{\mathrm{MAP}}}(t_{k},\mathbf{x}_i)^T
      = \mathbf Q_i\,\mathbf V_i\,\mathbf Q_i^T.
\end{equation*}
Likewise, each term 
\begin{equation*}
\delta_i\mathbb{E}_{\omega_{i}\sim q_{\omega_{i}}}[\omega_i]\mathbf J_i\mathbf J_i^T
\end{equation*}
can be written in the form $\mathbf{J}_{i}\mathbf{C}_{i}\mathbf{J}_{i}^{T}$, where  the scalar $\mathbf{C}_{i}=\delta_{i}\mathbb{E}_{\omega_{i}\sim q_{\omega_{i}}}[\omega_i]$. 

We collect all contributions into a single matrix $\mathbf{U} \in \mathbb{R}^{m \times R}$, where $R = N + NK$. This matrix is constructed by horizontally concatenating the vectors \(\mathbf{J}_i\) and \(\mathbf{Q}_i\) for \(i = 1, \ldots, N\), as follows:
\begin{equation*}
    \mathbf{U} := \left[ \underbrace{\mathbf{J}_1}_{(m\times 1)}, \mathbf{J}_2, \ldots, \mathbf{J}_N, \underbrace{\mathbf{Q}_1}_{(m\times K)}, \mathbf{Q}_2, \ldots, \mathbf{Q}_N \right].
\end{equation*}
Further, we define the block-diagonal weight matrix
\begin{equation*}
\mathbf{C}:=\text{diag}(\underbrace{\delta_{1}\mathbb{E}_{\omega_{1}\sim q_{\omega_{1}}}[\omega_1],\ldots,\delta_{N}\mathbb{E}_{\omega_{N}\sim q_{\omega_{N}}}[\omega_N]}_{(N)},\underbrace{\mathbf{V}_{1}}_{(K)},\ldots,\mathbf{V}_{N})\in\mathbb{R}^{R\times R}.
\end{equation*}
It is straightforward to verify that
\begin{equation*}
\mathbf{B} = \frac{1}{2}\left(\mathbf{I}_{m} + \mathbf{U}\mathbf{C}\mathbf{U}^{\top}\right).
\end{equation*}
Applying the Woodbury identity (see~\cite[Appendix B.10]{Higham_matrices}) then reduces the inversion of $\mathbf{B}$  to that of an $R\times R$ matrix:
\begin{equation*}
      \mathbf B^{-1}
      =2\bigl(\mathbf I_m + \mathbf U\,\mathbf C\,\mathbf U^T\bigr)^{-1}
      =2\Bigl[\mathbf I_m - \mathbf U\bigl(\mathbf C^{-1} + \mathbf U^T\mathbf U\bigr)^{-1}\mathbf U^T\Bigr].
\end{equation*}
Forming the Gram matrix $\mathbf  U^T\mathbf U$ requires $\mathcal{O}(mR^{2})$ operations (each of its $R^{2}$ entries is an inner product of two length-$m$ vectors) while inverting the resulting dense $R\times R$ matrix costs $\mathcal{O}(R^{3})$.  Therefore, assembling and solving the small system costs
\begin{equation*}
\mathcal{O}(mR^{2}) + \mathcal{O}(R^{3}) = \mathcal{O}(mR^{2}+R^{3}) 
\end{equation*}
instead of $\mathcal{O}(m^{3})$ for a full $m\times m$ inversion. Whenever $R\ll m$, this yields a dramatic speed-up. By replacing the direct $\mathcal{O}(R^{3})$ factorization with a Conjugate-Gradient (CG) solver --- as is commonly done in Gaussian-process toolkits such as GPyTorch \cite{gpytorch} --- we reduce the cost to $\mathcal{O}(R^{2})$.

Finally, many survival datasets exhibit censoring, i.e. $\delta_{i}=0$ for a fraction of observations. Since censored observations contribute only through the integral term, we may further partition the low-rank factor $\mathbf{U}$ into blocks for uncensored and censored cases. The effective rank becomes $R' = N_{\text{uncensored}} + NK$ where $N_{\text{uncensored}}$ is the number of uncensored observations, so that any Cholesky or CG solve scales with $(N_{\text{uncensored}} + NK)$ rather than $(N+NK)$. When $N_{\text{uncensored}}\ll N$, this yields an additional, potentially large reduction in computational cost.

\clearpage
\newpage
\section{Experiment Set-Up}
\label{app-experiment}
\subsection{Real Survival Data} \label{app-experiment_details_real_data}

The real survival data used in Section~\ref{sec:real_data_experiment} are presented below. In the central experiment, each dataset was subsampled to contain 125 observations in total. In an ablation experiment, each dataset was subsampled to contain 250 observations in total. Then, we performed 5-fold cross-validation, where the dataset was randomly divided into five equal parts. In each fold, one part (20\%) was used as the test set (central experiment: 25 samples, ablation experiment: 50 samples), while the remaining four parts (80\%) formed the training set (central experiment: 100 samples, ablation experiment: 200 samples). From the training set, 20\% (central experiment: 20 samples, ablation experiment: 40 samples) was further attributed to the validation set.

\paragraph{Colon.} The first successful trials of adjuvant chemotherapy for colon cancer dataset was obtained from the \texttt{survival} package~\cite{survivalpackage}. 
The dataset contains records of 1,822 observations with 15 covariates among which 49.23\% are censored. 
All rows with missing values were excluded from the dataset. 

\paragraph{NWTCO.}
The National Wilm's Tumor Study (NWTCO) was obtained from the \texttt{pycox} package~\cite{pycox}. 
The dataset contains records of 4,028 observations with 7 covariates among which 14.18\% are censored. 

\paragraph{GBSG.} The Rotterdam and German Breast Cancer Study Group (GBSG) was obtained from the \texttt{pycox} package~\cite{pycox}. 
The dataset contains records of 2,232 observations with 7 covariates among which 43.23\% are censored. 

\paragraph{METABRIC.} The Molecular Taxonomy of Breast Cancer International Consortium (METABRIC) dataset was obtained from the \texttt{pycox} package~\cite{pycox}. 
The dataset contains records of 1,904 observations with 9 covariates among which 42.07\% are censored. 

\paragraph{WHAS.} The Worcester Heart Attack Study (WHAS) dataset was obtained from the \texttt{sksurv} package~\cite{sksurv}. 
The dataset contains records of 500 observations with 14 covariates among which 43.00\% are censored. 

\paragraph{SUPPORT.} The Study to Understand Prognoses and Preferences for Outcomes and Risks of Treatment (SUPPORT) dataset was obtained from the \texttt{pycox}  package~\cite{pycox}. 
The dataset contains records of 8,873 observations with 14 covariates among which 31.97\% are censored. 

\paragraph{VLC.} The Veterans administration Lung Cancer trial (VLC) dataset was obtained from the \texttt{sksurv} package~\cite{sksurv}. 
The dataset contains records of 137 observations with 8 covariates among which 6.57\% are censored.

\paragraph{SAC 3.} The Sac 3 dataset from the simulation study in~\cite[Appendix A.1]{Kvamme2021} was obtained from the \texttt{pycox} package~\cite{pycox}. 
The dataset contains records of 100,000 observations with 45 covariates among which 37.20\% are censored.

\subsection{Benchmark Methods} \label{app-benchmark_methods}

\subsubsection{Benchmark Deep Survival Methods}
All deep learning methods share the same neural network architecture, which is detailed in Section~\ref{sec:implementation_details}. The benchmark deep survival models were trained using the Adam optimizer with a learning rate selected via grid search. Batch normalization was applied, and a dropout rate of 0.1 was used. Training was conducted for 1,000 epochs with a batch size of 256. 

\paragraph{MTLR.} The Multi-Task Logistic Regression~\cite{Yu2011} was implemented using the \texttt{MTLR} class from the \texttt{pycox} package~\cite{pycox}. 

\paragraph{DeepHit.} The DeepHit method~\cite{Lee2018} was implemented using the \texttt{DeepHitSingle} class from the \texttt{pycox} package~\cite{pycox}. The hyperparameters $\alpha$ and $\sigma$ were set to 0.2 and 0.1, respectively. Those are the default values.

\paragraph{DeepSurv.} The DeepSurv model~\cite{Katzman2018} was implemented using the \texttt{CoxPH} class from the \texttt{pycox} package~\cite{pycox}. 

\paragraph{Logistic Hazard.} The Logistic Hazard method~\cite{Yu2011} was implemented using the \texttt{LogisticHazard} class from the \texttt{pycox} package~\cite{pycox}. 

\paragraph{CoxTime.} The CoxTime method~\cite{Kvamme2019} was implemented using the \texttt{CoxTime} class from the \texttt{pycox} package~\cite{pycox}. 

\paragraph{CoxCC.} The CoxCC method~\cite{Kvamme2019} was implemented using the \texttt{CoxCC} class from the \texttt{pycox} package~\cite{pycox}. 

\paragraph{PMF.} The PMF method~\cite{Kvamme2021} was implemented using the \texttt{PMF} class from the \texttt{pycox} package~\cite{pycox}. 

\paragraph{PCHazard.} The PCHazard method~\cite{Kvamme2021} was implemented using the \texttt{PCHazard} class from the \texttt{pycox} package~\cite{pycox}. 

\paragraph{BCESurv.} The BCESurv method~\cite{Kvamme2021} was implemented using the \texttt{BCESurv} class from the \texttt{pycox} package~\cite{pycox}. 

\paragraph{DySurv.} The DySurv method~\cite{Mesinovic2024} was implemented using the official code provided by the authors, available at \url{https://github.com/munibmesinovic/DySurv/blob/main/Models/Results/Static_Benchmarks_GBSG_Example.ipynb} (Accessed on May 13 2025).

\paragraph{Sumo-Net.} The Sumo-Net method~\cite{rindt2022} was implemented using the official code provided by the authors, available at \url{https://github.com/MrHuff/Sumo-Net} (Accessed on July 25 2025).

\paragraph{DQS.} The DQS method~\cite{Yanagisaw2023} was implemented using the official code provided by the authors, available at \url{https://github.com/IBM/dqs} (Accessed on July 25 2025).

\subsubsection{Traditional Survival Methods}

\paragraph{CoxPH.} The Cox Proportional Hazards model~\cite{Cox1972} was implemented using the \texttt{CoxPHFitter} class from the \texttt{lifelines} package~\cite{davidson2019lifelines}. The Breslow method was used to compute the survival function.

\paragraph{Weibull AFT.} The Weibull Accelerated Failure Time model~\cite{Carroll2003} was implemented using the \texttt{WeibullAFTFitter} class from the \texttt{lifelines} package~\cite{davidson2019lifelines}. 

\paragraph{RSF.} The Random Survival Forest~\cite{Ishwaran2008} was implemented using the \texttt{RandomSurvivalForest} class from the \texttt{sksurv} package~\cite{sksurv}. The number of trees in the forest is set to 1,000. The minimum number of samples required to split an internal node is 10, and the minimum number of samples required to be at a leaf node is 15. Those were the same hyperparameters as used in~\cite{Mesinovic2024}.

\paragraph{SSVM.} The Survival Support Vector Machine~\cite{Polsterl2015} was implemented using the \texttt{FastSurvivalSVM} class from the \texttt{sksurv} package~\cite{sksurv}. The optimal regularization hyperparameter $\alpha$ was selected via grid search by evaluating model performance on the training set using the C-index. This method does not allow for estimation of the survival function. Predicted ranks were used as risk scores for computing the C-index.

\subsection{Evaluation metrics}
\paragraph{C-index.} Let $\hat{q}_i(t)$ be the predicted risk score of observation with covariates $\mathbf{x}_i$ at time $t$. The C-index estimate~\cite{HARRELL1996} is given by
\begin{equation*}
    \text{C-index} = \frac{\sum_{i = 1}^N\sum_{j\neq i} \delta_i \: \mathbbm{1}_{\{y_{i} < y_{i}\}} \left(\mathbbm{1}_{\{ \hat{q}_i(y_i) > \hat{q}_j(y_i)\} } + \frac{1}{2} \mathbbm{1}_{\{\hat{q}_i(y_i) = \hat{q}_j(y_i)\}}\right)}{\sum_{i = 1}^N\sum_{j\neq i} \delta_i\: \mathbbm{1}_{\{y_i < y_j\}}}.
\end{equation*}

Let $\hat{S}_i(t)$ be the predicted survival function of observation with covariates $\mathbf{x}_i$ at time $t$. When the predicted risk score is taken to be the negative of the survival function, i.e., $\hat{q}_i(t) = -\hat{S}_i(t)$, the C-index is referred to as the Antolini's C-index~\cite{Antolini2005} and is found with
\begin{equation*}
    \text{C-index} = \frac{\sum_{i = 1}^N\sum_{j\neq i} \delta_i \: \mathbbm{1}_{\{y_{i} < y_{i}\}} \left(\mathbbm{1}_{\{ \hat{S}_i(y_i) < \hat{S}_j(y_i)\} } + \frac{1}{2} \mathbbm{1}_{\{\hat{S}_i(y_i) = \hat{S}_j(y_i)\}}\right)}{\sum_{i = 1}^N\sum_{j\neq i} \delta_i\: \mathbbm{1}_{\{y_i < y_j\}}}.
\end{equation*}

The C-index is obtained using the \texttt{ConcordanceIndex} class from the \texttt{TorchSurv} package~\cite{Monod2024}.

\paragraph{IPCW Integrated Brier Score.}
Let $\hat{S}_i(t)$ be the predicted survival function of observation with covariates $\mathbf{x}_i$ at time $t$.
Let the inverse probability censoring weight (IPCW) at time~$t$ be defined as the inverse of the probability of being uncensored, $\xi(t) = 1 / \hat{C}(t)$, where $\hat{C}(t)$ denotes the Kaplan–Meier estimate of the censoring survival function. Under right censorship, the IPCW Brier score (BS)~\cite{Graf1999} at time $t$ is given by
\begin{equation} \label{eq:ipcw_brier_score}
    \text{IPCW BS}(t) = \frac{1}{N}\sum_{i=1}^N \xi(y_i)\mathbbm{1}_{\{y_i \leq t, \delta_i = 1\}} (0 - \hat{S}_i(t))^2 + \xi(t)\mathbbm{1}_{\{y_i > t\}} (1- \hat{S}_i(t))^2.
\end{equation}
The IBS is the integral of the Brier Score in~\eqref{eq:ipcw_brier_score}.
The IPCW weights and the IPCW IBS are computed using the \texttt{get\_ipcw} function and the \texttt{BrierScore} class from the \texttt{TorchSurv} package~\cite{Monod2024}.

\paragraph{Distribution Calibration.}
D-Calibration~\cite{Haider2020} evaluates whether predicted survival probabilities at observed times are uniformly distributed. For an individual $i$, let $\hat{S}_i(y_i)$ be the predicted survival probability at their event or censoring time $y_i$. Under perfect calibration, we expect:

\begin{equation}
    \hat{S}_i(y_i) \sim \text{Uniform}(0,1).
\end{equation}

The predicted probabilities are binned into $B$ quantiles, and a histogram is constructed over both event and censored observations. For censored data, the probability is distributed proportionally across bins beyond the censoring time. A chi-squared test compares the resulting histogram to the expected uniform distribution, and the p-value reflects how well the survival model is calibrated. The D-Calibration is obtained using the \texttt{LifelinesEvaluator.d\_calibration()} class from the \texttt{SurvivalEVAL} package~\cite{qi2024survivaleval}.

\paragraph{Kaplan-Meier Calibration.}
Kaplan-Meier (KM)-Calibration, as introduced by~\cite{Chapfuwa2023}, evaluates how well the average predicted survival curve from a model aligns with the empirical KM survival curve. Let $\hat{S}_{\text{avg}}(t)$ denote the model's average survival probability at time $t$, and $\hat{S}_{\text{KM}}(t)$ the KM estimate. The KM calibration score is defined as the normalized integrated mean squared error (MSE):

\begin{equation}
    \text{KM-Calibration} = \frac{1}{T_{\max}} \int_0^{T_{\max}} \left( \hat{S}_{\text{avg}}(t) - \hat{S}_{\text{KM}}(t) \right)^2 \, dt.
\end{equation}

This score lies in $[0,1]$, where 0 indicates perfect calibration, and values near 0.25 represent uninformative predictions. The KM-Calibration is obtained using the \texttt{LifelinesEvaluator.km\_calibration()} class from the \texttt{SurvivalEVAL} package~\cite{qi2024survivaleval}.

\clearpage
\newpage
\section{Implementation Details} 
\label{sec:implementation_details}
\paragraph{Code availability.} The code is available on the GitHub repository \url{https://github.com/MLGlobalHealth/neuralsurv} under the MIT License.

\paragraph{Architecture.} We employed a feedforward neural network with two hidden layers, each containing 16 neurons and using ReLu activations. The input of the network for observation $i = 1, \ldots, N$ is the pair $(t, \mathbf{x}_i)$.

\paragraph{Time normalization.} The observation period is normalized to the interval $[0, 1]$ by dividing each time value by the maximum observed time in the training set.

\paragraph{EM algorithm.} The parameters are initialized so that they match their prior expected values. Specifically, we set $\boldsymbol{\theta}^{(0)} = \mathbf{0}$ and $\phi^{(0)} = \alpha_0 / \beta_0$. The maximization step of the EM algorithm is performed using the L-BFGS-B algorithm. The EM algorithm is considered to have converged when the relative change in the Q-function between consecutive iterations falls below a tolerance threshold of $10^{-6}$ for two successive iterations.

\paragraph{CAVI algorithm.} The hyperparameters are initialized so that the expected values of the model parameters match the MAP estimates. Specifically, we set $\tilde{\alpha}^{(0)} = \phi_{\text{MAP}} \times \tilde{\beta}, \quad \text{and} \quad (\tilde{\boldsymbol{\mu}}, \tilde{\boldsymbol{\Sigma}})^{(0)} = (\boldsymbol{\theta}_{\text{MAP}}, \mathbf{I}_m).$
The CAVI algorithm is considered to have converged when the relative change between successive parameter estimates falls below a tolerance threshold of $10^{-6}$.  

\paragraph{Integral approximation.} The integrals required to compute the Q-function in the EM algorithm, as well as those involved in the optimal variational updates of $\phi$ and $\boldsymbol{\theta}$ in the CAVI algorithm, are approximated using the trapezoidal rule.

\paragraph{Prior and $\rho$.} For all experiments, we fix the hyperparameters of the prior distribution over $\phi$, given in~\eqref{eq:prior_phi}, to be $\alpha_0, \beta_0 = 1$. Furthermore, we fix $\rho = 1$.

\paragraph{Machine.} The experiments were conducted on NVIDIA RTX A6000 GPUs with 48GB of memory. 

\paragraph{Running time} Table~\ref{tab:runtime} reports the running time for a single fold on the Colon dataset at varying sample sizes. All folds and datasets were processed in parallel across multiple GPUs to ensure consistent timing.

\begin{table}[htbp]
\centering
\scriptsize
\renewcommand{\arraystretch}{1.1}
\begin{minipage}{0.08\textwidth}
\centering
\textcolor{white}{method} \\
\begin{tabular}{l}
\toprule
Method  \\
\midrule
MTLR~\cite{Yu2011} \\
DeepHit~\cite{Lee2018} \\
DeepSurv~\cite{Katzman2018} \\
Logistic Hazard~\cite{Gensheimer2019} \\
CoxTime~\cite{Kvamme2019} \\
CoxCC~\cite{Kvamme2019} \\
PMF~\cite{Kvamme2021} \\
PCHazard~\cite{Kvamme2021} \\
BCESurv~\cite{Kvamme2023}\\
DySurv~\cite{Mesinovic2024} \\
Sumo-Net
~\cite{rindt2022} \\
DQS~\cite{Yanagisaw2023} \\
NeuralSurv (Ours)\\
\bottomrule
\end{tabular}
\end{minipage}
\hfill
\hfill
\hfill
\begin{minipage}{0.2\textwidth}
\centering
\textbf{$N=25$} \\
\begin{tabular}{rrr}
\toprule
1L & 2L-6U & 2L-16U\\
\midrule
0.104 & 0.102 & 0.100 \\
0.164 & 0.139 & 0.140 \\
0.082 & 0.098 & 0.072 \\
0.077 & 0.079 & 0.076 \\
0.139 & 0.167 & 0.114 \\
0.130 & 0.118 & 0.094 \\
0.129 & 0.119 & 0.094 \\
0.083 & 0.102 & 0.083 \\
0.081 & 0.082 & 0.092 \\
0.050 & 0.064 & 0.040 \\
0.058 & 0.058 & 0.060 \\
0.019 & 0.020 & 0.024 \\
0.621 & 1.165 & 0.566 \\
\bottomrule
\end{tabular}
\end{minipage}
\hfill
\begin{minipage}{0.2\textwidth}
\centering
\textbf{$N=125$} \\
\begin{tabular}{rrr}
\toprule
1L & 2L-6U & 2L-16U\\
\midrule
0.075 & 0.093  & 0.118  \\
0.120 & 0.126 & 0.142 \\
0.063 & 0.079 & 0.072 \\
0.070 & 0.090 & 0.072 \\
0.113 & 0.175 & 0.134 \\
0.100 & 0.119 & 0.114 \\
0.069 & 0.104 & 0.095 \\
0.087 & 0.092 & 0.084 \\
0.060 & 0.078 & 0.092 \\
0.048 & 0.040 & 0.049 \\
0.062 & 0.071 & 0.070 \\
0.023 & 0.025 & 0.025 \\
3.926& 16.454 & 22.673 \\
\bottomrule
\end{tabular}
\end{minipage}
\hfill
\begin{minipage}{0.2\textwidth}
\centering
\textbf{$N=250$} \\
\begin{tabular}{rrr}
\toprule
1L & 2L-6U & 2L-16U\\
\midrule
0.075 & 0.099 & 0.091 \\
0.122 & 0.145 & 0.135 \\
0.071 & 0.080 & 0.101 \\
0.072 & 0.078 & 0.079 \\
0.136 & 0.135 & 0.186 \\
0.107 & 0.132 & 0.149 \\
0.072 & 0.087 & 0.125 \\
0.091 & 0.091 & 0.091 \\
0.062 & 0.087 & 0.079 \\
0.048 & 0.045 & 0.041 \\
0.073 & 0.073 & 0.078 \\
0.029 & 0.030 & 0.030 \\
7.373 & 87.098 & 141.389\\
\bottomrule
\end{tabular}
\end{minipage}

\vspace{1em}
\caption{Inference runtime for the Colon dataset (in minutes). The central analysis presented in Table~\ref{tab:further_deep_survival_results}-\ref{tab:further_deep_survival_results_2} is for $N = 125$ and a MNP with 2 layers (2L) and 16 units (16U). The ablation study presented in Table~\ref{tab:survival_results_ablation}-\ref{tab:survival_results_ablation_2} is for $N = 250$ and a MNP with 2L and 16U.}
\label{tab:runtime}
\end{table}

\clearpage
\newpage
\section{Related Work}
\label{app-related-work}
Survival analysis methodologies have evolved significantly over the past decades, encompassing parametric, semi-parametric, non-parametric, and more recently, deep learning-based approaches. We review these developments, focusing on their applicability to high-dimensional data and uncertainty quantification capabilities.

\paragraph{Parametric and Semi-parametric Traditional Models.} Traditional survival models often impose parametric or semi-parametric assumptions on the hazard function. The Accelerated Failure Time (AFT) model~\cite{Carroll2003} assumes a linear relationship between covariates and the logarithm of survival time, with parametric baseline distributions (e.g., Weibull). While interpretable, such models struggle with high-dimensional data and nonlinear covariate effects. The Cox Proportional Hazards (CoxPH) model~\cite{Cox1972}, a semi-parametric approach, avoids specifying the baseline hazard but assumes proportional hazards. Though widely adopted, CoxPH's linear predictor and proportionality constraints limit its flexibility in complex data regimes.

\paragraph{Non-parametric Traditional Models.} To mitigate parametric assumptions, non-parametric methods like Random Survival Forests (RSF)~\cite{Ishwaran2008} and Survival Support Vector Machines (SSVM)~\cite{Polsterl2015} emerged. RSF leverages ensemble learning for risk stratification but faces challenges in high-dimensional settings due to greedy tree induction. GP survival models~\cite{teh_survival} offer flexibility by modeling the hazard function nonparametrically, with inherent uncertainty quantification. Existing work has sought to address the cubic complexity in sample size of GPs by introducing variational inference techniques~\cite{Kim2018}. However, GPs remain fundamentally limited in scalability, particularly struggling with high-dimensional inputs and lacking the capacity to learn hierarchical representations, such as those required in image-based tasks~\cite{Salimbeni2017}.

\paragraph{Deep Survival Models.} The advent of deep learning revolutionized survival analysis by enabling automatic feature learning from high-dimensional inputs. DeepSurv~\cite{Katzman2018} extended CoxPH with neural networks, while DeepHit~\cite{Lee2018} employed multi-task learning for competing risks via discrete-time hazards.
Discrete-time methods, including MTLR~\cite{Yu2011} and PCHazard~\cite{Kvamme2021}, discretize the time axis to simplify likelihood computation, with recent advances like DySurv~\cite{Mesinovic2024} incorporating conditional variational inference for dynamic prediction. 
Sumo‑Net~\cite{rindt2022} introduces a partially monotonic NN that directly optimizes the right‑censored log‑likelihood, which is proven to be a strictly proper scoring rule—achieving strong log‑likelihood performance. DQS~\cite{Yanagisaw2023} formulates survival prediction using extensions of strictly proper scoring rules that remain proper under discrete-time survival settings. Despite their predictive prowess, these models rely on frequentist training, yielding point estimates without uncertainty quantification, a significant shortcoming in safety-critical applications. Comprehensive reviews~\cite{Wiegrebe2024} highlight the rapid growth of deep survival methods but underscore their neglect of probabilistic uncertainty.

\paragraph{Bayesian and Uncertainty-Aware Approaches.} Bayesian methods provide a natural framework for uncertainty quantification but have seen limited integration with deep survival models. GP-based approaches~\cite{teh_survival, Kim2018} inherit GP limitations in scalability and high-dimensional processing. Recent works like BCESurv~\cite{Kvamme2023} explore bootstrap confidence intervals, yet these post-hoc approximations lack the coherence of Bayesian posteriors. Consequently, existing Bayesian survival models either sacrifice scalability for uncertainty quantification or compromise on model flexibility, leaving a critical gap in high-dimensional, uncertainty-aware survival analysis.

\paragraph{Summary.} While parametric and semi-parametric models provide interpretability, they falter in high-dimensional, nonlinear regimes. Non-parametric methods like RSF and GP improve flexibility but face scalability challenges. Deep learning approaches excel at feature extraction yet lack principled uncertainty quantification. Bayesian methods, though theoretically sound, remain confined to traditional architectures or partial approximations. Our work bridges this divide by proposing the first scalable, deep Bayesian survival model that harmonizes neural networks with full probabilistic uncertainty, addressing a critical need in modern applications.

\begin{table}[ht]
\centering
\begin{tabular}{@{}lccc@{}}
\toprule
\textbf{Method}                                    & \textbf{Uncertainty} (Bayesian) & \textbf{Continuous Time} & \textbf{Deep Learning} \\ 
\midrule
CoxPH~\cite{Cox1972}  & \cmark              & \cmark    & \xmark     \\
AFT~\cite{Carroll2003} & \cmark              & \cmark    & \xmark     \\
RSF~\cite{Ishwaran2008} & \xmark              & \cmark    & \xmark     \\
SSVM~\cite{Polsterl2015}           & \xmark              & \cmark    & \xmark     \\
GP survival models~\cite{teh_survival,Kim2018}    & \cmark           & \cmark    & \xmark     \\
\midrule
MTLR \cite{Yu2011}                                & \xmark              & \xmark       & \cmark     \\
DeepHit \cite{Lee2018}                            & \xmark              & \xmark       & \cmark  \\
DeepSurv \cite{Katzman2018}                       &  \xmark              & \cmark    & \cmark  \\
Logistic Hazard~\cite{Gensheimer2019}             & \xmark              &  \xmark       & \cmark  \\
CoxTime~\cite{Kvamme2019}                          & \xmark              &  \cmark   & \cmark  \\
CoxCC~\cite{Kvamme2019}                          & \xmark               &  \cmark       & \cmark  \\
PMF~\cite{Kvamme2021}                            & \xmark                &  \xmark  & \cmark  \\
PCHazard \cite{Kvamme2021}                       & \xmark              & \cmark         & \cmark  \\
BCESurv \cite{Kvamme2023}                         & \xmark               & \xmark    & \cmark     \\
DySurv \cite{Mesinovic2024}                       & \xmark              & \cmark       & \cmark  \\
Sumo‑Net~\cite{rindt2022}                       & \xmark              & \cmark       & \cmark  \\
DQS~\cite{Yanagisaw2023}                        & \xmark              & \cmark       & \cmark  \\
\midrule
\textit{NeuralSurv (Ours)}                        & \cmark             & \cmark    & \cmark      \\
\bottomrule
\end{tabular}
\vspace{1em}
\caption{Summary of Survival Analysis methods: Bayesian Uncertainty Quantification, Time Domain, and Deep‐Learning Status.}
\label{tab:survival-methods}
\end{table}

\clearpage
\newpage
\section{Further Results}
\label{app-further-results}
\subsection{Synthetic Data Experiment}
\label{app-further_result_synthetic_data_experiment}

\begin{table}[htbp]
\centering
\scriptsize
\renewcommand{\arraystretch}{1.1}
\begin{minipage}{0.08\textwidth}
\centering
\textcolor{white}{method} \\
\begin{tabular}{l}
\toprule
Method  \\
\midrule
MTLR~\cite{Yu2011} \\
DeepHit~\cite{Lee2018} \\
DeepSurv~\cite{Katzman2018} \\
Logistic Hazard~\cite{Gensheimer2019} \\
CoxTime~\cite{Kvamme2019} \\
CoxCC~\cite{Kvamme2019} \\
PMF~\cite{Kvamme2021} \\
PCHazard~\cite{Kvamme2021} \\
BCESurv~\cite{Kvamme2023}\\
DySurv~\cite{Mesinovic2024} \\
Sumo-Net
~\cite{rindt2022} \\
DQS~\cite{Yanagisaw2023} \\
NeuralSurv (Ours)\\
\bottomrule
\end{tabular}
\end{minipage}
\hfill
\hfill
\hfill
\begin{minipage}{0.2\textwidth}
\centering
\textbf{$N=25$} \\
\begin{tabular}{cc}
\toprule
C-index $\uparrow$ & IPCW IBS $\downarrow$ \\
\midrule
\underline{0.560} & 0.284          \\
0.473 & 0.239 \\
0.492 & 0.313 \\
0.477 & 0.297 \\
0.424 & 0.284 \\
0.421 & 0.268 \\
\textbf{0.573} & 0.261 \\
0.477 & 0.337 \\
0.545 & 0.287 \\
0.399 & 0.237 \\
0.473 & \underline{0.223}  \\
0.435 & 0.326 \\
0.378 & \textbf{0.196} \\
\bottomrule
\end{tabular}
\end{minipage}
\hfill
\begin{minipage}{0.2\textwidth}
\centering
\textbf{$N=50$} \\
\begin{tabular}{cc}
\toprule
 C-index $\uparrow$ & IPCW IBS $\downarrow$ \\
\midrule
0.505  & 0.239          \\
 0.469  & 0.214\\
 0.471  & 0.241\\
 0.498  & 0.256 \\
 0.532  & 0.273\\
 0.497  & 0.229 \\
0.551  & 0.334 \\
0.501   & 0.249\\
\textbf{0.585} & 0.256 \\
0.491 & 0.239 \\
0.503  & \underline{0.179}   \\
0.48 & 0.232\\
\underline{0.554} & \textbf{0.160} \\
\bottomrule
\end{tabular}
\end{minipage}
\hfill
\begin{minipage}{0.2\textwidth}
\centering
\textbf{$N=100$} \\
\begin{tabular}{cc}
\toprule
 C-index $\uparrow$ & IPCW IBS $\downarrow$ \\
\midrule
0.491 & 0.171      \\
 0.502 & 0.171 \\
 0.507 & 0.169 \\
 0.507 & 0.199 \\
 0.52 & 0.184 \\
 0.526 &  0.128\\
 0.523 & 0.168 \\
 0.467 & 0.174 \\
 0.558 & 0.185\\
 0.459 & 0.218\\
 \underline{0.588}& \underline{0.127}  \\
  0.525 & 0.131\\
 \textbf{0.589} &  \textbf{0.126}\\
\bottomrule
\end{tabular}
\end{minipage}
\hfill
\begin{minipage}{0.2\textwidth}
\centering
\textbf{$N=150$} \\
\begin{tabular}{cc}
\toprule
 C-index $\uparrow$ & IPCW IBS $\downarrow$ \\
\midrule
 0.542  & 0.17 \\
 0.574 & 0.114 \\
 0.517 & 0.169 \\
 0.499 & 0.176 \\
 0.575 & 0.118 \\
 0.513 &  \underline{0.109} \\
 \textbf{0.607} & 0.184 \\
 0.486 & 0.193 \\
 0.559 & 0.16 \\
 0.489 & 0.174 \\
 0.495 & 0.113  \\
 0.556 &  0.124 \\
 \underline{0.589} &  \textbf{0.106} \\
\bottomrule
\end{tabular}
\end{minipage}
\vspace{1em}
\caption{Performance comparison of survival models over synthetic data. The best results for each metric are shown in bold, and the second-best results are underlined. $\uparrow$ indicates higher is better; $\downarrow$ indicates lower is better.}
\label{tab:synthetic_survival_results}
\end{table}

\begin{table}[htbp]
\centering
\scriptsize
\renewcommand{\arraystretch}{1.1}
\begin{minipage}{0.08\textwidth}
\centering
\textcolor{white}{method} \\
\begin{tabular}{l}
\toprule
Method  \\
\midrule
MTLR~\cite{Yu2011} \\
DeepHit~\cite{Lee2018} \\
DeepSurv~\cite{Katzman2018} \\
Logistic Hazard~\cite{Gensheimer2019} \\
CoxTime~\cite{Kvamme2019} \\
CoxCC~\cite{Kvamme2019} \\
PMF~\cite{Kvamme2021} \\
PCHazard~\cite{Kvamme2021} \\
BCESurv~\cite{Kvamme2023}\\
DySurv~\cite{Mesinovic2024} \\
Sumo-Net
~\cite{rindt2022} \\
DQS~\cite{Yanagisaw2023} \\
NeuralSurv (Ours)\\
\bottomrule
\end{tabular}
\end{minipage}
\hfill
\begin{minipage}{0.25\textwidth}
\centering
\textbf{$N=25$} \\
\begin{tabular}{cc}
\toprule
D-Calibration (p-value) & KM-Calibration $\downarrow$ \\
\midrule
0.000 ($\times$) & 0.032 \\
0.000 ($\times$)& 0.029 \\
0.000 ($\times$)& \underline{0.009} \\
0.000 ($\times$)& 0.037 \\
0.000 ($\times$)& 0.010 \\
0.000 ($\times$)& \textbf{0.008} \\
0.000 ($\times$)& 0.032 \\
0.000 ($\times$)& 0.037 \\
0.000 ($\times$)& 0.022 \\
0.000 ($\times$)& 0.201 \\
0.132 ($\checkmark$)& 0.011 \\
0.000 ($\times$)& 0.051 \\
 0.833 ($\checkmark$)&  0.014 \\
\bottomrule
\end{tabular}
\end{minipage}
\hfill
\begin{minipage}{0.25\textwidth}
\centering
\textbf{$N=50$} \\
\begin{tabular}{cc}
\toprule
D-Calibration (p-value) & KM-Calibration $\downarrow$ \\
\midrule
0.000 ($\times$)& 0.056 \\
0.000 ($\times$)& 0.168 \\
0.000 ($\times$)& 0.036  \\
0.000 ($\times$)& 0.062 \\
0.000 ($\times$)& 0.021 \\
0.000 ($\times$)& 0.028 \\
0.000 ($\times$)& 0.071 \\
0.000 ($\times$)& 0.073 \\
0.000 ($\times$)& \underline{0.014} \\
0.000 ($\times$)& 0.241 \\
0.051 ($\checkmark$) & 0.036 \\
0.000 ($\times$)& 0.048 \\
0.539 ($\checkmark$)& \textbf{0.012} \\
\bottomrule
\end{tabular}
\end{minipage}
\hfill\hfill

\vspace{1em}

\begin{minipage}{0.08\textwidth}
\centering
\textcolor{white}{method} \\
\begin{tabular}{l}
\toprule
Method  \\
\midrule
MTLR~\cite{Yu2011} \\
DeepHit~\cite{Lee2018} \\
DeepSurv~\cite{Katzman2018} \\
Logistic Hazard~\cite{Gensheimer2019} \\
CoxTime~\cite{Kvamme2019} \\
CoxCC~\cite{Kvamme2019} \\
PMF~\cite{Kvamme2021} \\
PCHazard~\cite{Kvamme2021} \\
BCESurv~\cite{Kvamme2023}\\
DySurv~\cite{Mesinovic2024} \\
Sumo-Net
~\cite{rindt2022} \\
DQS~\cite{Yanagisaw2023} \\
NeuralSurv (Ours)\\
\bottomrule
\end{tabular}
\end{minipage}
\hfill
\begin{minipage}{0.25\textwidth}
\centering
\textbf{$N=100$} \\
\begin{tabular}{cc}
\toprule
D-Calibration (p-value) & KM-Calibration $\downarrow$ \\
\midrule
0.000 ($\times$) & 0.034 \\
0.000 ($\times$) & 0.030 \\
0.000 ($\times$) & \underline{0.003} \\
0.000 ($\times$) & 0.034 \\
0.000 ($\times$) & 0.011 \\
0.000 ($\times$) & \textbf{0.001} \\
0.000 ($\times$) & 0.033 \\
0.000 ($\times$) & 0.056 \\
0.000 ($\times$) & 0.038 \\
0.000 ($\times$) & 0.172 \\
0.683 ($\checkmark$) & 0.004 \\
0.005 ($\times$) & 0.032 \\
0.419 ($\checkmark$) & 0.004 \\
\bottomrule
\end{tabular}
\end{minipage}
\hfill
\begin{minipage}{0.25\textwidth}
\centering
\textbf{$N=150$} \\
\centering
\begin{tabular}{cc}
\toprule
D-Calibration (p-value) & KM-Calibration $\downarrow$ \\
\midrule
0.000 ($\times$) & 0.035  \\
0.000 ($\times$) & 0.040 \\
0.007 ($\times$)& 0.003 \\
0.000 ($\times$) & 0.062 \\
0.346 ($\checkmark$) & 0.005 \\
0.001 ($\times$) & \textbf{0.001}\\
0.000 ($\times$) & 0.034  \\
0.000 ($\times$) & 0.066 \\
0.000 ($\times$) & 0.034 \\
0.000 ($\times$) & 0.183 \\
0.345 ($\checkmark$) & 0.003 \\
0.000 ($\times$) & \underline{0.031} \\
0.639 ($\checkmark$) & 0.004 \\
\bottomrule
\end{tabular}
\end{minipage}
\hfill
\vspace{1em}
\caption{Performance comparison of survival models over synthetic data (part 2). A checkmark ($\checkmark$) indicates that the null hypothesis of perfect D-Calibration was not rejected at $\alpha = 0.05$ (model considered well-calibrated); a cross ($\times$) indicates rejection of D-Calibration (model considered not well-calibrated). The best results for the KM-Calibration are shown in bold, and the second-best results are underlined. $\downarrow$ indicates lower is better.}
\label{tab:synthetic_survival_results_2}
\end{table}

\clearpage
\newpage

\subsection{Real Data Experiment}
\label{app-further_result_real_data_experiment}

\subsubsection{Central Analysis}

\begin{table}[htbp]
\centering
\scriptsize
\renewcommand{\arraystretch}{1.1}

\begin{minipage}{0.1\textwidth}
\centering
\textcolor{white}{method} \\
\begin{tabular}{l}
\toprule
Method  \\
\midrule
MTLR~\cite{Yu2011} \\
DeepHit~\cite{Lee2018} \\
DeepSurv~\cite{Katzman2018} \\
Logistic Hazard~\cite{Gensheimer2019} \\
CoxTime~\cite{Kvamme2019} \\
CoxCC~\cite{Kvamme2019} \\
PMF~\cite{Kvamme2021} \\
PCHazard~\cite{Kvamme2021} \\
BCESurv~\cite{Kvamme2023}\\
DySurv~\cite{Mesinovic2024} \\
Sumo-Net
~\cite{rindt2022} \\
DQS~\cite{Yanagisaw2023} \\
NeuralSurv (Ours)\\
\bottomrule
\end{tabular}
\end{minipage}
\hfill
\hfill
\hfill
\begin{minipage}{0.25\textwidth}
\centering
\textbf{COLON} \\
\begin{tabular}{cc}
\toprule
C-index $\uparrow$ & IPCW IBS $\downarrow$ \\
\midrule
  0.562   & 0.298 \\
  0.478 & 0.28 \\
  0.572 & 0.326 \\
  0.490 & 0.321 \\
  0.578 & 0.277 \\
  0.584 & 0.289  \\
  0.509 &  0.324 \\
  0.538 &  0.297\\
  0.491 & 0.302 \\
  0.488 & 0.536\\
  0.485 & \underline{0.241} \\
  \underline{0.635} & 0.246 \\
  \textbf{0.671} & \textbf{0.218}\\
\bottomrule
\end{tabular}
\end{minipage}
\hfill
\begin{minipage}{0.25\textwidth}
\centering
\textbf{METABRIC} \\
\begin{tabular}{cc}
\toprule
C-index $\uparrow$ & IPCW IBS $\downarrow$ \\
\midrule
  0.548 & 0.279 \\
  0.511 & 0.243\\
  0.523 & 0.289 \\
  0.541 & 0.317 \\
  0.533 & 0.307 \\
  0.575 & 0.257 \\
  0.440 & 0.336 \\
  0.541 & 0.291 \\
  \textbf{0.616} & 0.277 \\
  0.561 & 0.465 \\
  0.447 & \underline{0.223}   \\
  0.564 & 0.261 \\
  \underline{0.584} & \textbf{0.212}\\
\bottomrule
\end{tabular}
\end{minipage}
\hfill
\begin{minipage}{0.25\textwidth}
\centering
\textbf{GBSG} \\
\begin{tabular}{cc}
\toprule
 C-index $\uparrow$ & IPCW IBS $\downarrow$ \\
\midrule
 0.602  & 0.273   \\
  0.578 & 0.309 \\
  0.618 & 0.252 \\
  0.618 & 0.296 \\
  0.599 & 0.285 \\
  0.646 & 0.240 \\
  \underline{0.655} & 0.250 \\
  0.609 & 0.249 \\
  0.581 & 0.273 \\
  0.572 & 0.485 \\
  0.476  & 0.250  \\
  0.611 & \underline{0.229}\\
  \textbf{0.657}& \textbf{0.188}\\
\bottomrule
\end{tabular}
\end{minipage}

\vspace{1em}

\begin{minipage}{0.1\textwidth}
\centering
\textcolor{white}{method} \\
\begin{tabular}{l}
\toprule
Method  \\
\midrule
MTLR~\cite{Yu2011} \\
DeepHit~\cite{Lee2018} \\
DeepSurv~\cite{Katzman2018} \\
Logistic Hazard~\cite{Gensheimer2019} \\
CoxTime~\cite{Kvamme2019} \\
CoxCC~\cite{Kvamme2019} \\
PMF~\cite{Kvamme2021} \\
PCHazard~\cite{Kvamme2021} \\
BCESurv~\cite{Kvamme2023}\\
DySurv~\cite{Mesinovic2024} \\
Sumo-Net
~\cite{rindt2022} \\
DQS~\cite{Yanagisaw2023} \\
NeuralSurv (Ours)\\
\bottomrule
\end{tabular}
\end{minipage}
\hfill
\hfill
\hfill
\begin{minipage}{0.25\textwidth}
\centering
\textbf{NWTCO} \\
\begin{tabular}{cc}
\toprule
 C-index $\uparrow$ & IPCW IBS $\downarrow$ \\
\midrule
  0.592 & 0.301          \\
  0.516& 0.296 \\
  0.527& 0.248 \\
  0.512& 0.298 \\
  0.550 & 0.199 \\
  0.531& 0.237 \\
  0.482& 0.312 \\
  0.551& 0.209 \\
  0.530 & 0.272 \\
  0.402 & 0.683 \\
  \underline{0.595} & \underline{0.170}  \\
  0.567& 0.242\\
  \textbf{0.712} & \textbf{0.166}\\
\bottomrule
\end{tabular}
\end{minipage}
\hfill
\begin{minipage}{0.25\textwidth}
\centering
\textbf{WHAS} \\
\begin{tabular}{cc}
\toprule
 C-index $\uparrow$ & IPCW IBS $\downarrow$ \\
\midrule
  0.490 & 0.315         \\
  0.510 & 0.303 \\
  0.654 &0.281 \\
  0.545 & 0.315\\
  \textbf{0.678} & \underline{0.250} \\
  \underline{0.654} & 0.281 \\
  0.520  & 0.299 \\
  0.527 & 0.291 \\
  0.548 & 0.292 \\
  0.424 & 0.523 \\
  0.556 & 0.260  \\
  0.590 & 0.269 \\
  0.602& \textbf{0.233} \\
\bottomrule
\end{tabular}
\end{minipage}
\hfill
\begin{minipage}{0.25\textwidth}
\centering
\textbf{SUPPORT} \\
\begin{tabular}{cc}
\toprule
 C-index $\uparrow$ & IPCW IBS $\downarrow$ \\
\midrule
  0.432 & 0.357           \\
  0.452 & 0.341 \\
  0.505 & 0.354 \\
  0.536 & 0.378 \\
  0.547 & 0.327 \\
  \underline{0.566} & \underline{0.312} \\
  0.512 & 0.399 \\
  0.514 & 0.335 \\
  0.446 & 0.398 \\
  0.525 & 0.342 \\
  0.444 & \textbf{0.289}  \\
  0.538 & 0.331 \\
  \textbf{0.599} & 0.333 \\
\bottomrule
\end{tabular}
\end{minipage}

\vspace{1em}

\begin{flushleft}
\begin{minipage}{0.1\textwidth}
\raggedright

\textcolor{white}{method} \\
\begin{tabular}{l}
\toprule
Method  \\
\midrule
MTLR~\cite{Yu2011} \\
DeepHit~\cite{Lee2018} \\
DeepSurv~\cite{Katzman2018} \\
Logistic Hazard~\cite{Gensheimer2019} \\
CoxTime~\cite{Kvamme2019} \\
CoxCC~\cite{Kvamme2019} \\
PMF~\cite{Kvamme2021} \\
PCHazard~\cite{Kvamme2021} \\
BCESurv~\cite{Kvamme2023}\\
DySurv~\cite{Mesinovic2024} \\
Sumo-Net
~\cite{rindt2022} \\
DQS~\cite{Yanagisaw2023} \\
NeuralSurv (Ours)\\
\bottomrule
\end{tabular}
\end{minipage}
\hspace{5.3em}
\begin{minipage}{0.2\textwidth}
\centering
\textbf{VLC} \\
\begin{tabular}{cc}
\toprule
C-index $\uparrow$ & IPCW IBS $\downarrow$ \\
\midrule
  0.432 & 0.299          \\
  0.409 & 0.236 \\
  0.642 & 0.186 \\
  0.413 & 0.272 \\
  \textbf{0.671} & 0.212 \\
  0.645 & 0.169 \\
  0.445 & 0.284 \\
  0.502 & 0.294 \\
  0.428 & 0.263 \\
  0.436 & 0.162 \\
  0.527 & \underline{0.157}  \\
  0.568 & 0.218 \\
  \underline{0.667} & \textbf{0.142} \\
\bottomrule
\end{tabular}
\end{minipage}
\hspace{4.2em}
\begin{minipage}{0.2\textwidth}
\centering
\textbf{SAC3} \\
\begin{tabular}{cc}
\toprule
 C-index $\uparrow$ & IPCW IBS $\downarrow$ \\
\midrule
  0.471 & 0.276             \\
  0.456 & 0.289 \\
  0.530 & 0.264 \\
  0.480 & 0.348 \\
  0.485 & 0.276 \\
  \textbf{0.533} & 0.261 \\
  0.472 & 0.270 \\
  0.527& 0.276 \\
  0.440 & 0.300 \\
  0.476 & 0.303 \\
  0.457 & \underline{0.237}  \\
  0.481 & 0.293 \\
  \underline{0.532} & \textbf{0.204} \\
\bottomrule
\end{tabular}
\end{minipage}
\end{flushleft}
\vspace{1em}
\caption{Performance comparison of deep survival models over five different train/test splits of each dataset. The best results for each metric are shown in bold, and the second-best results are underlined. $\uparrow$ indicates higher is better; $\downarrow$ indicates lower is better.}
\label{tab:further_deep_survival_results}
\end{table}

\begin{table}[htbp]
\centering
\scriptsize
\renewcommand{\arraystretch}{1.1}

\begin{minipage}{0.1\textwidth}
\centering
\textcolor{white}{method} \\
\begin{tabular}{l}
\toprule
Method  \\
\midrule
MTLR~\cite{Yu2011} \\
DeepHit~\cite{Lee2018} \\
DeepSurv~\cite{Katzman2018} \\
Logistic Hazard~\cite{Gensheimer2019} \\
CoxTime~\cite{Kvamme2019} \\
CoxCC~\cite{Kvamme2019} \\
PMF~\cite{Kvamme2021} \\
PCHazard~\cite{Kvamme2021} \\
BCESurv~\cite{Kvamme2023}\\
DySurv~\cite{Mesinovic2024} \\
Sumo-Net
~\cite{rindt2022} \\
DQS~\cite{Yanagisaw2023} \\
NeuralSurv (Ours)\\
\bottomrule
\end{tabular}
\end{minipage}
\hfill
\hfill
\hfill
\begin{minipage}{0.4\textwidth}
\centering
\textbf{COLON} \\
\begin{tabular}{cc}
\toprule
D-Calibration (p-value) & KM-Calibration $\downarrow$ \\
\midrule
 0.000 ($\times$) & 0.016  \\
  0.001 ($\times$) & 0.089 \\
 0.047 ($\times$) & 0.024 \\
 0.002 ($\times$) & 0.019 \\
 0.014 ($\times$) & \textbf{0.011} \\
 0.011 ($\times$) & \textbf{0.011} \\
 0.001 ($\times$) & 0.017 \\
 0.007 ($\times$) & 0.029 \\
 0.000 ($\times$) & \underline{0.012} \\
 0.000 ($\times$)& 0.362 \\
 0.741 ($\checkmark$) & 0.014 \\
 0.381 ($\checkmark$) & 0.017 \\
  0.594 ($\checkmark$) &  0.020 \\
\bottomrule
\end{tabular}
\end{minipage}
\begin{minipage}{0.4\textwidth}
\centering
\textbf{METABRIC} \\
\begin{tabular}{cc}
\toprule
D-Calibration (p-value) & KM-Calibration $\downarrow$ \\
\midrule 
 0.000 ($\times$) &  0.023 \\
 0.101 ($\checkmark$) & 0.064 \\
 0.006 ($\times$) & \underline{0.012} \\
 0.000 ($\times$) & 0.026 \\
 0.012 ($\times$) & \underline{0.012} \\
 0.013 ($\times$) & 0.019 \\
 0.002 ($\times$) & 0.017 \\
 0.008 ($\times$) & 0.036 \\
 0.000 ($\times$) & 0.021 \\
 0.000 ($\times$) & 0.306 \\
 0.600 ($\checkmark$) & \textbf{0.008} \\
 0.135 ($\checkmark$) & 0.020 \\
  0.661 ($\checkmark$) & \underline{0.012} \\
\bottomrule
\end{tabular}
\end{minipage}

\vspace{1em}

\begin{minipage}{0.1\textwidth}
\centering
\textcolor{white}{method} \\
\begin{tabular}{l}
\toprule
Method  \\
\midrule
MTLR~\cite{Yu2011} \\
DeepHit~\cite{Lee2018} \\
DeepSurv~\cite{Katzman2018} \\
Logistic Hazard~\cite{Gensheimer2019} \\
CoxTime~\cite{Kvamme2019} \\
CoxCC~\cite{Kvamme2019} \\
PMF~\cite{Kvamme2021} \\
PCHazard~\cite{Kvamme2021} \\
BCESurv~\cite{Kvamme2023}\\
DySurv~\cite{Mesinovic2024} \\
Sumo-Net
~\cite{rindt2022} \\
DQS~\cite{Yanagisaw2023} \\
NeuralSurv (Ours)\\
\bottomrule
\end{tabular}
\end{minipage}
\hfill
\hfill
\hfill
\begin{minipage}{0.4\textwidth}
\centering
\textbf{GBSG} \\
\begin{tabular}{cc}
\toprule
D-Calibration (p-value) & KM-Calibration $\downarrow$ \\
\midrule
 0.000 ($\times$) & 0.011 \\
 0.000 ($\times$) & 0.149 \\
 0.084 ($\checkmark$) & \underline{0.009} \\
 0.008 ($\times$) & 0.017 \\
 0.247 ($\checkmark$) & \underline{0.009} \\
 0.256 ($\checkmark$) & \textbf{0.005} \\
 0.003 ($\times$) & 0.015 \\
 0.063 ($\checkmark$) & 0.024 \\
 0.001 ($\times$) & 0.015 \\
 0.000 ($\times$) & 0.369 \\
 0.524 ($\checkmark$) & 0.010 \\
 0.234 ($\checkmark$) & 0.012 \\
 0.735 ($\checkmark$) &  \underline{0.009} \\
\bottomrule
\end{tabular}
\end{minipage}
\begin{minipage}{0.4\textwidth}
\centering
\textbf{NWTCO} \\
\begin{tabular}{cc}
\toprule
D-Calibration (p-value) & KM-Calibration $\downarrow$ \\
\midrule
 0.570 ($\checkmark$) & 0.011 \\
 0.575 ($\checkmark$) &  0.014 \\
 0.887 ($\checkmark$) & \textbf{0.003} \\
 0.397 ($\checkmark$) & 0.012 \\
 0.883 ($\checkmark$) & \textbf{0.003}\\
 0.954 ($\checkmark$) & \textbf{0.003} \\
0.487 ($\checkmark$) & 0.013 \\
 0.697 ($\checkmark$) & 0.008 \\
 0.312 ($\checkmark$) &  0.010 \\
  0.000 ($\times$) & 0.260 \\
 0.991 ($\checkmark$) & \underline{0.004} \\
 0.916 ($\checkmark$) & 0.005 \\
 0.920 ($\checkmark$) & 0.007  \\
\bottomrule
\end{tabular}
\end{minipage}

\vspace{1em}

\begin{minipage}{0.1\textwidth}
\centering
\textcolor{white}{method} \\
\begin{tabular}{l}
\toprule
Method  \\
\midrule
MTLR~\cite{Yu2011} \\
DeepHit~\cite{Lee2018} \\
DeepSurv~\cite{Katzman2018} \\
Logistic Hazard~\cite{Gensheimer2019} \\
CoxTime~\cite{Kvamme2019} \\
CoxCC~\cite{Kvamme2019} \\
PMF~\cite{Kvamme2021} \\
PCHazard~\cite{Kvamme2021} \\
BCESurv~\cite{Kvamme2023}\\
DySurv~\cite{Mesinovic2024} \\
Sumo-Net
~\cite{rindt2022} \\
DQS~\cite{Yanagisaw2023} \\
NeuralSurv (Ours)\\
\bottomrule
\end{tabular}
\end{minipage}
\hfill
\hfill
\hfill
\begin{minipage}{0.4\textwidth}
\centering
\textbf{WHAS} \\
\begin{tabular}{cc}
\toprule
D-Calibration (p-value) & KM-Calibration $\downarrow$ \\
\midrule
 0.001 ($\times$) &  0.035   \\
 0.195 ($\checkmark$) & 0.067 \\
 0.076 ($\checkmark$) & 0.025 \\
 0.013 ($\times$) & 0.033 \\
 0.035 ($\times$) & \textbf{0.017} \\
 0.106 ($\checkmark$) & \underline{0.019} \\
 0.003 ($\times$) & 0.031 \\
 0.125 ($\checkmark$) & 0.022 \\
 0.000 ($\times$) & 0.025 \\
 0.000 ($\times$) & 0.281 \\
 0.735 ($\checkmark$) & 0.021 \\
 0.081 ($\checkmark$) & 0.033 \\
  0.335 ($\checkmark$) &  0.031 \\
\bottomrule
\end{tabular}
\end{minipage}
\begin{minipage}{0.4\textwidth}
\centering
\textbf{SUPPORT} \\
\begin{tabular}{cc}
\toprule
D-Calibration (p-value) & KM-Calibration $\downarrow$ \\
\midrule
0.000 ($\times$)  & 0.099  \\
 0.000 ($\times$) & 0.124 \\
 0.000 ($\times$) & \textbf{0.015} \\
 0.000 ($\times$) & 0.095 \\
 0.000 ($\times$) & \underline{0.016} \\
 0.000 ($\times$) & 0.020 \\
 0.000 ($\times$) & 0.084 \\
 0.000 ($\times$) & 0.070 \\
 0.000 ($\times$) & 0.088 \\
 0.000 ($\times$) & 0.099 \\
 0.143 ($\checkmark$) & 0.017 \\
 0.002 ($\times$) & 0.038 \\
 0.063 ($\checkmark$) & 0.083 \\
\bottomrule
\end{tabular}
\end{minipage}

\vspace{1em}

\begin{flushleft}
\begin{minipage}{0.1\textwidth}
\raggedright

\textcolor{white}{method} \\
\begin{tabular}{l}
\toprule
Method  \\
\midrule
MTLR~\cite{Yu2011} \\
DeepHit~\cite{Lee2018} \\
DeepSurv~\cite{Katzman2018} \\
Logistic Hazard~\cite{Gensheimer2019} \\
CoxTime~\cite{Kvamme2019} \\
CoxCC~\cite{Kvamme2019} \\
PMF~\cite{Kvamme2021} \\
PCHazard~\cite{Kvamme2021} \\
BCESurv~\cite{Kvamme2023}\\
DySurv~\cite{Mesinovic2024} \\
Sumo-Net
~\cite{rindt2022} \\
DQS~\cite{Yanagisaw2023} \\
NeuralSurv (Ours)\\
\bottomrule
\end{tabular}
\end{minipage}
\hfill
\hfill
\hfill
\begin{minipage}{0.4\textwidth}
\centering
\textbf{VLC} \\
\begin{tabular}{cc}
\toprule
D-Calibration (p-value) & KM-Calibration $\downarrow$ \\
\midrule
 0.000 ($\times$) &  0.072  \\
 0.000 ($\times$) & 0.107 \\
 0.004 ($\times$) & \textbf{0.006} \\
 0.000 ($\times$) & 0.078 \\
 0.021 ($\times$) & \underline{0.010} \\
 0.004 ($\times$) & 0.011 \\
 0.000 ($\times$) & 0.077 \\
 0.000 ($\times$) & 0.080 \\
 0.000 ($\times$) & 0.074 \\
 0.077 ($\checkmark$) & 0.028 \\
 0.131 ($\checkmark$) & 0.011 \\
 0.000 ($\times$) & 0.054 \\
 0.436 ($\checkmark$) &  0.013 \\
\bottomrule
\end{tabular}
\end{minipage}
\begin{minipage}{0.4\textwidth}
\centering
\textbf{SAC3} \\
\begin{tabular}{cc}
\toprule
D-Calibration (p-value) & KM-Calibration $\downarrow$ \\
\midrule
 0.000 ($\times$) & 0.020  \\
 0.003 ($\times$) & 0.094 \\
 0.000 ($\times$) & 0.020 \\
 0.000 ($\times$) & 0.039 \\
 0.000 ($\times$) & \textbf{0.014} \\
 0.000 ($\times$) & \underline{0.016} \\
 0.000 ($\times$) & \underline{0.016} \\
 0.000 ($\times$) & 0.029 \\
 0.000 ($\times$) & 0.021 \\
 0.000 ($\times$) & 0.146 \\
 0.478 ($\checkmark$) & \underline{0.016} \\
 0.021 ($\times$) & 0.034 \\
 0.624 ($\checkmark$) &  \underline{0.016} \\
\bottomrule
\end{tabular}
\end{minipage}
\end{flushleft}
\vspace{1em}
\caption{Performance comparison of deep survival models over five different train/test splits of each dataset (part 2). A checkmark ($\checkmark$) indicates that the null hypothesis of perfect D-Calibration was not rejected at $\alpha = 0.05$ (model considered well-calibrated); a cross ($\times$) indicates rejection of D-Calibration (model considered not well-calibrated). The best results for the KM-Calibration are shown in bold, and the second-best results are underlined. $\downarrow$ indicates lower is better.}
\label{tab:further_deep_survival_results_2}
\end{table}

\newpage
\clearpage

\subsubsection{Ablation Study with $N = 250$}

\begin{table}[!h]
\centering
\scriptsize
\renewcommand{\arraystretch}{1.1}

\begin{minipage}{0.1\textwidth}
\centering
\textcolor{white}{method} \\
\begin{tabular}{l}
\toprule
Method  \\
\midrule
MTLR~\cite{Yu2011} \\
DeepHit~\cite{Lee2018} \\
DeepSurv~\cite{Katzman2018} \\
Logistic Hazard~\cite{Gensheimer2019} \\
CoxTime~\cite{Kvamme2019} \\
CoxCC~\cite{Kvamme2019} \\
PMF~\cite{Kvamme2021} \\
PCHazard~\cite{Kvamme2021} \\
BCESurv~\cite{Kvamme2023}\\
DySurv~\cite{Mesinovic2024} \\
Sumo-Net
~\cite{rindt2022} \\
DQS~\cite{Yanagisaw2023} \\
NeuralSurv (Ours)\\
\bottomrule
\end{tabular}
\end{minipage}
\hfill
\hfill
\hfill
\begin{minipage}{0.25\textwidth}
\centering
\textbf{COLON} \\
\begin{tabular}{cc}
\toprule
C-index $\uparrow$ & IPCW IBS $\downarrow$ \\
\midrule
 0.545              &  0.291            \\
  0.564 &  0.284  \\
  0.600 &  0.295  \\
  0.501 &  0.289  \\
  \underline{0.621} &  \underline{0.259}  \\
  \textbf{0.640} &  0.277  \\
  0.541 &  0.291  \\
  0.549 &  0.280  \\
  0.537 &  0.289  \\
  0.478 &  0.543  \\
  0.529  & 0.273 \\
  0.593 & 0.267\\
  0.601 &  \textbf{0.215}  \\
\bottomrule
\end{tabular}
\end{minipage}
\hfill
\begin{minipage}{0.25\textwidth}
\centering
\textbf{METABRIC} \\
\begin{tabular}{cc}
\toprule
C-index $\uparrow$ & IPCW IBS $\downarrow$ \\
\midrule
 0.572            &  0.290              \\
 0.545 &  0.301   \\
 0.605 &  0.265  \\
 0.553 &  0.252  \\
 \textbf{0.621} &  0.264  \\
 \underline{0.610} &  0.254  \\
 0.554 &  0.300  \\
 0.561 &  0.246  \\
 0.565 &  0.289  \\
 0.516 &  0.491  \\
 0.473 & 0.27   \\
 0.600 & \underline{0.228} \\
 0.543 &  \textbf{0.198}  \\
\bottomrule
\end{tabular}
\end{minipage}
\hfill
\begin{minipage}{0.25\textwidth}
\centering
\textbf{GBSG} \\
\begin{tabular}{cc}
\toprule
 C-index $\uparrow$ & IPCW IBS $\downarrow$ \\
\midrule
 \underline{0.567}  &  0.312       \\
 0.563 &  0.272  \\
 0.531 &  0.277  \\
 0.562 &    0.287 \\
 \textbf{0.578} &  0.255 \\
 0.565 &  0.244 \\
0.537 &   0.304 \\
 0.524 &   0.295 \\
 0.554 &   0.301 \\
 0.506 &   0.508 \\
  0.471 & 0.255 \\
 0.562 & \underline{0.237} \\
 0.546 &    \textbf{0.212} \\
\bottomrule
\end{tabular}
\end{minipage}
\vspace{1em}
\caption{Performance comparison of deep survival models on the ablation study with 250 observations, over five different train/test splits of each dataset. The best results for each metric are shown in bold, and the second-best results are underlined. $\uparrow$ indicates higher is better; $\downarrow$ indicates lower is better.}
\label{tab:survival_results_ablation}
\end{table}

\begin{table}[!h]
\centering
\scriptsize
\renewcommand{\arraystretch}{1.1}

\begin{minipage}{0.1\textwidth}
\centering
\textcolor{white}{method} \\
\begin{tabular}{l}
\toprule
Method  \\
\midrule
MTLR~\cite{Yu2011} \\
DeepHit~\cite{Lee2018} \\
DeepSurv~\cite{Katzman2018} \\
Logistic Hazard~\cite{Gensheimer2019} \\
CoxTime~\cite{Kvamme2019} \\
CoxCC~\cite{Kvamme2019} \\
PMF~\cite{Kvamme2021} \\
PCHazard~\cite{Kvamme2021} \\
BCESurv~\cite{Kvamme2023}\\
DySurv~\cite{Mesinovic2024} \\
Sumo-Net
~\cite{rindt2022} \\
DQS~\cite{Yanagisaw2023} \\
NeuralSurv (Ours)\\
\bottomrule
\end{tabular}
\end{minipage}
\hfill
\hfill
\begin{minipage}{0.35\textwidth}
\centering
\textbf{COLON} \\
\begin{tabular}{cc}
\toprule
D-Calibration (p-value) & KM-Calibration $\downarrow$ \\ 
\midrule   
0.000 ($\times$) & 0.007 \\
0.000 ($\times$) & 0.068 \\
0.001 ($\times$) & \underline{0.006} \\
0.000 ($\times$) & 0.012 \\
0.022 ($\times$) & 0.005 \\
0.001 ($\times$) & \textbf{0.003} \\
0.000 ($\times$) & 0.008 \\
0.001 ($\times$) & 0.018  \\
0.000 ($\times$) & 0.012 \\
0.000 ($\times$) & 0.376 \\
0.323 ($\checkmark$) & 0.007 \\
0.086 ($\checkmark$) & 0.009 \\
0.404 ($\checkmark$) & 0.011 \\
\bottomrule
\end{tabular}
\end{minipage}
\hfill
\begin{minipage}{0.35\textwidth}
\centering
\textbf{METABRIC} \\
\begin{tabular}{cc}
\toprule
D-Calibration (p-value) & KM-Calibration $\downarrow$ \\ 
\midrule
0.000 ($\times$) & 0.017 \\
0.044 ($\times$) & 0.072 \\
0.001 ($\times$) & 0.007 \\
0.000 ($\times$) & 0.017 \\
0.001 ($\times$) & 0.008 \\
0.000 ($\times$) & \textbf{0.006} \\
0.000 ($\times$) & 0.008 \\
0.003 ($\times$) & 0.027 \\
0.000 ($\times$) & 0.013 \\
0.000 ($\times$) & 0.273 \\
0.192 ($\checkmark$) & \underline{0.007} \\
0.137 ($\checkmark$) & 0.013 \\
0.708 ($\checkmark$) & 0.013 \\
\bottomrule
\end{tabular}
\end{minipage}

\vspace{1em}

\begin{minipage}{0.1\textwidth}
\centering
\textcolor{white}{method} \\
\begin{tabular}{l}
\toprule
Method  \\
\midrule
MTLR~\cite{Yu2011} \\
DeepHit~\cite{Lee2018} \\
DeepSurv~\cite{Katzman2018} \\
Logistic Hazard~\cite{Gensheimer2019} \\
CoxTime~\cite{Kvamme2019} \\
CoxCC~\cite{Kvamme2019} \\
PMF~\cite{Kvamme2021} \\
PCHazard~\cite{Kvamme2021} \\
BCESurv~\cite{Kvamme2023}\\
DySurv~\cite{Mesinovic2024} \\
Sumo-Net
~\cite{rindt2022} \\
DQS~\cite{Yanagisaw2023} \\
NeuralSurv (Ours)\\
\bottomrule
\end{tabular}
\end{minipage}
\hfill
\begin{minipage}{0.35\textwidth}
\centering
\textbf{GBSG} \\
\begin{tabular}{cc}
\toprule
D-Calibration (p-value) & KM-Calibration $\downarrow$ \\ 
\midrule
0.000 ($\times$) & 0.010 \\
0.159 ($\checkmark$) & 0.134 \\
0.052 ($\checkmark$) & 0.006 \\
0.000 ($\times$) & 0.016 \\
0.171 ($\checkmark$) & 0.004 \\
0.059 ($\checkmark$) & 0.006 \\
0.000 ($\times$) & 0.010 \\
0.004 ($\times$) & 0.015 \\
0.000 ($\times$) & 0.013 \\
0.000 ($\times$) & 0.336 \\
0.337 ($\checkmark$) & \underline{0.004} \\
0.084 ($\checkmark$) & 0.008 \\
0.617 ($\checkmark$) & \textbf{0.003} \\
\bottomrule
\end{tabular}
\end{minipage}
\hfill
\hfill
\hfill
\hfill

\vspace{1em}
\caption{Performance comparison of deep survival models on the ablation study with 250 observations, over five different train/test splits of each dataset (part 2). A checkmark ($\checkmark$) indicates that the null hypothesis of perfect D-Calibration was not rejected at $\alpha = 0.05$ (model considered well-calibrated); a cross ($\times$) indicates rejection of D-Calibration (model considered not well-calibrated). The best results for the KM-Calibration are shown in bold, and the second-best results are underlined. $\downarrow$ indicates lower is better.}
\label{tab:survival_results_ablation_2}
\end{table}

\newpage
\clearpage

\subsubsection{Comparison to Traditional Survival Models}

\begin{table}[htbp]
\centering
\scriptsize
\renewcommand{\arraystretch}{1.1}

\begin{minipage}{0.1\textwidth}
\centering
\textcolor{white}{method} \\
\begin{tabular}{l}
\toprule
Method  \\
\midrule
CoxPH~\cite{Cox1972} \\
Weibull AFT~\cite{Carroll2003} \\
RSF~\cite{Ishwaran2008} \\
SSVM~\cite{Polsterl2015} \\
\bottomrule
\end{tabular}
\end{minipage}
\hfill
\hfill
\hfill
\begin{minipage}{0.25\textwidth}
\centering
\textbf{COLON} \\
\begin{tabular}{cc}
\toprule
C-index $\uparrow$ & IPCW IBS $\downarrow$ \\
\midrule
 0.669 & 0.192 \\
 0.681 & 0.198 \\
 0.590 & 0.210 \\
 0.654 & - \\
\bottomrule
\end{tabular}
\end{minipage}
\hfill
\begin{minipage}{0.25\textwidth}
\centering
\textbf{NWTCO} \\
\begin{tabular}{cc}
\toprule
 C-index $\uparrow$ & IPCW IBS $\downarrow$ \\
\midrule
 0.710  & 0.136  \\
 0.697 & 0.134 \\
 0.604 & 0.156 \\
 0.734 & - \\
\bottomrule
\end{tabular}
\end{minipage}
\hfill
\begin{minipage}{0.25\textwidth}
\centering
\textbf{GBSG} \\
\begin{tabular}{cc}
\toprule
 C-index $\uparrow$ & IPCW IBS $\downarrow$ \\
\midrule
 0.694 & 0.171 \\
 0.673& 0.179 \\
 0.588 & 0.193\\
 0.695 & - \\
\bottomrule
\end{tabular}
\end{minipage}

\vspace{1em}

\begin{minipage}{0.1\textwidth}
\centering
\textcolor{white}{method} \\
\begin{tabular}{l}
\toprule
Method  \\
\midrule
CoxPH~\cite{Cox1972} \\
Weibull AFT~\cite{Carroll2003} \\
RSF~\cite{Ishwaran2008} \\
SSVM~\cite{Polsterl2015} \\
\bottomrule
\end{tabular}
\end{minipage}
\hfill
\hfill
\hfill
\begin{minipage}{0.25\textwidth}
\centering
\textbf{METABRIC} \\
\begin{tabular}{cc}
\toprule
C-index $\uparrow$ & IPCW IBS $\downarrow$ \\
\midrule
 0.653 & 0.171 \\
 0.658 & 0.172 \\
 0.587 & 0.189 \\
 0.649 & - \\
\bottomrule
\end{tabular}
\end{minipage}
\hfill
\begin{minipage}{0.25\textwidth}
\centering
\textbf{WHAS} \\
\begin{tabular}{cc}
\toprule
 C-index $\uparrow$ & IPCW IBS $\downarrow$ \\
\midrule
 0.655 & 0.207 \\
 0.622 & 0.224 \\
 0.683 & 0.209 \\
 0.653 & - \\
\bottomrule
\end{tabular}
\end{minipage}
\hfill
\begin{minipage}{0.25\textwidth}
\centering
\textbf{SUPPORT} \\
\begin{tabular}{cc}
\toprule
 C-index $\uparrow$ & IPCW IBS $\downarrow$ \\
\midrule
 0.653  & 0.225 \\
 0.650 & 0.239 \\
 0.601 & 0.225 \\
 0.636 & - \\
\bottomrule
\end{tabular}
\end{minipage}

\vspace{1em}

\begin{flushleft}
\begin{minipage}{0.1\textwidth}
\raggedright

\textcolor{white}{method} \\
\begin{tabular}{l}
\toprule
Method  \\
\midrule
CoxPH~\cite{Cox1972} \\
Weibull AFT~\cite{Carroll2003} \\
RSF~\cite{Ishwaran2008} \\
SSVM~\cite{Polsterl2015} \\
\bottomrule
\end{tabular}
\end{minipage}
\hspace{5.3em}
\begin{minipage}{0.2\textwidth}
\centering
\textbf{VLC} \\
\begin{tabular}{cc}
\toprule
C-index $\uparrow$ & IPCW IBS $\downarrow$ \\
\midrule
 0.697   & 0.125  \\
 0.690  & 0.127 \\
 0.687 & 0.139 \\
 0.698 & - \\
\bottomrule
\end{tabular}
\end{minipage}
\hspace{4.2em}
\begin{minipage}{0.2\textwidth}
\centering
\textbf{SAC3} \\
\begin{tabular}{cc}
\toprule
 C-index $\uparrow$ & IPCW IBS $\downarrow$ \\
\midrule
 0.569  & 0.190  \\
 0.607 & 0.287 \\
 0.487 &  0.182\\
 0.504 & -\\
\bottomrule
\end{tabular}
\end{minipage}
\end{flushleft}
\vspace{1em}
\caption{Performance comparison of traditional survival models over five different train/test splits of each dataset.  $\uparrow$ indicates higher is better; $\downarrow$ indicates lower is better. The SSVM method does not provide estimates of the survival function; the predicted ranks are used for the corresponding C-index evaluations while the IPCW-IBS metric cannot be computed.}
\label{tab:further_traditional_survival_results}
\end{table}

\begin{table}[htbp]
\centering
\scriptsize
\renewcommand{\arraystretch}{1.1}

\begin{minipage}{0.1\textwidth}
\centering
\textcolor{white}{method} \\
\begin{tabular}{l}
\toprule
Method  \\
 \\
\midrule
CoxPH~\cite{Cox1972} \\
Weibull AFT~\cite{Carroll2003} \\
RSF~\cite{Ishwaran2008} \\
\bottomrule
\end{tabular}
\end{minipage}
\hfill
\begin{minipage}{0.25\textwidth}
\centering
\textbf{COLON} \\
\begin{tabular}{cc}
\toprule
D-Calibration& KM-Calibration $\downarrow$ \\
(p-value) & \\
\midrule
0.913 ($\checkmark$)  & 0.006 \\
0.788  ($\checkmark$) & 0.010 \\
0.791  ($\checkmark$) & 0.009 \\
\bottomrule
\end{tabular}
\end{minipage}
\hfill
\begin{minipage}{0.25\textwidth}
\centering
\textbf{NWTCO} \\
\begin{tabular}{cc}
\toprule
D-Calibration & KM-Calibration $\downarrow$ \\
(p-value) & \\
\midrule
 0.979 ($\checkmark$)  & 0.003   \\
 0.986 ($\checkmark$) & 0.004 \\
 0.999 ($\checkmark$) & 0.002 \\
\bottomrule
\end{tabular}
\end{minipage}
\hfill
\begin{minipage}{0.25\textwidth}
\centering
\textbf{GBSG} \\
\begin{tabular}{cc}
\toprule
D-Calibration  & KM-Calibration $\downarrow$ \\
(p-value) & \\
\midrule
 0.950 ($\checkmark$)   & 0.004   \\
0.767 ($\checkmark$) & 0.008 \\
 0.854 ($\checkmark$) & 0.003\\
\bottomrule
\end{tabular}
\end{minipage}

\vspace{1em}

\begin{minipage}{0.1\textwidth}
\centering
\textcolor{white}{method} \\
\begin{tabular}{l}
\toprule
Method  \\
 \\
\midrule
CoxPH~\cite{Cox1972} \\
Weibull AFT~\cite{Carroll2003} \\
RSF~\cite{Ishwaran2008} \\
\bottomrule
\end{tabular}
\end{minipage}
\hfill
\begin{minipage}{0.25\textwidth}
\centering
\textbf{METABRIC} \\
\begin{tabular}{cc}
\toprule
D-Calibration& KM-Calibration $\downarrow$ \\
(p-value) & \\
\midrule
 0.846 ($\checkmark$)  & 0.008  \\
 0.759 ($\checkmark$) & 0.006 \\
 0.746 ($\checkmark$) & 0.009 \\
\bottomrule
\end{tabular}
\end{minipage}
\hfill
\begin{minipage}{0.25\textwidth}
\centering
\textbf{WHAS} \\
\begin{tabular}{cc}
\toprule
D-Calibration & KM-Calibration $\downarrow$ \\
(p-value) & \\
\midrule
 0.730 ($\checkmark$) & 0.017    \\
 0.650 ($\checkmark$) & 0.018 \\
 0.625 ($\checkmark$) & 0.022 \\
\bottomrule
\end{tabular}
\end{minipage}
\hfill
\begin{minipage}{0.25\textwidth}
\centering
\textbf{SUPPORT} \\
\begin{tabular}{cc}
\toprule
D-Calibration  & KM-Calibration $\downarrow$ \\
(p-value) & \\
\midrule
0.354 ($\checkmark$)   & 0.010    \\
0.420 ($\checkmark$) & 0.012 \\
0.593 ($\checkmark$) & 0.010 \\
\bottomrule
\end{tabular}
\end{minipage}

\vspace{1em}

\begin{minipage}{0.1\textwidth}
\centering
\textcolor{white}{method} \\
\begin{tabular}{l}
\toprule
Method  \\
 \\
\midrule
CoxPH~\cite{Cox1972} \\
Weibull AFT~\cite{Carroll2003} \\
RSF~\cite{Ishwaran2008} \\
\bottomrule
\end{tabular}
\end{minipage}
\hfill
\begin{minipage}{0.25\textwidth}
\centering
\textbf{VLC} \\
\begin{tabular}{cc}
\toprule
D-Calibration& KM-Calibration $\downarrow$ \\
(p-value) & \\
\midrule
0.597 ($\checkmark$)   & 0.008  \\
0.759 ($\times$) & 0.007 \\
0.414 ($\times$) &0.013 \\
\bottomrule
\end{tabular}
\end{minipage}
\hfill
\begin{minipage}{0.25\textwidth}
\centering
\textbf{SAC3} \\
\begin{tabular}{cc}
\toprule
D-Calibration & KM-Calibration $\downarrow$ \\
(p-value) & \\
\midrule
 0.357 ($\checkmark$) & 0.005    \\
 0.038 ($\checkmark$) & 0.018 \\
 0.706 ($\checkmark$) & 0.012 \\

\bottomrule
\end{tabular}
\end{minipage}
\hfill
\hfill
\hfill
\hfill
\hfill
\hfill
\vspace{1em}
\caption{Performance comparison of traditional survival models over five different train/test splits of each dataset (part 2). A checkmark ($\checkmark$) indicates that the null hypothesis of perfect D-Calibration was not rejected at $\alpha = 0.05$ (model considered well-calibrated); a cross ($\times$) indicates rejection of D-Calibration (model considered not well-calibrated). The best results for the KM-Calibration are shown in bold, and the second-best results are underlined. $\downarrow$ indicates lower is better.}
\label{tab:further_traditional_survival_results_2}
\end{table}

\newpage
\clearpage

\subsection{Prior Sensitivity Analysis}

\begin{table}[!h]
\centering
\scriptsize
\renewcommand{\arraystretch}{1.1}

\begin{minipage}{0.1\textwidth}
\centering
\textcolor{white}{method} \\
\begin{tabular}{l}
\toprule
Gamma Prior of $\phi$  \\
\\
\midrule
$(\alpha_0 = 1, \beta_0 = 1)$ \\
$(\alpha_0 = 2, \beta_0 = 2)$ \\
$(\alpha_0 = 0.5, \beta_0 = 0.5)$\\
\bottomrule
\end{tabular}
\end{minipage}
\hfill
\begin{minipage}{0.60\textwidth}
\centering
\textbf{COLON} \\
\begin{tabular}{cccccc}
\toprule
Gamma Posterior of $\phi$ & Posterior Median  & C-index $\uparrow$ & IPCW IBS $\downarrow$ & D-Calibration  & KM-Calibration $\downarrow$\\ 
& and 95\% CI& & & (p-value) & \\
\midrule   
($\tilde{\alpha}$ = 60.877, $\tilde{\beta}$ = 78.789) & 0.768 [0.591, 0.978] & \textbf{0.671}  & \textbf{0.218}  & 0.594 ($\checkmark$)     & \textbf{0.020} \\
($\tilde{\alpha}$ = 45.865, $\tilde{\beta}$ = 79.789)& 0.571 [0.421, 0.753] & 0.593 & 0.237 & 0.601 ($\checkmark$) & 0.025\\
($\tilde{\alpha}$ = 38.333, $\tilde{\beta}$ = 78.289)& 0.485 [0.347, 0.656] & 0.512 & 0.229& 0.715 ($\checkmark$) &  0.023  \\
\bottomrule
\end{tabular}
\end{minipage}
\hfill
\hfill

\vspace{1em}

\begin{minipage}{0.1\textwidth}
\centering
\textcolor{white}{method} \\
\begin{tabular}{l}
\toprule
Gamma Prior of $\phi$  \\
\\
\midrule
$(\alpha_0 = 1, \beta_0 = 1)$ \\
$(\alpha_0 = 2, \beta_0 = 2)$ \\
$(\alpha_0 = 0.5, \beta_0 = 0.5)$\\
\bottomrule
\end{tabular}
\end{minipage}
\hfill
\begin{minipage}{0.60\textwidth}
\centering
\textbf{METABRIC} \\
\begin{tabular}{cccccc}
\toprule
Gamma Posterior of $\phi$ & Posterior Median  & C-index $\uparrow$ & IPCW IBS $\downarrow$ & D-Calibration  & KM-Calibration $\downarrow$\\ 
& and 95\% CI& & & (p-value) & \\
\midrule   
($\tilde{\alpha}$ = 48.406, $\tilde{\beta}$ = 70.176)& 0.685 [0.509, 0.897]  & \textbf{0.584}  & 0.212   & 0.661   ($\checkmark$)     & 0.012  \\
($\tilde{\alpha}$ = 48.034, $\tilde{\beta}$ = 71.176)& 0.670 [0.498, 0.879] & 0.536 & 0.201& 0.819 ($\checkmark$) & \textbf{0.009}\\
($\tilde{\alpha}$ = 47.347, $\tilde{\beta}$ = 69.676)& 0.675 [0.500, 0.886] & 0.553 & \textbf{0.200}  & 0.802 ($\checkmark$) &  \textbf{0.009} \\
\bottomrule
\end{tabular}
\end{minipage}
\hfill
\hfill

\vspace{1em}

\begin{minipage}{0.1\textwidth}
\centering
\textcolor{white}{method} \\
\begin{tabular}{l}
\toprule
Gamma Prior of $\phi$  \\
\\
\midrule
$(\alpha_0 = 1, \beta_0 = 1)$ \\
$(\alpha_0 = 2, \beta_0 = 2)$ \\
$(\alpha_0 = 0.5, \beta_0 = 0.5)$\\
\bottomrule
\end{tabular}
\end{minipage}
\hfill
\begin{minipage}{0.60\textwidth}
\centering
\textbf{GBSG} \\
\begin{tabular}{cccccc}
\toprule
Gamma Posterior of $\phi$ & Posterior Median  & C-index $\uparrow$ & IPCW IBS $\downarrow$ & D-Calibration  & KM-Calibration $\downarrow$\\ 
& and 95\% CI& & & (p-value) & \\
\midrule   
($\tilde{\alpha}$ = 58.650, $\tilde{\beta}$ = 85.462)& 0.682 [0.522, 0.873]  & 0.657  & \textbf{0.188}   &  0.735 ($\checkmark$)  & \textbf{0.009}  \\
($\tilde{\alpha}$ = 62.386, $\tilde{\beta}$ = 86.462)& 0.718 [0.554, 0.911] &  0.602& 0.195 & 0.808 ($\checkmark$) & 0.010\\
($\tilde{\alpha}$ = 57.549, $\tilde{\beta}$ = 84.962)& 0.673 [0.514, 0.863]& \textbf{0.665} & 0.189 & 0.772 ($\checkmark$) & 0.010\\
\bottomrule
\end{tabular}
\end{minipage}
\hfill
\hfill

\vspace{1em}
\caption{Prior sensitivity analysis on $\phi$ using priors with double and half the original variance.. }
\label{tab:prior_sensitivity_analysis}
\end{table}

\clearpage
\newpage
\section{Proofs}
\subsection{Proof of Theorem~\ref{proposition-augmented-likelihood}}
\label{app-augmented-likelihood}
Before proving Theorem~\ref{proposition-augmented-likelihood} we must show some intermediate results.

\begin{lemma}
\label{lemma-lambda0-integrable}
Assume that for each $i=1,\ldots,N$  the function $g(\cdot,\mathbf{x}_{i};\cdot)\in C([0,y_{i}]\times\mathbb{R}^{m})$. Then, it follows that
\begin{equation*}
\int_{0}^{y_{i}}\lambda_{0}(t,\mathbf{x}_{i};\phi)\mathrm{d}t < \infty
\end{equation*}
for every $i=1,\ldots,N$.
\end{lemma}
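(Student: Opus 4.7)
The plan is to reduce the claim to two facts about the normalizing function $Z(t,\mathbf{x}_i)$: that it is continuous on $[0,y_i]$ and strictly positive there. Once these are established, continuity on a compact set gives a uniform lower bound $Z(t,\mathbf{x}_i)\geq c_i>0$, and the integral is dominated by a Weibull‐type integral, which is finite.

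\textbf{Step 1: bounding by the baseline.} Recall from~\eqref{eq:baseline_hazard} and~\eqref{eq:prior_phi} that
\begin{equation*}
\lambda_0(t,\mathbf{x}_i;\phi) \;=\; \frac{\phi\,t^{\rho-1}}{Z(t,\mathbf{x}_i)}.
\end{equation*}
If I can show that there exists $c_i>0$ such that $Z(t,\mathbf{x}_i)\geq c_i$ for all $t\in[0,y_i]$, then
\begin{equation*}
\int_0^{y_i}\lambda_0(t,\mathbf{x}_i;\phi)\,\mathrm{d}t
\;\leq\; \frac{\phi}{c_i}\int_0^{y_i}t^{\rho-1}\,\mathrm{d}t
\;=\; \frac{\phi\,y_i^{\rho}}{c_i\,\rho}\;<\;\infty,
\end{equation*}
since $\rho>0$.

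\textbf{Step 2: continuity of $Z(\cdot,\mathbf{x}_i)$.} By assumption $g(\cdot,\mathbf{x}_i;\cdot)$ is jointly continuous on $[0,y_i]\times\mathbb{R}^m$, so $t\mapsto\sigma(g(t,\mathbf{x}_i;\boldsymbol{\theta}))$ is continuous for each fixed $\boldsymbol{\theta}$. Since $\sigma$ is bounded by $1$ and $p_{\boldsymbol{\theta}}$ is a probability density, the dominated convergence theorem implies that
\begin{equation*}
Z(t,\mathbf{x}_i) \;=\; \mathbb{E}_{\boldsymbol{\theta}\sim p_{\boldsymbol{\theta}}}\bigl[\sigma(g(t,\mathbf{x}_i;\boldsymbol{\theta}))\bigr]
\end{equation*}
is continuous in $t\in[0,y_i]$.

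\textbf{Step 3: strict positivity.} For any fixed $t\in[0,y_i]$ and any $\boldsymbol{\theta}\in\mathbb{R}^m$, the sigmoid satisfies $\sigma(g(t,\mathbf{x}_i;\boldsymbol{\theta}))>0$. Since $p_{\boldsymbol{\theta}}=\mathcal{N}(\boldsymbol{0},\mathbf{I}_m)$ is strictly positive on $\mathbb{R}^m$, the expectation of a strictly positive integrable function is strictly positive, so $Z(t,\mathbf{x}_i)>0$ for every $t\in[0,y_i]$. Combined with Step~2 and the compactness of $[0,y_i]$, the extreme value theorem gives $c_i:=\min_{t\in[0,y_i]}Z(t,\mathbf{x}_i)>0$, and the conclusion of Step~1 then yields the result.

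The only non‐routine step is Step~2, which requires justifying the interchange of the expectation over $\boldsymbol{\theta}$ with the limit in $t$; but the uniform bound $\sigma\leq1$ makes this immediate via dominated convergence. All remaining steps are routine once this continuity is established.
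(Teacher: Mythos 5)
Your proof takes essentially the same approach as the paper's: establish continuity of $Z(\cdot,\mathbf{x}_i)$ via dominated convergence, establish strict positivity pointwise, combine via the extreme value theorem to get a uniform positive lower bound on the compact interval, and then bound $\lambda_0(t,\mathbf{x}_i;\phi)$ by a Weibull-type integrand. The only cosmetic difference is that you state the reduction (Step 1) first and make the Weibull integral explicit, whereas the paper establishes the bound on $Z$ before drawing the conclusion.
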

\begin{proof}
Fix an arbitrary index $i\in\{1,\dots,N\}$.   From Section~\ref{sec-prior-distributions}, recall that $p_{\boldsymbol{\theta}}(\boldsymbol{\theta})$ is the probability density function of a multivariate normal distribution with zero mean and identity covariance matrix $\mathbf{I}_{m}$. Our goal is to show that the normalization factor $Z(t,\mathbf{x}_{i})$ admits a strictly positive lower bound on $[0,y_{i}]$, from which the integrability of $\lambda_{0}(t,\mathbf{x}_{i};\phi)$ will follow.

\paragraph{Step 1: Continuity of  $Z(t,\mathbf{x}_{i})$ on $[0,y_{i}]$.}
Fix any $t_{0}\in[0,y_{i}]$, and let $(t_{n})_{n\geq 1}$ be a sequence in $[0,y_{i}]$ such that $t_{n}\to t_{0}$ as $n\to\infty$. Define, for each $n$, the functions
\begin{equation*}
\begin{aligned}
h_{n}(\boldsymbol{\theta}) &:= \sigma(g(t_{n},\mathbf{x}_{i};\boldsymbol{\theta}))p_{\boldsymbol{\theta}}(\boldsymbol{\theta}), \quad n\geq 1, \\ 
h(\boldsymbol{\theta}) &:= \sigma(g(t_{0},\mathbf{x}_{i};\boldsymbol{\theta}))p_{\boldsymbol{\theta}}(\boldsymbol{\theta}).
\end{aligned}
\end{equation*}
Since $g(\cdot,\mathbf{x}_{i};\cdot)\in C([0,y_{i}]\times\mathbb{R}^{m})$ and the sigmoid $\sigma(\cdot)$ is a continuous function, it follows that
\begin{equation*}
\lim_{n\to\infty} h_{n}(\boldsymbol{\theta}) = h(\boldsymbol{\theta})
\end{equation*}
pointwise for all $\boldsymbol{\theta}\in\mathbb{R}^{m}$. Furthermore, observe that 
\begin{equation*}
\vert h_{n}(\boldsymbol{\theta})\vert \leq p_{\boldsymbol{\theta}}(\boldsymbol{\theta})
\end{equation*}
since $0<\sigma(\cdot)<1$. Because $p_{\boldsymbol{\theta}}(\boldsymbol{\theta})$ integrates to 1 over $\mathbb{R}^{m}$,  we may apply the Dominated Convergence Theorem (DCT) to conclude that:
\begin{equation*}
\lim_{n\to\infty}Z(t_{n},\mathbf{x}_{i})  = \lim_{n\to\infty}\int_{\mathbb{R}^{m}} h_{n}(\boldsymbol{\theta})\mathrm{d}\boldsymbol{\theta}  \stackrel{\text{DCT}}{=}\int_{\mathbb{R}^{m}} h(\boldsymbol{\theta})\mathrm{d}\boldsymbol{\theta}=Z(t_{0},\mathbf{x}_{i}).
\end{equation*}
Since $t_{0}$ was arbitrary in $[0,y_{i}]$, $Z$ is continuous everywhere on that interval.

\paragraph{Step 2: Strict positivity of $Z(t,\mathbf{x}_{i})$ on $[0,y_{i}]$.}
For each fixed $t\in[0,y_{i}]$, since $\sigma(g(t,\mathbf{x}_{i};\boldsymbol{\theta}))>0$ and $p_{\boldsymbol{\theta}}(\boldsymbol{\theta})>0$ for all $\boldsymbol{\theta}\in\mathbb{R}^{m}$, we have:
\begin{equation*}
Z(t,\mathbf{x}_{i})  = \int_{\mathbb{R}^{m}} \sigma(g(t,\mathbf{x}_{i};\boldsymbol{\theta}))p_{\boldsymbol{\theta}}(\boldsymbol{\theta})\mathrm{d}\boldsymbol{\theta} > 0.
\end{equation*}
Since  $Z(t,\mathbf{x}_{i})$  is a continuous and strictly positive function on the compact interval $[0,y_{i}]$, the Weierstrass Extreme Value Theorem ensures that $Z$ attains a minimum on this interval. Define:
\begin{equation*}
\label{eq-definition-m-Z}
 z^{*} = \min_{t\in[0,y_{i}]} Z(t,\mathbf{x}_{i}) >0
\end{equation*}

\paragraph{Step 3: Integrability of $\lambda_0(t,\mathbf{x}_i;\phi) $.}
Note that for all $t\in[0,y_{i}]$, we have

\begin{equation*}
\lambda_{0}(t,\mathbf{x}_{i};\phi)  = \frac{\lambda_{0}(t;\phi)}{Z(t,\mathbf{x}_{i})} \leq  \frac{\lambda_{0}(t;\phi)}{z^{*}}.
\end{equation*}
It is straightforward to verify that $\lambda_0(t;\phi) $ is integrable on $[0,y_{i}]$, therefore it follows that

\begin{equation*}
\int^{y_{i}}_{0}\lambda_{0}(t,\mathbf{x}_{i};\phi)\mathrm{d}t  \leq \frac{1}{z^{*}}\int^{y_{i}}_{0}\lambda_{0}(t;\phi) \mathrm{d}t < \infty.
\end{equation*}
This completes the proof. 
\end{proof}
Our next result verifies a condition needed for applying Campbell’s Theorem in the proof of Theorem~\ref{proposition-augmented-likelihood}. To establish this, we will use the following Pólya–Gamma identity:
\begin{equation}
\label{eq-expectation-pg-14}
\mathbb{E}_{\omega \sim p_{\text{PG}}(\omega\vert 1,0)}[\omega] = \frac{1}{4},
\end{equation}
which follows by taking the limit $c \to 0$ in equation~\eqref{eq-expectation-pg}. Alternatively, to prove~\eqref{eq-expectation-pg-14}, one can start from the representation in equation~\eqref{eq:polya_gamma_density_infinite_sum}, apply Tonelli’s theorem to interchange expectation and infinite summation, and then invoke the series identity
\begin{equation*}
    \sum_{k=1}^{\infty}\frac{1}{(k-\frac{1}{2})^{2}} = \frac{\pi^{2}}{2}.
\end{equation*}
We are now ready to present our next result.
\begin{lemma}
\label{lemma-sum-convergent-sum-campbell}
Assume that for each $i=1,\ldots,N$  the function $g(\cdot,\mathbf{x}_{i};\cdot)\in C([0,y_{i}]\times\mathbb{R}^{m})$. Then, with probability 1 the sum
\begin{equation*}
\label{eq-sum-absolutely-convergent-statement}
H(\Psi_{i}) = \sum_{(t,\omega)_{j}\in \Psi_{i}}^{} f(\omega_{j},-g(t_{j},\mathbf{x}_{i};\boldsymbol{\theta}))
\end{equation*}
is absolutely convergent for every $i=1,\ldots,N$. 
\end{lemma}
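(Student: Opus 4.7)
The plan is to apply the absolute-convergence criterion of Campbell's Theorem (Theorem~\ref{thm:campbell_theorem}), i.e.\ to show that
\begin{equation*}
\int_{0}^{y_i}\int_{0}^{\infty}\min\bigl(\lvert f(\omega,-g(t,\mathbf{x}_i;\boldsymbol{\theta}))\rvert,\,1\bigr)\,\lambda_{i}(t,\omega;\phi)\,\mathrm{d}\omega\,\mathrm{d}t < \infty
\end{equation*}
for each $i$. Since $\min(a,1)\le a$ for $a\ge 0$, it suffices to establish the stronger bound
\begin{equation*}
\int_{0}^{y_i}\int_{0}^{\infty}\lvert f(\omega,-g(t,\mathbf{x}_i;\boldsymbol{\theta}))\rvert\,\lambda_{0}(t,\mathbf{x}_i;\phi)\,p_{\text{PG}}(\omega\mid 1,0)\,\mathrm{d}\omega\,\mathrm{d}t < \infty,
\end{equation*}
after substituting the factorized form of $\lambda_i$ from~\eqref{eq:intensity_function_marked_pp}.

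The next step is to control the integrand. By the triangle inequality applied to the definition of $f$ in~\eqref{eq-f-definition},
\begin{equation*}
\lvert f(\omega,-g(t,\mathbf{x}_i;\boldsymbol{\theta}))\rvert \;\le\; \tfrac{1}{2}\lvert g(t,\mathbf{x}_i;\boldsymbol{\theta})\rvert + \tfrac{1}{2}g(t,\mathbf{x}_i;\boldsymbol{\theta})^{2}\,\omega + \log 2.
\end{equation*}
Since, by hypothesis, $g(\cdot,\mathbf{x}_i;\cdot)\in C([0,y_i]\times\mathbb{R}^m)$, and $\boldsymbol{\theta}$ is held fixed throughout, the map $t\mapsto g(t,\mathbf{x}_i;\boldsymbol{\theta})$ is continuous on the compact interval $[0,y_i]$. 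By the Weierstrass Extreme Value Theorem there exists a finite constant $M_i=M_i(\mathbf{x}_i,\boldsymbol{\theta})$ such that $\sup_{t\in[0,y_i]}\lvert g(t,\mathbf{x}_i;\boldsymbol{\theta})\rvert \le M_i$. Combining this with the Pólya--Gamma identity $\mathbb{E}_{\omega\sim p_{\text{PG}}(\cdot\mid 1,0)}[\omega]=\tfrac{1}{4}$ (which is obtained as the limit $c\to 0$ in~\eqref{eq-expectation-pg}) yields the uniform bound
\begin{equation*}
\int_{0}^{\infty}\lvert f(\omega,-g(t,\mathbf{x}_i;\boldsymbol{\theta}))\rvert\,p_{\text{PG}}(\omega\mid 1,0)\,\mathrm{d}\omega \;\le\; \tfrac{M_i}{2} + \tfrac{M_i^{2}}{8} + \log 2 \;=:\; K_i,
\end{equation*}
for every $t\in[0,y_i]$, by Tonelli's theorem.

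Putting the pieces together, the original double integral is bounded above by
\begin{equation*}
K_i \int_{0}^{y_i}\lambda_{0}(t,\mathbf{x}_i;\phi)\,\mathrm{d}t,
\end{equation*}
which is finite by Lemma~\ref{lemma-lambda0-integrable}. Applying this argument for each $i=1,\ldots,N$ and invoking Campbell's Theorem gives a.s. absolute convergence of $H(\Psi_i)$ for all $i$, completing the proof. The only delicate point, and the reason the continuity hypothesis on $g$ is imposed, is step three: without that assumption we could not guarantee a uniform bound on $\lvert g\rvert$ over $[0,y_i]$, and the quadratic-in-$g$ term would prevent the $\omega$-integral from being controlled by a constant. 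With continuity on the compact domain, however, this obstacle dissolves and the rest of the argument is a straightforward reduction to the already-established integrability of the baseline hazard.
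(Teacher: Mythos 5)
Your proposal is correct and follows essentially the same route as the paper's proof: invoke the integrability criterion in Campbell's Theorem, bound $\min(|f|,1)$ by $|f|$, use the triangle inequality on the definition of $f$, exploit compactness of $[0,y_i]$ and continuity of $g$ to bound the $g$ terms, use $\mathbb{E}_{p_{\mathrm{PG}}(\cdot\mid 1,0)}[\omega]=\tfrac14$ for the $\omega$-integral, and finish via Lemma~\ref{lemma-lambda0-integrable}. The only cosmetic difference is that you combine everything into a single constant $K_i$ whereas the paper splits the bound into three separate integrals $\mathcal{I}_1,\mathcal{I}_2,\mathcal{I}_3$ before summing.
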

\begin{proof}
Fix an arbitrary index $i\in\{1,\dots,N\}$. Recall the definition of $f(\omega, z)$ from~\eqref{eq-f-definition}. From Theorem \ref{thm:campbell_theorem}, it suffices to show
\begin{equation}
\label{eq-campbell-integral-test}
\int_{0}^{y_{i}}\int_{0}^{\infty}\min(\vert f(\omega,-g(t,\mathbf{x}_{i};\boldsymbol{\theta}))\vert ,1 ) \lambda_{i}(t,\omega;\phi)\mathrm{d}\omega\mathrm{d}t<  \infty.
\end{equation}
Since $\omega\in\mathbb{R}_{+}$, then it follows from the triangle inequality that
\begin{equation*}
\label{eq-bound-min}
\begin{aligned}
\min(\vert f(\omega,-g(t,\mathbf{x}_{i};\boldsymbol{\theta}))\vert ,1 ) &\leq \vert f(\omega,-g(t,\mathbf{x}_{i};\boldsymbol{\theta})) \vert \\ 
&\leq \frac{\vert g(t,\mathbf{x}_{i};\boldsymbol{\theta}) \vert}{2} +  \frac{ g(t,\mathbf{x}_{i};\boldsymbol{\theta})^{2}}{2} \omega + \log(2) .
\end{aligned}
\end{equation*}
Hence it remains to prove finiteness of three integrals:
\begin{equation*}
\begin{aligned}
\mathcal{I}_{1}&:= \int_{0}^{y_{i}}\int_{0}^{\infty}\frac{\vert g(t,\mathbf{x}_{i};\boldsymbol{\theta}) \vert}{2}\lambda_{i}(t,\omega;\phi)\mathrm{d}t\mathrm{d}\omega, \\ 
\mathcal{I}_{2}&:= \int_{0}^{y_{i}}\int_{0}^{\infty}\frac{ g(t,\mathbf{x}_{i};\boldsymbol{\theta})^{2}}{2}\omega \lambda_{i}(t,\omega;\phi)\mathrm{d}\omega\mathrm{d}t, \\
\mathcal{I}_{3}&:=\log(2) \int_{0}^{y_{i}}\int_{0}^{\infty} \lambda_{i}(t,\omega;\phi)\mathrm{d}\omega\mathrm{d}t.
\end{aligned}
\end{equation*}
\paragraph{ $\mathcal{I}_{1}$ is finite.} Since $g(t,\mathbf{x}_{i};\boldsymbol{\theta})$ is continuous on the compact interval $[0,y_{i}]$, it is bounded by some $M>0$. Then,
\begin{equation*}
\label{eq-bounded-integral-intensity-absolute}
 \mathcal{I}_{1}= \left(\int_{0}^{\infty}p_{\text{PG}}(\omega\vert 1,0)\mathrm{d}\omega\right)\int_{0}^{y_{i}}\frac{\vert g(t,\mathbf{x}_{i};\boldsymbol{\theta}) \vert}{2}\lambda_{0}(t,\mathbf{x}_{i};\phi)\mathrm{d}t  \leq M  \int_{0}^{y_{i}}\lambda_{0}(t,\mathbf{x}_{i};\phi)\mathrm{d}t<\infty,
\end{equation*}
where the last inequality is Lemma~\ref{lemma-lambda0-integrable}. 
\paragraph{ $\mathcal{I}_{2}$ is finite.}
Likewise $g(t,\mathbf{x}_{i};\boldsymbol{\theta})^{2}$ is bounded by some $C>0$ over $[0,y_{i}]$ and $\mathbb{E}_{\omega \sim p_{\text{PG}}(\omega\vert 1,0)}[\omega]=\frac{1}{4}$ (see ~\eqref{eq-expectation-pg-14}), so
\begin{equation*}
\label{eq-bounded-integral-intensity-squared}
\mathcal{I}_{2}= 
\left(\mathbb{E}_{\omega \sim p_{\text{PG}}(\omega\vert 1,0)}[\omega]\right) \int_{0}^{y_{i}}\frac{ g(t,\mathbf{x}_{i};\boldsymbol{\theta})^{2}}{2} \lambda_{0}(t,\mathbf{x}_{i};\phi)\mathrm{d}t \leq  \frac{C}{8} \left(\int_{0}^{y_{i}}\lambda_{0} (t,\mathbf{x}_{i};\phi)\mathrm{d}t\right)< \infty,
\end{equation*}
where the last inequality is Lemma~\ref{lemma-lambda0-integrable}.
\paragraph{ $\mathcal{I}_{3}$ is finite.}Finally,
\begin{equation*}
\mathcal{I}_{3}= \log(2) \int_{0}^{y_{i}} \lambda_{0} (t,\mathbf{x}_{i};\phi)\mathrm{d}t < \infty,
\end{equation*}
again by Lemma \ref{lemma-lambda0-integrable}.

Since $\mathcal{I}_{1},\mathcal{I}_{2},\mathcal{I}_{3}$, are all finite, the condition in~\eqref{eq-campbell-integral-test} is satisfied and the sum $H(\Psi_{i})$ converges absolutely with probability 1. 
\end{proof}
The next result presents an integral identity which is key to proving the data augmentation scheme of Theorem~\ref{proposition-augmented-likelihood}. 
\begin{lemma}
\label{lemma-integral-convergent-campbell}
Assume that for each $i=1,\ldots,N$  the function $g(\cdot,\mathbf{x}_{i};\cdot)\in C([0,y_{i}]\times\mathbb{R}^{m})$. Then the double integral
\begin{equation*}
\int_{0}^{y_i} \int_{0}^{\infty}  \left(1-  e^{f(\omega,-g(t,\mathbf{x}_{i};\boldsymbol{\theta}))}\right)p_{\text{PG}}(\omega\vert 1,0)
\lambda_{0}(t,\mathbf{x}_{i};\phi) \mathrm{d}\omega \mathrm{d}t
\end{equation*}
converges, and in fact
\begin{multline}
\label{eq-equality-integral-sigmoid}
\int_{0}^{y_i} \int_{0}^{\infty}  \left(1-  e^{f(\omega,-g(t,\mathbf{x}_{i};\boldsymbol{\theta}))}\right)p_{\text{PG}}(\omega\vert 1,0)
\lambda_{0}(t,\mathbf{x}_{i};\phi) \mathrm{d}\omega \mathrm{d}t  =  \\ \int_0^{y_i}\lambda_{0}(t,\mathbf{x}_{i};\phi) \: \sigma(g(t, \mathbf{x}_i;\boldsymbol{\theta})) \mathrm{d}t
\end{multline}
for every $i=1,\ldots,N$.
\end{lemma}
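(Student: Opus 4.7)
The plan is to reduce the lemma to a direct application of the sigmoid–Pólya–Gamma identity in~\eqref{eq-sigmoid-identity}, after verifying enough integrability to justify an exchange of integration and a splitting of the integrand. Fix $i \in \{1,\ldots,N\}$. First, I would observe that the integrand on the left-hand side of~\eqref{eq-equality-integral-sigmoid} is bounded in absolute value by
\begin{equation*}
\bigl(1 + e^{f(\omega,-g(t,\mathbf{x}_{i};\boldsymbol{\theta}))}\bigr) p_{\text{PG}}(\omega\mid 1,0)\,\lambda_{0}(t,\mathbf{x}_{i};\phi).
\end{equation*}
Integrating this dominating function in $\omega$ first yields $(1+\sigma(-g(t,\mathbf{x}_{i};\boldsymbol{\theta})))\lambda_{0}(t,\mathbf{x}_{i};\phi) \leq 2\,\lambda_{0}(t,\mathbf{x}_{i};\phi)$, and by Lemma~\ref{lemma-lambda0-integrable} the $t$-integral of the latter is finite. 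This gives absolute convergence and legitimises splitting the integrand into a sum of two pieces by Fubini/Tonelli.

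Having split the double integral, I would evaluate the two pieces separately. The first piece is $\int_{0}^{y_i}\lambda_{0}(t,\mathbf{x}_{i};\phi)\left(\int_{0}^{\infty} p_{\text{PG}}(\omega\mid 1,0)\,\mathrm{d}\omega\right)\mathrm{d}t$, which equals $\int_{0}^{y_i}\lambda_{0}(t,\mathbf{x}_{i};\phi)\,\mathrm{d}t$ since $p_{\text{PG}}(\cdot\mid 1,0)$ is a probability density. For the second piece, I apply the identity~\eqref{eq-sigmoid-identity} with $z = -g(t,\mathbf{x}_{i};\boldsymbol{\theta})$ to obtain
\begin{equation*}
\int_{0}^{\infty} e^{f(\omega,-g(t,\mathbf{x}_{i};\boldsymbol{\theta}))} p_{\text{PG}}(\omega\mid 1,0)\,\mathrm{d}\omega = \sigma(-g(t,\mathbf{x}_{i};\boldsymbol{\theta})) = 1 - \sigma(g(t,\mathbf{x}_{i};\boldsymbol{\theta})),
\end{equation*}
where the last equality is the standard sigmoid identity $1-\sigma(z)=\sigma(-z)$.

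Subtracting the second piece from the first and using the telescoping $1 - (1 - \sigma(g(t,\mathbf{x}_{i};\boldsymbol{\theta}))) = \sigma(g(t,\mathbf{x}_{i};\boldsymbol{\theta}))$ yields the right-hand side of~\eqref{eq-equality-integral-sigmoid} exactly. The main (mild) obstacle is the Fubini justification: I have to be careful that the dominating bound I write down really is integrable against the product measure $p_{\text{PG}}(\omega\mid 1,0)\,\mathrm{d}\omega\otimes\lambda_{0}(t,\mathbf{x}_{i};\phi)\,\mathrm{d}t$. This is where Lemma~\ref{lemma-lambda0-integrable} does essentially all of the analytic work; the remainder is elementary.
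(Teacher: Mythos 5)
Your proof is correct and uses exactly the same ingredients as the paper's: Lemma~\ref{lemma-lambda0-integrable} for integrability of $\lambda_0(\cdot,\mathbf{x}_i;\phi)$, the Pólya–Gamma identity~\eqref{eq-sigmoid-identity}, and the reflection $\sigma(z)=1-\sigma(-z)$. The only difference is organizational: you dominate the left-hand integrand and invoke Fubini to split it, whereas the paper bounds the right-hand side (using $\sigma<1$) and then applies the identity; your version is marginally more explicit about the Fubini justification, but the approach is essentially identical.
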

\begin{proof}
Fix an arbitrary index $i\in\{1,\dots,N\}$. By Lemma \ref{lemma-lambda0-integrable}
\begin{equation*}
\int_{0}^{y_{i}}\lambda_{0}(t,\mathbf{x}_{i};\phi)\mathrm{d}t < \infty.
\end{equation*}
Since $0< \sigma(\cdot)< 1$, we have
\begin{equation*}
0\leq \lambda_{0}(t,\mathbf{x}_{i};\phi) \sigma(g(t, \mathbf{x}_i;\boldsymbol{\theta}))  < \lambda_{0}(t,\mathbf{x}_{i};\phi)
\end{equation*}
and therefore
\begin{equation}
\label{eq-integral-lambda-sigmoid-finite}
\int_0^{y_i} \lambda_{0}(t,\mathbf{x}_{i};\phi) \sigma(g(t, \mathbf{x}_i;\boldsymbol{\theta}))\mathrm{d}t < \infty.
\end{equation}
This shows the finiteness of the right-hand side of~\eqref{eq-equality-integral-sigmoid}.
By combining $\sigma(z) = 1-\sigma(-z)$ with~\eqref{eq-sigmoid-identity} we obtain that
\begin{multline}
\label{eq-integral-identity-lambda-sigmoid}
\int_0^{y_i} \lambda_{0}(t,\mathbf{x}_{i};\phi) \: \sigma(g(t, \mathbf{x}_i;\boldsymbol{\theta})) \mathrm{d}t = \\  \int_{0}^{y_i} \int_{0}^{\infty}  \left(1-  e^{f(\omega,-g(t,\mathbf{x}_{i};\boldsymbol{\theta}))}\right)p_{\text{PG}}(\omega\vert 1,0)
\lambda_{0}(t,\mathbf{x}_{i};\phi) \mathrm{d}\omega \mathrm{d}t    .
\end{multline}
Putting together the finiteness from~\eqref{eq-integral-lambda-sigmoid-finite} with the equality of~\eqref{eq-integral-identity-lambda-sigmoid} completes the proof.
\end{proof}
We are now ready to prove Theorem~\ref{proposition-augmented-likelihood}.
\begin{proof}[Proof of Theorem~\ref{proposition-augmented-likelihood}]
Fix an arbitrary index $i\in\{1,\dots,N\}$. The joint expectation factors into two independent pieces:
\begin{enumerate}
    \item \textbf{Expectation over $\omega_{i}$}: This term recovers $\lambda_{0}(y_{i},\mathbf{x}_{i};\phi)^{\delta_{i}}\sigma(g(y_i, \mathbf{x}_i;\boldsymbol{\theta}))^{\delta_i}$;
    \item \textbf{Expectation over $\Psi_{i}$}: This term recovers $\exp\left(-\int_{0}^{y_{i}} \lambda_{0}(t,\mathbf{x}_{i};\phi) \sigma(g(t,\mathbf{x}_{i};\boldsymbol{\theta}))\mathrm{d}t\right)$.
\end{enumerate}

\paragraph{Step (1): Expectation over $\omega_{i}$.}
Since $\delta_{i}\in\{0,1\}$,
\begin{equation*}
\left(e^{f\left( \omega_i, g(y_i, \mathbf{x}_i;\boldsymbol{\theta}) \right)}\right)^{\delta_i}  = \begin{cases}
    e^{f\left( \omega_i, g(y_i, \mathbf{x}_i;\boldsymbol{\theta}) \right)} ,\quad &\delta_{i}=1,\\
    1,\quad &\delta_{i}=0.
\end{cases}
\end{equation*}
Hence,
\begin{equation*}
\label{eq-delta-integral-lambda}
\int_{0}^{\infty}  \left( e^{f\left( \omega_i, g(y_i, \mathbf{x}_i;\boldsymbol{\theta}) \right)} \right)^{\delta_i} p_{\text{PG}}(\omega_{i}\vert 1,0)\mathrm{d}\omega_{i}= \left(\int_0^{\infty}  e^{f\left( \omega_i, g(y_i, \mathbf{x}_i;\boldsymbol{\theta}) \right)}  p_{\text{PG}}(\omega_i\vert 1,0) \mathrm{d}\omega_i \right)^{\delta_i}.
\end{equation*}

By the Pólya–Gamma identity (Eq.~\eqref{eq-sigmoid-identity}), the bracketed integral equals $\sigma(g(y_i, \mathbf{x}_i;\boldsymbol{\theta}))$. Multiplying by $\lambda_{0}(y_{i},\mathbf{x}_{i};\phi)^{\delta_{i}}$ gives exactly
\begin{equation*}
\lambda_{0}(y_{i},\mathbf{x}_{i};\phi)^{\delta_{i}}\sigma(g(y_i, \mathbf{x}_i;\boldsymbol{\theta}))^{\delta_i}.
\end{equation*}
\paragraph{Step (2): Expectation over $\Psi_{i}$.}
By Lemma \ref{lemma-sum-convergent-sum-campbell} the random sum
\begin{equation*}
H(\Psi_{i}) = \sum_{(t,\omega)_{j}\in\Psi_{i}} f(\omega_{j},-g(t_{j},\mathbf{x}_{i};\boldsymbol{\theta}))
\end{equation*}
is absolutely convergent with probability 1, and by Lemma \ref{lemma-integral-convergent-campbell} the corresponding integral converges. Therefore, we may apply Campbell's Theorem (Theorem~\ref{thm:campbell_theorem}) together with the PG-sigmoid identity from~\eqref{eq-equality-integral-sigmoid} to conclude
\begin{equation*}
\mathbb{E}_{\Psi_{i}\sim\mathbb{P}_{\Psi_{i}\vert \phi}}\left[\prod_{(t,\omega)_{j}\in\Psi_{i}} e^{f(\omega_{j},-g(t_{j},\mathbf{x}_{i};\boldsymbol{\theta}))}\right]
=\exp\left(-\int_{0}^{y_{i}} \lambda_{0}(t,\mathbf{x}_{i};\phi) \sigma(g(t,\mathbf{x}_{i};\boldsymbol{\theta}))\mathrm{d}t\right).
\end{equation*}
Putting Steps (1) and (2) together reproduces precisely the two factors of the original likelihood $p(y_{i},\delta_{i}\vert \mathbf{x}_{i},\phi,g)$. This completes the proof.
\end{proof}

\end{document}